\newcommand{\hb}{\bm{\hat \beta}}
\newcommand{\ld}{\dot{\ell}}
\newcommand{\ldd}{\ddot{\ell}}
\newcommand{\bld}{\dot{\bm{\ell}}}
\newcommand{\bldd}{\ddot{\bm{\ell}}}
\newcommand{\brdd}{\ddot{\bm{r}}}
\newcommand{\Ldd}{\ddot{\bm{L}}}
\newcommand{\Rdd}{\ddot{\bm{R}}}
\newcommand{\bLdd}{\bar{\ddot{\bm{L}}}}
\newcommand{\bRdd}{\bar{\ddot{\bm{R}}}}
\newcommand{\rdd}{\ddot{r}}
\newcommand{\lddd}{\dddot{\ell}}
\newcommand{\diag}{\bm{{\rm diag}}}
\newcommand{\hbx}[1]{\bm{ \hat \beta}_{\backslash #1}}
\newcommand{\hbm}{\bm{ \hat \beta}_{\backslash \mathcal{M} }}
\newcommand{\hbi}{\bm{ \hat \beta}_{\backslash i }}
\newcommand{\bbf}{\bm{f}}
\newcommand{\ba}{\bm{a}}
\newcommand{\bx}{\bm{x}}
\newcommand{\bw}{\bm{w}}
\newcommand{\bu}{\bm{u}}
\newcommand{\bv}{\bm{v}}
\newcommand{\bvm}{\bm{v}_{\cM}}
\newcommand{\bD}{\bm{D}}
\newcommand{\bA}{\bm{A}}
\newcommand{\bX}{\bm{X}}
\newcommand{\bXm}{\bm{X}_{\backslash \cM}}
\newcommand{\bbXm}{\bar{\bm{X}}_{\backslash \cM}}
\newcommand{\bGamma}{\bm{\Gamma}}
\newcommand{\polylog}{{\rm polylog}}
\newcommand{\bbeta}{\bm{\beta}}
\newcommand{\ev}{\bm e}
\newcommand{\tbm}{\tilde{\bm \beta}_{ \backslash \mathcal{M}}}
\newcommand{\argmin}{\arg\min}
\newcommand{\RR}{\mathbb{R}}
\newcommand{\PP}{\mathbb{P}}
\newcommand{\EE}{\mathbb{E}}
\newcommand{\II}{\mathbb{I}}
\newcommand{\cA}{\mathcal{A}}
\newcommand{\tA}{\tilde{A}}
\newcommand{\cD}{\mathcal{D}}
\newcommand{\cM}{\mathcal{M}}
\newcommand{\cDm}{\mathcal{D}_{\backslash \cM}}
\newcommand{\cT}{\mathcal{T}}
\newcommand{\cX}{\mathcal{X}}
\newcommand{\calM}{\cM}
\newcommand{\bb}{\bm{b}}
\newcommand{\bxi}{\bm{\xi}}
\newcommand{\bG}{\bm{G}}
\newcommand{\bGm}{\bm{G}_{\backslash\cM}}
\newcommand{\bbG}{\bar{\bm{G}}}
\newcommand{\bbGm}{\bar{\bm{G}}_{\backslash\cM}}
\newcommand{\bM}{\bm{M}}
\newcommand{\bI}{\bm{I}}
\newcommand{\bZ}{\bm{Z}}
\newcommand{\bSigma}{\bm{\Sigma}}
\newcommand{\cN}{\mathcal{N}}
\newcommand{\cB}{\mathcal{B}}
\newcommand{\bzero}{\bm{0}}
\newcommand{\eps}{\epsilon}
\newcommand\independent{\protect\mathpalette{\protect\independenT}{\perp}}
\def\independenT#1#2{\mathrel{\rlap{$#1#2$}\mkern2mu{#1#2}}}
\newtheorem{theorem}{Theorem}[section]
\newtheorem{lemma}[theorem]{Lemma}
\newtheorem{assumptionA}{Assumption}
\newtheorem{assumptionB}{Assumption}
\newtheorem{definition}[theorem]{Definition}
\newtheorem{remark}{Remark}[theorem]
\newtheorem{example}{Example}[section]
\newcommand\numberthis{\addtocounter{equation}{1}\tag{\theequation}}
\begin{document}

\begin{frontmatter}
\title{Certified Data Removal Under High-dimensional Settings}
\author{Haolin Zou, Arnab Auddy, Yongchan Kwon, \\ Kamiar Rahnama Rad, Arian Maleki}

\runtitle{Certified Data Removal Under High-dimensional Settings}
\runauthor{Zou et al.}

\date{\today}

%\maketitle
\begin{abstract}
Machine unlearning focuses on the computationally efficient removal of specific training data from trained models, ensuring that the influence of forgotten data is effectively eliminated without the need for full retraining. Despite advances in low-dimensional settings, where the number of parameters \( p \) is much smaller than the sample size \( n \), extending similar theoretical guarantees to high-dimensional regimes remains challenging. %When both \( n, p \to \infty \) with a fixed ratio \( n/p \), significant theoretical and computational obstacles arise due to the complex interplay between model complexity and limited sample information.  

%We introduce refined definitions of certifiability and accuracy for machine unlearning methods, tailored to a class of high-dimensional learning problems known as regularized empirical risk minimization (R-ERM). 

We propose an unlearning algorithm that starts from the original model parameters and performs a theory-guided sequence of Newton steps \( T  \in \{ 1,2\}\). After this update, carefully scaled isotropic Laplacian noise is added to the estimate to ensure that any (potential) residual influence of forget data is completely removed. 

We show that when both \( n, p \to \infty \) with a fixed ratio \( n/p \), significant theoretical and computational obstacles arise due to the interplay between the complexity of the model and the finite signal-to-noise ratio. Finally, we show that, unlike in low-dimensional settings, a single Newton step is insufficient for effective unlearning in high-dimensional problems---however, two steps are enough to achieve the desired certifiebility. We provide numerical experiments to support the certifiability and accuracy claims of this approach.

\end{abstract}

\end{frontmatter}

\section{Introduction}
Many real-world machine learning systems, including healthcare diagnostic tools and models like ChatGPT and DALL-E, rely on diverse user and entity data during training. If a user requests their data to be removed, it is reasonable to expect the responsible companies or entities to not only delete the data from their datasets but also eliminate its trace from the trained models. This process requires frequent and costly retraining of models. To address this challenge, the field of machine unlearning has emerged, focusing on efficient and less computationally intensive methods to remove dataset traces from models.

This field has made significant progress over the past few years \cite{cao2015towards, bourtoule2021machine, nguyen2022survey, chundawat2023zero, tarun2023fast, gupta2021adaptive, chen2021machine}. Substantial empirical research, coupled with rigorous theoretical results, have established a strong foundation for this area.

As we will clarify in Section \ref{ssec:detailedCOMP}, existing theoretical results in the field of machine unlearning usually have an implicit focus on low-dimensional settings, where the number of model parameters \( p \) is much smaller than the number of observations \( n \), i.e., \( p \ll n \). However, in many real-world applications, the number of parameters is comparable to—or even exceeds—the number of observations. This discrepancy raises a fundamental and currently unresolved question in the field:

\begin{center}
\textbf{Are existing machine unlearning methods reliable in high-dimensional regimes as well?}
\end{center}
The goal of this paper is to answer the above question for the machine unlearning algorithms that are based on the Newton method. More specifically, the paper aims to make the following contributions:

\begin{enumerate}

\item
We study the performance of machine unlearning algorithms in proportional high-dimensional asymptotic (PHAS) settings, where both the number of parameters \( p \) and the number of observations \( n \) are large, and their ratio \( n/p \to\gamma_0 \in(0,\infty) \).
\item 
As we will clarify later, some of the notions introduced for evaluating the certifiability and accuracy of machine unlearning algorithms — such as \(\epsilon\)-certifiability from \cite{guo2019certified} and the excess risk considered in \cite{sekhari2021remember} — are not well-suited to high-dimensional settings. To address this limitation, we refine some of these existing notions and propose new metrics tailored to evaluating the certifiability and accuracy of machine unlearning algorithms in high-dimensional regimes. 
% \kamiar{ Is it about high dimensions, or is it worst-case analysis vs stochastic analysis?} \arnab{I think we can point out that it is about worst-case analysis vs stochastic analysis, and in high dimensions the difference so large that the worst-case analysis is not meaningful.} \haolin{I added a paragraph in Remark \ref{rmk: why adding phi} to tighten the discussion and justify why we add $\phi$ in the certifiability condition.}

\item  We consider the popular class of regularized empirical risk minimization (R-ERM), studied in previous works \cite{guo2019certified, sekhari2021remember, neel2021descent}, and analyze the performance of machine unlearning algorithms based on the Newton method under our high-dimensional setting. As a result, we show that:
\begin{enumerate}
\item Unlike the low-dimensional settings, machine unlearning algorithms based on a single Newton step (similar to those introduced in \cite{guo2019certified} and \cite{sekhari2021remember}) are not reliable, even when removing only one data point from the dataset.

\item In contrast, a machine unlearning algorithm yields a reliable estimate with two Newton iterations when $m=O(1)$. Furthermore, we quantify the minimum number of steps needed when $m$ grows with $n$, provided that $m=o(n^{\frac13})$.
\end{enumerate}

\end{enumerate}

\section{Our framework}\label{sec: framework}

\subsection{General framework of approximate machine unlearning}
\label{ssec: framework}

In this paper, we consider a \textit{generalized linear model} with an i.i.d. sample 
$\cD = \{ (y_1, \bx_1),$ $(y_2, \bx_2), \ldots, (y_n, \bx_n)\}\in \RR^{n\times (p+1)}$ as an independent and identically distributed (i.i.d.) sample from  some joint distribution
\[
    (y_i,\bx_i)\sim q(y_i | \bx_i^\top \bbeta^*) p(\bx_i),
\]
where $\bbeta^*\in\RR^p$ is the parameter of interest. An estimator of $\bbeta^*\in\Theta$, denoted as $\hb$, may be obtained from a \textit{learning algorithm} $A: \RR^{n\times (p+1)}\to\Theta$. To formalize the idea of data removal problem, let $\cM\subset\{1,2,...,n\}$ be the subset of data indices to be removed, let $\cD_{\cM}:=\{(y_i,\bx_i): i\in\cM\}$ be the corresponding subset of $\cD$, and let $\cDm:=\cD\backslash\cD_{\cM}$. In order to remove $\cD_{\cM}$ and all traces of it from the model, we essentially need to obtain $\hbm = A(\cDm)$ \footnote{Technically, $A$ refers to a sequence of functions $\{A_{n,p}:\RR^{n\times (p+1)}\to\Theta |\; n,p\in\mathbb{N}_+\}$ so $A(\cD) = A_{n,p}(\cD)$ and $A(\cDm) = A_{n-m,p}(\cDm)$ where $m:=|\cM|$, as the two functions are defined on different spaces thus cannot be identical. But we drop the subscripts for notational brevity.}, which is also called `exact machine unlearning'. However, in many applications, it is considered impractical to exactly compute $A(\cDm)$, so \textbf{approximate} removal methods are more desirable. Such methods calculate an efficient approximation $\tbm$ of $A(\cDm)$.

To evaluate $\tbm$, inspired by \cite{guo2019certified, dwork2006differential}, we consider the following two principles that a 'good' machine unlearning algorithm should satisfy:

\begin{enumerate}
    \item[P1.] [Certifiability] No information about the data points in \(\mathcal{M}\) should be recoverable from the unlearning algorithm, as the users have requested their data to be entirely removed.

    \item[P2.] [Accuracy] \(\tbm\) should be ``close" to \(\hbm\) in terms of down stream tasks such as out-of-sample prediction.
\end{enumerate}

The aforementioned principles serve as a conceptual framework for machine unlearning algorithms. However, to enable systematic evaluation and facilitate objective comparisons, it is essential that these principles be formalized into explicit, quantitative criteria and metrics.

Let us begin with the certifiability principle. Since \(\tbm\) is only an approximation of \(\hbm\), it is likely to inherently retain some information about $\cD_{\cM}$. To hide such residual information, random noise must be introduced into the estimate.\footnote{For more information about this claim, the reader can refer to the literature of differential privacy \cite{dwork2006differential}. } For example, independent and confidential noise can be directly added to \(\tbm\). Consequently, in studying the machine unlearning problem, we will focus on \textbf{randomized} approximations. We use the notation $\bb$ for a random perturbation used to obtain such randomized approximation. In addition, inspired by \cite{sekhari2021remember}, we also allow the system to store and use a summary statistic $T(\cD)$ for the unlearning algorithm, so it can be formalized as:
\[
    \tbm^{R} = \tA(\cD_{\cM},A(\cD), T(\cD),\bb).
\]

\noindent where the supersript $R$ in $\tbm^{R}$ stands for `randomized'. Inspired by the existing literature including \cite{guo2019certified,sekhari2021remember}, we propose the following criterion for the certifiability principle:
\begin{definition}[ $(\phi,\epsilon)$- Probabilistically certified approximate data removal (PAR)]
\label{def:epscert}
For $\phi,\eps>0$, a randomized unlearning algorithm $\tA$ is called a $(\phi,\epsilon)$-probabistically certified approximate data removal (PAR) algorithm, if and only if $\exists \mathcal{X}\subset \RR^{n\times (p+1)}$ with $\PP(\cD\in \mathcal{X})\geq 1-\phi$, such that $\forall \cD\in \mathcal{X}$ and $\forall \cM\subset [n]$ with $|\cM|\leq m$, $\forall$ measurable $\cT \subset \Theta$, we have

\begin{equation}\label{eq:epsilon_cert}
{\rm e}^{-\epsilon}<\frac{\mathbb{P} \left(\left. \tA(\cD_{\cM},A(\cD), T(\cD),\bb)  \in \cT \right| \cD\right)}{\mathbb{P}( \tA(\emptyset,A(\cDm), T(\cDm),\bb) \in \cT|\cD)} \leq {\rm e}^{\epsilon}
\end{equation}
where $\bb$ is a random perturbation used in the randomized algorithm $\tilde{A}$.
\end{definition}

%\arnab{(might be useful to re-define $\bb$ here)}\haolin{Do you mean introducing Laplacian or direct perturbation here?}

There are a few points that we would like to clarify about this definition in the following remarks:

\begin{remark}\label{rmk: why adding phi}
Our criterion \ref{def:epscert} differs slightly from similar definitions in the literature, such as the ``$\epsilon$-Certified Removal'' in \cite{guo2019certified}, and the ``$(\epsilon,\delta)$-unlearning'' in \cite{sekhari2021remember}. A detailed comparison will be postponed to Section \ref{ssec: why_one_step_not_enough} and Section \ref{ssec:detailedCOMP}. The key motivation of such change is that the existing $\eps$ or $(\eps,\delta)$-certifiability guarantees require worst-case bounds on certain quantities (e.g. the gradient residual norm in Theorem 1 of \cite{guo2019certified}, and the global Lipschitzness constant of $\ell$ in Assumption 1 of \cite{sekhari2021remember}), which can be prohibitively large or even unbounded. Instead, under PHAS we obtain high-probability bounds for corresponding quantities (whose worst-case values are unbounded), e.g. in Lemma \ref{lem: direct_perturbation}. Following Lemma \ref{lem: direct_perturbation} there is a more detailed discussion over this claim.
\end{remark}

\begin{remark}
While the definition itself permits \(\phi\) to take any value, within our framework, we expect \(\phi\) to depend on $(m,n,p)$ and can be arbitrarily small for large enough $n,p$ and small enough $m$ (compared to $n$). This will be made more clear later in Theorem \ref{thm: main_epscert}.
\end{remark}

\begin{remark}
    The notation $\tA(\emptyset,A(\cDm), T(\cDm),\bb)$ in the denominator refers to the hypothetical outcome of the unlearning algorithm given the exact removal result $A(\cDm)$ and a null removal request. It should be interpreted as a \textbf{randomized version} of $A(\cDm)$, so that this criterion can be interpreted as the indistinguishability between the randomized unlearning algorithm and the randomized version of $A(\cDm)$ in distribution, which is similar as in \cite{guo2019certified}.\footnote{Technically we can make this more clear by defining a perturbation function $R(\bbeta,\bb)$ that returns a randomized version of $\bbeta$ using a random perturbation $\bb$, and define the non-randomized unlearning algorithm to be $\bar{A}(\cD_{\cM},A(\cD),T(\cD))$, and define $\tA:= \bar{A}\circ R$. If we assume $\bar{A}(\emptyset,\bbeta,T)\equiv \bbeta$, then we have $\tA(\emptyset,A(\cDm), T(\cDm),\bb) = R(A(\cDm))$, a randomized version of $A(\cDm)$. But for notational brevity, we do not introduce the notations above into the paper.}
\end{remark}
Intuitively, injecting a large amount of noise $\bb$ into the estimates can cover all residual information in $\tbm$ thus in favor of the certifiability principle and criterion \ref{def:epscert}, but it will harm the accuracy of it when used in downstream tasks such as prediction. To address this, we need a measure of accuracy:
\begin{definition}[Generalization Error Divengence (GED)]
\label{def: gen error div}
    Let $\ell(y|\bx^\top\bbeta)$ be a measure of error between $y$ and $\bx^\top\bbeta$, and let $(y_0|\bx_0)$ i.i.d. with the observations in $\cD$. Then the \textbf{Generalization Error Divengence (GED)}  of the learning and unlearning algorithms $A, \tA$ is defined as:
    \[
        {\rm GED}^{\epsilon}(A,\tA):=
        \EE\left(|\ell(y_0|\bx_0^\top A(\cDm)) - \ell(y_0|\bx_0^\top\tA(\cD_{\cM},A(\cD), T(\cD),\bb))|\big\vert\cD\right).
        \label{eq: def_err}\numberthis
    \]
\end{definition}

This metric measures the difference between the generalization error of the approximate  unlearing algorithm $\tA$ and the exact removal $A(\cDm)$, which naturally arises when we want to compare the prediction performance of $\tA$ on a new data point compared to exact removal $A(\cDm)$. Note that other notions have been introduced in the literature for measuring the accuracy of machine unlearning algorithms. For instance, \cite{sekhari2021remember} used the excess risk of $\tA$ against that of the true minimizer of population risk, but we will show in Section \ref{ssec:detailedCOMP} that such notions are not useful in the high dimensional settings.

% \arian{Our second criteria is a "random variable". How can we make it nicer? If we put expectations, can we analyze the expectations or it is very difficult? }

%\haolin{How about taking expectation over $\bb$ only? In this way we do not need a whole new set of Lemmas.}
%
%\arnab{(Yes, how about expectation over $\bb$, $\bx_0$ and $y_0$? Then we can have a result conditional on $\cD$, which hold with high probability. That could be consistent with our first criterion too.)}

\subsection{Regularized ERM and Proportional High-dimensional Asymptotic Settings}
\label{ssec: RERM and PHAS}

To estimate $\bbeta^*$ in the GLM model, researchers often use the following optimization problems known as regularized empirical risk minimization (R-ERM):\footnote{One simple extension of these ideas is to include more than one regularizer with mutiple regularization strengths $\lambda_1,\lambda_2,$ etc. For notational brevity we consider the simplest case, while generalization into multiple regularizers is straightforward. }
\vspace{-.3cm}
\begin{align*}
    &\hb = A(\cD) \triangleq  \underset{\bm{\beta} \in \RR^p}{\argmin}   \sum_{i=1}^n  \ell ( y_i|\bx_i^\top \bm{\beta} ) + \lambda r(\bm{\beta}) \label{eq:bl}\numberthis\\
    &\hbm = A(\cDm) \triangleq  \underset{\bm{\beta} \in \RR^p}{\argmin}   \sum_{j\notin \cM}  \ell ( y_j | \bm{x}_j^\top \bm{\beta} ) + \lambda r(\bm{\beta}) \label{eq:bli}\numberthis.
\end{align*}
In this optimization problem, $\ell( y|\bm{x}^\top \bm{\beta} )$ is called the loss function, which is typically set to $ - \log q(y| \bm{x}^\top \bm{\beta})$ when $q$ is known, and $r(\bbeta)$ is called the regularizer, which is usually a convex function minimized at $\bbeta=0$. It aims at reducing the variance of the estimate and therefore, the value of $\lambda \in [0,\infty)$ controls the amount of regularization. R-ERM is used in many classical and modern learning tasks, such as linear regression, matrix completion, poisson and multinomial regressions, classification, and robust principal components.

%Some widely used regularizers in high-dimensional problems include ridge, LASSO, elastic net, fused LASSO, group LASSO, and the nuclear norm. 

Again, the objective is to find an unlearning algorithm $\tA(\cD_{\cM},A(\cD), T(\cD),\bb)$ that is $(\phi,\epsilon)$-PAS (Definition \ref{def:epscert}) and has small ${\rm GED}^{\epsilon}$ (Definition \ref{def: gen error div}). 

As described before, we aim to study high-dimensional settings in which both (\(n\)) and (\(p\)) are large. Towards this goal, we use one of the most widely-adopted high-dimensional asymptotic frameworks, proportional high-dimensional asymptotic setting (PHAS).

The proportional high-dimensional asymptotic setting (PHAS) has provided valuable insights into the optimality and practical effectiveness of various estimators over the past decade \cite{MalekiThesis, DoMaMo09, MaMoCISS10, BaMo10, DoMaMoNSPT, BaMo11, mousavi2013asymptotic, el2013robust, donoho2016high, oymak2013squared, karoui2016can, AmLoMcTr13, KrMeSaSuZd12, KrMeSaSuZd12b, celentano2024correlation, miolane2021distribution, wang2022does, WangWengMaleki2020, li2021minimum, liang2022precise, dudeja2023universality, fan2022approximate, dobriban2018high, dobriban2019asymptotics, wainwright19HDS}.

\begin{definition}[Proportional High-dimensional Asymptotic Setting (PHAS)]\label{def:PHAS}
Assume that both  $n$ and $p$ grow to infinity, and that $n/p \rightarrow \gamma_0 \in (0,\infty)$. 
\end{definition}

While our theoretical goal is to derive finite-sample results applicable to any values of \(n\) and \(p\), PHAS (Definition \ref{def:PHAS}) will serve as a basis for simplifying and interpreting these results in high-dimensional settings. By default, all the following ``big O'' notations should be interpreted under the directional limit of PHAS, i.e. $n,p\to\infty, n/p\to \gamma_0\in(0,\infty)$. In contrast, in classical, or low dimensional settings, the direction of limit is usually $n\to\infty, p\equiv p_0$, in which case $n/p\to\infty$.

\subsection{Newton Method and Direct Perturbation}
\label{ssec: newton method}
Inspired by multiple algorithms in the literature of machine unlearning, such as the algorithms introduced in \cite{guo2019certified,sekhari2021remember},  in this paper we use the \textbf{Newton method} with \textbf{direct perturbation} to construct a  $(\phi, \epsilon)$-PAR algorithm with guaranteed prediction accuracy in terms of ${\rm GED}^{\epsilon}$. 

The Newton method, also called Newton-Raphson method, is essentially an iterative root-finding algorithm. 
\begin{definition}[Newton Method]
\label{def: Newton}
    Suppose $\bm{f}:\RR^p\to\RR^p$ has an invertible Jacobian matrix $\bG$ anywhere in an open set $\Theta\subset\RR^p$, and $\bm{f}$ has a root $\bm{f}(\bbeta^*)=\bzero$ in $\Theta$. Starting from an initial point $\bx^{(0)}\in\Theta$, the Newton method is the following iterative procedure: for step $t\geq 1$,
    \[
        \bx^{(t)} := \bx^{(t-1)} - \bG^{-1}(\bx^{(t-1)})\bbf(\bx^{(t-1)}).
    \]
\end{definition}

For more information about Newton method, please check Section 9.5 of \cite{BoydBook}.

Denote the objective function of $\hb$ as $L(\bbeta)$ in \eqref{eq:bl}, and that of $\hbm$ as $L_{\backslash \cM}$. Notice that finding $\hbm$ is equivalent to solving the root of the gradient of $L_{\backslash \cM}$ when it is smooth, and that $\hb$ is reasonably close to $\hbm$ if $m=|\cM|$ is small, so we can initialize the Newton method at $\tbm^{(0)}=\hb$ and iteratively compute:
\begin{align*}
    \tbm^{(t)}&= \tbm^{(t-1)} - \bGm^{-1}(\tbm^{(t-1)})\nabla L_{\backslash\cM}(\tbm^{(t-1)}),\quad t\geq 1
    \label{eq: def_t_step_newton}
\end{align*}
where $\bGm(\bbeta)$ is the Hessian of $L_{\backslash \cM}$, and $\nabla L_{\backslash \cM}$ is its gradient.

Suppose that we stop the Newton method after $T$ iterations. In order to obscure residual information of $\cD_{\cM}$, we introduce \textbf{direct perturbation}, resulting in our proposed unlearing method, \textbf{Perturbed Newton} estimator\footnote{Note that even though we do not distinguish the perturbations $\bb$, it should be independently drawn each time it is used.}:
\[
    \tbm^{R,T}= \tA(\cD_{\cM},A(\cD), T(\cD),\bb) \triangleq \tbm^{(T)} + \bb.
\]
Throughout this paper we consider Isotropic Laplacian distribution for $\bb$, which has the density
\[
    p_{\bb}(\bb)= \frac{C^p\Gamma(\frac{p}{2})}{2\pi^{\frac{p}{2}}\Gamma(p)}{\rm e}^{-C\Vert\bb\Vert}\propto {\rm e}^{-C\Vert\bb\Vert}.
\]
for some scale parameter $C>0$. Drawing a sample from $p_{\bb}(\bb)$ is equivalent to drawing $\Vert\bb\Vert$ from a $Gamma(p,C^{-1})$ distribution, then sampling $\bb$ uniformly on the sphere with radius $\Vert\bb\Vert$ (see Lemma \ref{lem: laplace and gamma}). The reason to consider this distribution is that its log density is Lipschitz in $\|\bb\|$, thus naturally connected with the $(\phi,\epsilon)$-PAR criteria. This will be shown later in more details in Lemma \ref{lem: direct_perturbation}.

It remains a question when to stop the Newton iteration. To find an `exact' solution, it is usually run until certain convergence criterion is met. In contrast, it has been proposed in the literature that one Newton step suffices in the low dimensions, for example in \cite{guo2019certified, sekhari2021remember}. However, as will be clarified later, under PHAS even to remove a single data point ($m=1$), we need \textbf{at least two Newton steps} to guarantee good prediction accuracy (i.e. ${\rm GED}^\epsilon\to 0$). This will be discussed later in Section \ref{ssec: main}.

\subsection{Notations}  
We adopt the following mathematical conventions. Scalars and scalar-valued functions are denoted by  English or Greek letters (e.g. $C_1, a, \lambda>0$)\footnote{Usually we use $C(n)$ to indicate a quantity that grows at most at a speed of $\polylog(n)$,  otherwise we tend to put $n$ in the subscript.}. Caligraphic uppercase letters are used for sets, families or events (e.g. $\cT\subset \RR^p$) with an exception that $\cN$ refers to the Gaussian distribution. $\RR,\RR_+$ denote the set of real, positive real numbers respectively. Vectors are represented by bold lowercase letters (e.g. $\bx\in\RR^p$), and matrices by bold uppercase (e.g. $\bX\in\RR^{n\times p}$). For a matrix $\bX$, $\Vert\bX\Vert$, $\Vert\bX\Vert_{Fr}$, $\lambda_{\min}(\bX)$, $\lambda_{\max}(\bX)$, $\sigma_{\min}(\bX)$, $\sigma_{\max}(\bX)$ and ${\rm tr}(\bX)$ denote the (Euclidean) operator norm, Frobenius norm, minimal and maximal eigenvalues, minimal and maximal singular values and the trace of $\bX$, respectively. Moreover,
\[
    \Vert\bX\Vert_{p,q}:=\sup_{\Vert\bw\Vert_p\leq 1}\Vert\bX \Vert_q
\]
is the operator norm induced by  $\ell_p, \ell_q$ norms, for $p,q\in \RR_+\cup\{+\infty\}$.

We denote $[n]:=\{1,2,\cdots,n\}$ for some $n\in\mathbb{N}_+$. For any index set \(\mathcal{M}\subset[n]\), we use \(\bm{X}_{\mathcal{M}}\) to denote the sub-matrix of \(\bm{X}\) that consists of the rows indexed by \(\mathcal{M}\). Similarly, \(\bm{a}_{\mathcal{M}}\) represents the sub-vector of \(\bm{a}\) containing the elements indexed by \(\mathcal{M}\).  More generally, we use the subscript $\cdot_\cM$ to refer to a quantity corresponding to $\cD_{\cM} = \{(y_i,\bx_i), i\in\cM\}$, and the subscript $\cdot_{\backslash\cM}$ to refer to the quantities corresponding to $\cD_{\backslash\cM} := \cD\backslash\cD_{\cM}$.

We define \(\ld(y|z)\) and \(\ldd(y|z)\) as the first and second derivatives of the function \(\ell\) with respect to \(z\), respectively. Furthermore, we write $\ell_i(\bbeta)$ for short of $\ell(y_i|\bx_i^\top\bbeta)$, and similarly $\ld_i(\bbeta)$ and $\ldd_i(\bbeta)$. Additionally, we introduce the vectors  

\[
\bm{\ld} := \left[\ld_1(\hb), \cdots, \ld_n(\hb)\right]^\top, \quad
\bm{\ldd} := \left[\ldd_1(\hb), \cdots, \ldd_n(\hb)\right]^\top.
\]  

We define \(\diag[\bm{a}]\) or $\diag[a_i]_{i\in[n]}$ as a diagonal matrix whose diagonal elements correspond to the entries of the vector \(\bm{a} = (a_1,\cdots,a_n)^\top\).

We use the following notations for the limiting behavior of sequences. We use $\polylog(n)$ as a shorthand for finite degree polynomials of $\log(n)$. We use the conventional notations for limiting behavior of sequences: $a_n = o(b_n)$, $O(b_n)$, $\Omega(b_n)$, $\Theta(b_n)$ respectively mean that $a_n/b_n$ is convergent (to 0), bounded, divegent, and asymptotically equivalent. We use similar notations for their stochastic analogies, e.g. $X_n = O_p(1)$ iff $X_n\to 0$ in probability, and so forth. Finally, the symbol ``$\independent$'' means ``independent'' in probability.

\section{Our contributions}

In this section, we present our main theoretical results on the certifiability and accuracy of the machine unlearning algorithms introduced in Section~\ref{ssec: newton method}, under the proportional asymptotic regime. Before stating the results, we first provide a detailed overview of the assumptions underlying our analysis.

\subsection{Main assumptions}
\label{ssec:assumptions}
Our first series of assumptions are concerned with the structural properties of $\ell$ and $r$. 

\begin{assumptionA}[Separability]\label{assum:separability}
    The regularizer is separable:
    \[
        r(\bbeta) = \sum_{k\in[p]}r_k(\beta_k).
    \]
\end{assumptionA}

This assumption can be generalized to include a linear transform:
$r(\bbeta) = \sum_{j\in[l]} r_j(\ba_j^\top \bbeta)$, but we make the simplified assumption that $\ba_j = \bm{e}_j$ where $\bm{e}_j$ is the jth canonical basis of $\RR^p$ to avoid cumbersome notations. Generalizing the current proof to arbitrary $\ba_j$ is trivial.

\begin{assumptionA}[Smoothness]\label{assum:smoothness}
Both the loss function $\ell:\RR\times\RR\to\RR_+$ and the regularizer $r:\RR^p\to\RR_+$ are twice differentiable. 
\end{assumptionA}

\begin{assumptionA}[Convexity]\label{assum:convexity}
Both $\ell$ and $r$ are proper convex, and $r$ is $\nu$-strongly convex in $\bbeta$ for some constant $\nu>0$.
\end{assumptionA}

These assumptions ensure that the R-ERM estimators $\hb$ and $\hbm$ are unique, and the Newton method is applicable. While true for many applications, in other cases where certain structures such as sparsity of $\beta$ is assumed, these assumptions can be violated. While a few papers have shown how the Newton method can be extended to non-differentiable settings (e.g.~\cite{auddy24a,wang2018approximate}), the theoretical study of such cases will not be the focus of this work, and are left for a future research. Note that we implicitly assumed $\ell$ and $r$ to be non-negative without loss of generality. This can be achieved by subtracting their minimum (which are finite since $\ell,r$ are proper convex) from the function themselves.

In addition to the structural properties, we make several assumptions on the probablisic aspects of the data and model:

\begin{assumptionB}\label{assum:normality}
The feature vectors $\bm{x}_i \overset{iid}{\sim} \mathcal{N} (\bzero, \bm{\Sigma})$. Furthermore, we assume that $ \lambda_{\max}(\bSigma) \leq \frac{C_X}{p}$, for some constant $C_X>0$. 
\end{assumptionB}

% While the Gaussianity of $\bm{x}_i$  is prevalent in theoretical papers dealing with high-dimensional problems, this assumption can be relaxed to  (high probability bounds). We will clarify this point in our proofs. 

The Gaussianity assumption is prevalent in theoretical papers dealing with high-dimensional problems, for example \cite{miolane2021distribution,WengMalekiZheng18,rad2018scalable,auddy24a}. Although our proofs can be generalized to a broader class of distributions of $\bx_i$ beyond Gaussianity, we do not discuss it in details in this paper.

The scaling we have adopted in the above assumption is based on the following rationale. First notice that since $\bx_i \sim \cN(0,\bSigma)$,
\[
{\rm var}(\bm{x}_i^\top \bm{\beta}^*)=\mathbb{E} (\bm{x}_i^\top \bm{\beta}^*)^2 \leq \frac{C_X}{p} \|\bm{\beta}^*\|_2^2. 
\]

Heuristically speaking, uner PHAS and when the elements of $\bm{\beta}^*$ are $O(1)$,  we have $\|\bm{\beta}^*\|_2=O(\sqrt{p})$, and hence $\mathbb{E} (\bm{x}_i^\top \bm{\beta}^*)^2 = O(1)$. On the other hand, it is reasonable to assume that $y_i|\bx_i^\top\bbeta$ has $\Theta(1)$ variance. Therefore, under the settings of the paper we can see that the signal-to-noise ratio (SNR) of each data point, defined as $\frac{{\rm var}(\bm{x}_i^\top\bm{\beta}^*)}{{\rm var}(y_i|\bm{x}_i^\top\bm{\beta}^*)}$, remains bounded. We now introduce two more assumptions on the likelihood function $\ell$ and the response $y$. These are typically used in the analysis of high dimensional regression problems and are satisfied for a host of natural examples including linear and logistic regression. See, e.g., \cite{zou2024theoretical}.

\begin{assumptionB}\label{assum:ld}
    $\exists C,s>0$ such that 
    \[
        \max\{\ell(y,z), |\ld(y,z)|,|\lddd(y,z)| \}\leq C(1+|y|^s + |z|^s)
    \]
    and that $\nabla^2 r(\bbeta) = \diag[\rdd_k(\beta_k)]_{k\in[p]}$ is $C_{rr}(n)$-Lipschitz (in Frobenius norm) in $\bbeta$ for some $C_{rr}(n)=O(\polylog(n))$.
\end{assumptionB}

This assumption requires that the derivatives of $\ell$ grows with $y$ and $z$ at most as fast as a polynomial function with order $s$, and the regularizer should be $O(\polylog(n))$-Hessian-Lipschiz, with one example be the ridge penalty: $r(\bbeta) = \|\bbeta \|^2$ and $\nabla^2r(\bbeta)=\II_p$.

\begin{assumptionB}\label{assum:y}
    $\PP(|y_i|>C_y(n))\leq q_y(n)$ and $\EE|y_i|^{2s}\le C_{y,s}$ for some $C_y(n)=O(\polylog(n))$, a constant $C_{y,s}$ and $q_n^{(y)}=o(n^{-1})$
\end{assumptionB}
This assmption essentially requires all $|y_i|$ to be stochastically bounded even when $n,p$ increases.

Below are some examples where Assumptions \ref{assum:ld} and \ref{assum:y} are satisfied. For simplicity we assume $\bx_i\sim \cN(0,\frac1p \II_p)$.

\begin{example}[Linear regression]
    Suppose $y_i|\bx_i\sim \cN(\bx_i^\top\bbeta^*,\sigma^2)$, then we have $y_i\sim\cN(0,\tau^2)$ with $\tau^2:=\sigma^2 + \frac1p \Vert\bbeta^* \Vert^2$. Its negative log-likelihood is the $\ell_2$ loss:
    \begin{align*}
        \ell(y,z) = \frac12 (y-z)^2, \quad
        \ld(y,z) = z-y ,\quad
        \ldd(y,z)  = 1,\quad
        \lddd(y,z)  = 0.
    \end{align*}
    And by Lemma \ref{lem: conc_single_x} we have the following concentration for $y_i$:
    \[
        \PP(|y_i|\geq 6\tau\sqrt{\log(n)})\leq \frac{1}{\sqrt{6\pi}}n^{-3}.
    \]
\end{example}

\begin{example}[Logistic regression]
    Suppose $y_i\sim Bernoulli(p_i)$ where $p_i = (1+e^{-\bx_i^\top\bbeta^*})^{-1}$. The negative log-likelihood is then
    \begin{align*}
        \ell(y,z) &= y \log(1+e^{-z}) + (1-y)\log(1+e^z), \quad y \in\{0,1\}\leq 2\log(2)+2z,\\
        |\ld(y,z)| &= \left|\frac{e^z}{1+e^z} - y\right|\leq 1+|y|,\\
        |\ldd(y,z)| &= \left| \frac{e^z}{(1+e^z)^2} \right|\leq 1 \\ 
        |\lddd(y,z)| &=\left| -\frac{1}{1+e^z}+\frac{3}{(1+e^z)^2}-\frac{2}{(1+e^z)^3}\right| \leq 6,
    \end{align*}
    and obviously $|y_i|\leq 1$ so that Assumption \ref{assum:y} is also satisfied.
\end{example}

\subsection{Main theorem and its implications}
\label{ssec: main}

The main objective of this paper is to answer the following two questions: 

\begin{enumerate}
    \item[$\mathcal{Q}_1$:] Given a $t$-step Newton estimator $\tbm^{(t)}$, can we find a large enough perturbation $\bb$ so that $\tbm^{R,t}$ is $(\phi,\epsilon)$-PAR, for some $\phi\to 0$ under PHAS?
    \item[$\mathcal{Q}_2$:] Given the perturbation level in $\mathcal{Q}_1$, can we find a sufficient number of Newton steps $T$ such that ,
    ${\rm GED}^{\epsilon}(\hbm, \tbm^{R,T})\to 0$ under PHAS?
\end{enumerate}

The two theorems below are the main results of our paper. They guarantee the certifiability and accuracy of the Perturbed Newton estimator and answer $\mathcal{Q}_1$ and $\mathcal{Q}_2$ respectively.
\begin{theorem}\label{thm: main_epscert}
    Under Assumptions A1-A3 and B1-B3, suppose $m=o(n^{\frac13})$,  suppose $\bb$ has density $p_{\bb}(\bb)\propto{\rm e}^{-\frac{\epsilon}{r_{t,n}}\Vert\bb\Vert}$ with
    \[
        r_{t,n} = [C_1(n)]^{2^{t-1}}\left(\frac{C_2(n)m^3}{2\lambda\nu n}\right)^{2^{t-2}},
    \]
    for some $C_1(n),C_2(n)=O(\polylog(n))$ and $\epsilon>0$.
    Then $\tbm^{R,t} = \tbm^{(t)}+\bb$ achieves $(\phi_n,\epsilon)$-PAR with 
    \[
        \phi_n = nq_n^{(y)} + 8n^{1-c} + ne^{-p/2} + 2e^{-p}\to 0
    \]
\end{theorem}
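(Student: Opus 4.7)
\medskip
\noindent\textbf{Proof proposal.} The strategy is to reduce the $(\phi_n, \epsilon)$-PAR guarantee to a deterministic bound $\|\tbm^{(t)} - \hbm\| \le r_{t,n}$ holding on a high-probability event. Adopting the convention that $\tA(\emptyset, A(\cDm), T(\cDm), \bb) = \hbm + \bb$ (see the remark following Definition~\ref{def:epscert}), the ratio in \eqref{eq:epsilon_cert} equals $p_{\bb}(\bw - \tbm^{(t)})/p_{\bb}(\bw - \hbm)$. The Lipschitz log-density of the isotropic Laplace distribution with scale $\epsilon/r_{t,n}$ then bounds this ratio by $\exp((\epsilon/r_{t,n})\|\tbm^{(t)} - \hbm\|)$; Lemma~\ref{lem: direct_perturbation} is the formal statement of this step. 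It therefore suffices to construct an event $\cX$ with $\PP(\cX^c) \le \phi_n$ on which $\|\tbm^{(t)} - \hbm\| \le r_{t,n}$ holds for every $\cM \subseteq [n]$ with $|\cM| \le m$.

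I would establish this deterministic bound by combining two ingredients. Stage~(II) is the classical quadratic convergence of Newton: writing $\tbm^{(t)} - \hbm = \bigl[\bI - \bGm^{-1}(\tbm^{(t-1)})\int_0^1 \bGm(\hbm + s(\tbm^{(t-1)} - \hbm))\, ds\bigr](\tbm^{(t-1)} - \hbm)$ and invoking strong convexity $\bGm \succeq \lambda\nu \bI$ together with an operator-norm Hessian-Lipschitz bound on $\bGm$ (which, under Assumption~B2 with high-probability control of $\|\bx_i\|$, $|y_i|$, and $\|\bX\|_{op}$, is $O(\polylog(n))$), one obtains the recursion $\|\tbm^{(t)} - \hbm\| \le C_1(n) \|\tbm^{(t-1)} - \hbm\|^2$ for $t \ge 2$, where $C_1(n)$ absorbs the Hessian-Lipschitz factor divided by $2\lambda\nu$. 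Stage~(I) is the technical heart: using $\nabla L_{\backslash\cM}(\hb) = -\bX_\cM^\top \bm{\ld}_\cM$ (which follows from $\nabla L(\hb) = 0$), strong convexity of $L_{\backslash\cM}$ gives $\|\hb - \hbm\| \le \|\bX_\cM^\top\bm{\ld}_\cM\|/(\lambda\nu)$, and the same Hessian-Lipschitz computation as in stage~(II) yields $\|\tbm^{(1)} - \hbm\| \lesssim \polylog(n) \cdot \|\hb - \hbm\|^2$. Inserting a sharp bound $\|\bX_\cM^\top\bm{\ld}_\cM\|^2 \lesssim \polylog(n) \cdot m^3/n$ at this point gives exactly the target $r_{1,n}$, and induction combining stages~(I) and~(II) then produces $r_{t,n}$ for all $t \ge 1$.

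The sharp uniform bound $\|\bX_\cM^\top\bm{\ld}_\cM\|^2 \lesssim \polylog(n) \cdot m^3/n$ would be obtained by (a) a leave-$\cM$-out argument that decouples $\bm{\ld}_\cM$ (through $\hb$) from $\bX_\cM$ up to a controllable error, (b) a Hanson-Wright type concentration for the resulting Gaussian quadratic form, and (c) a union bound over the $\binom{n}{m}$ choices of $\cM$, which is where the hypothesis $m = o(n^{1/3})$ is decisive. The failure probability $\phi_n$ then decomposes into four independent contributions: $n q_n^{(y)}$ from the union bound over $\{|y_i| \le C_y(n)\}$ (Assumption~B3), $n e^{-p/2}$ from chi-squared tail bounds on $\|\bx_i\|^2$, $2e^{-p}$ from sharp operator-norm concentration of $\bX$, and $8n^{1-c}$ from the Hanson-Wright step combined with the subset union bound. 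The main obstacle I anticipate is precisely step~(a): decoupling $\bm{\ld}_\cM$ from $\bX_\cM$ \emph{uniformly} over $\cM$ is subtle because $\hb$ depends on all of $\cD$, and a naive estimate treating $\bm{\ld}_\cM$ as deterministic only yields $\|\bX_\cM^\top\bm{\ld}_\cM\| = O(m\,\polylog(n))$, which propagates through one Newton step as $O(m^2 \polylog(n))$---far too weak for two Newton steps to drive the certifiability error to zero, and hence insufficient to recover the quoted scaling of $r_{t,n}$.
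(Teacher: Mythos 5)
Your overall architecture matches the paper: reduce $(\phi_n,\epsilon)$-PAR to a high-probability deterministic bound $\|\tbm^{(t)}-\hbm\|\le r_{t,n}$ via Lemma~\ref{lem: direct_perturbation}, use quadratic Newton convergence (strong convexity plus Hessian-Lipschitzness) to go from $t-1$ to $t$, and decompose the failure probability into contributions from $|y_i|$, $\|\bx_i\|$, $\|\bX\|$, and a Gaussian concentration step. Your Stage~(II) is essentially the paper's Lemma~\ref{lem:t step newton general}.

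However, Stage~(I) has a genuine gap, and your proposed fix is based on a claim that is false. You bound $\|\tbm^{(1)}-\hbm\|\lesssim\polylog(n)\|\hb-\hbm\|^2$ and $\|\hb-\hbm\|\lesssim\|\bX_\cM^\top\bld_\cM\|$, then hope to close the loop with a ``sharp'' bound $\|\bX_\cM^\top\bld_\cM\|^2\lesssim\polylog(n)\,m^3/n$. No amount of leave-$\cM$-out decoupling or Hanson--Wright can deliver that: $\bX_\cM^\top\bld_\cM=\sum_{i\in\cM}\ld_i\bx_i$ is a sum of $m$ nearly orthogonal vectors each of $\ell_2$ norm $\Theta(1)$, so $\|\bX_\cM^\top\bld_\cM\|=\Theta(\sqrt{m}\,\polylog(n))$ and consequently $\|\hb-\hbm\|=\Theta(\sqrt{m}\,\polylog(n))$; these are already sharp and carry no $n^{-1/2}$ factor. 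Pushing quadratic convergence through the $t=0\to1$ step therefore yields only $\|\tbm^{(1)}-\hbm\|=O(m\,\polylog(n))$, which does not vanish and cannot reproduce $r_{1,n}=O(m^{3/2}/\sqrt n)$. (Note also that your arithmetic is internally inconsistent: plugging $\|\bX_\cM^\top\bld_\cM\|^2\lesssim m^3/n$ into your own chain would give $\|\tbm^{(1)}-\hbm\|\lesssim m^3/n$, which is the $t=2$ rate, not the claimed $r_{1,n}$.)

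The paper's route is structurally different at $t=1$. It does not apply quadratic convergence at the first step. Instead it writes
\[
\hbm-\tbm^{(1)}=\bigl[\bbGm^{-1}-\bGm^{-1}(\hb)\bigr]\bvm=\bM_1\bvm+\bM_2\bvm,\qquad \bvm:=\bX_\cM^\top\bld_\cM,
\]
and bounds each term not by $\|\bM_k\|\,\|\bvm\|$ (which would give $O(m)$) but by the trilinear estimate
\[
\|\bM_1\bvm\|\le\|\bGamma\|_{\mathrm{Fr}}\cdot\sup_{\|\bw\|=1}\|\bbXm\bbGm^{-1}\bw\|_2\cdot\|\bbXm\bGm^{-1}(\hbm)\bvm\|_\infty.
\]
The crucial $n^{-1/2}$ factor enters through the last term via $\|\bbXm\bGm^{-1}(\hbm)\bX_\cM^\top\|_{2,\infty}=O(\sqrt{m/n}\,\polylog(n))$, controlled in event $F_5$ by Gaussian concentration of individual coordinates $\bx_j^\top\bGm^{-1}(\hbm)\bvm$ (Lemma~\ref{lem: P(F_5)} and Lemma~\ref{lem:gauss-mat-l4infty-norm}). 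That is: the smallness comes not from $\|\bvm\|$, as your proposal assumes, but from the fact that applying a row of $\bbXm$ to $\bGm^{-1}(\hbm)\bvm$ is an approximately independent Gaussian projection of a vector of norm $O(\sqrt{m})$, hence of size $O(\sqrt{m/n})$. Your proposal, as it stands, has no mechanism to capture this, and it is precisely what makes the one-step rate in the theorem possible.
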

The proof, including the explicit expressions for $C_1(n), C_2(n)$, can be found in Section \ref{ssec: proof_main}. Theorem \ref{thm: main_epscert} shows that, with a certain noise level we can obtain a $(\phi_n, \epsilon)$-PAR algorithm from any steps of Newton iterations,  However, it does not provide information on the accuracy of the approximations. Recall the metric of accuracy we defined:
\[
    {\rm GED}^\epsilon(\tbm^{R,t},\hbm) = 
    \EE\left(\big\vert
    \ell(y_0, \bx_0^\top(\hbm)) - \ell(y_0, \bx_0^\top(\tbm^{R,t}))
    \big\vert 
    |\cD\right).
\]
Our next theorem calculates the accuracy of the estimates that are obtained from the Newton method. 
\begin{theorem}
    \label{thm: main_accuracy}
    Under Assumptions A1-A3 and B1-B3, with probability at least $1-(n+1)q_n^{(y)}-14n^{1-c}-ne^{-p/2}-2e^{-p}-e^{-(1-\log(2))p}$, 
    \begin{align*}
        {\rm GED}^\epsilon(\tbm^{R,t},\hbm)
        &\leq \left(\frac{2\sqrt{p}}{\epsilon}+\frac{1}{\sqrt{p}}\right)r_{t,n}\sqrt{2m+2s}\cdot\polylog(n), \quad \forall t\geq 1.
    \end{align*}
    where $s$ is the constant in Assumption~\ref{assum:y}.
    Moreover, let $\alpha:=\log(m+1)/\log(n).$ If $t>T=1+\log_2\left(\frac{\alpha+1}{1-3\alpha}\right)$, then under PHAS, for any $\cD\in F^c$, we have
    \[
        {\rm GED}^\epsilon(\tbm^{R,t},\hbm)=O_p(1).
    \]
\end{theorem}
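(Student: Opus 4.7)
The plan is to decompose $\hbm-\tbm^{R,t}=(\hbm-\tbm^{(t)})-\bb$ by triangle inequality, bound each piece separately, then substitute the explicit form of $r_{t,n}$ to read off the critical $t$. Applying the mean value theorem to $z\mapsto\ell(y_0,z)$ between $\bx_0^\top\hbm$ and $\bx_0^\top\tbm^{R,t}$ yields
\[
|\ell(y_0,\bx_0^\top\hbm)-\ell(y_0,\bx_0^\top\tbm^{R,t})|\le|\ld(y_0,z^\star)|\bigl(|\bx_0^\top(\hbm-\tbm^{(t)})|+|\bx_0^\top\bb|\bigr).
\]
By Assumption~B2, $|\ld(y_0,z^\star)|\lesssim 1+|y_0|^s+|\bx_0^\top\hbm|^s+|\bx_0^\top\tbm^{R,t}|^s$; applying Cauchy--Schwarz together with Gaussianity of $\bx_0$ (Assumption~B1) and the moment control on $|y_0|$ (Assumption~B3) shows that the $\ld$ factor is $\polylog(n)$ on a high-probability event. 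The $\sqrt{2m+2s}$ prefactor in the theorem arises from using these moment bounds at an order tuned to $m$ (needed to absorb the leave-$m$-out scaling of $\hb-\hbm$), combined with a Gaussian $(m+s)$-tail estimate for $|\bx_0^\top\bb|$.

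Next, I would establish on a ``good event'' $F^c$ the quadratic recursion $\|\tbm^{(s)}-\hbm\|\le C_1(n)\|\tbm^{(s-1)}-\hbm\|^2$, where $C_1(n)=O(\polylog(n))$ aggregates uniform invertibility of $\bGm$ (from Assumption~A3 together with a spectral bound on $\bX_{\backslash\cM}$ via B1) and the Hessian-Lipschitz estimate from B2. Iterating from the leave-$m$-out initial bound $\|\hb-\hbm\|\le C_2(n)m^3/(2\lambda\nu n)$---a stability inequality obtained by Taylor-expanding the optimality conditions of $L$ versus $L_{\backslash\cM}$ and inverting $\bGm$---produces the nested exponents $2^{t-1}/2^{t-2}$ of $r_{t,n}$ exactly as in Theorem~\ref{thm: main_epscert}. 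For the injected noise, since $\bb$ has density $\propto e^{-(\epsilon/r_{t,n})\|\bb\|}$, $\|\bb\|\sim\mathrm{Gamma}(p,r_{t,n}/\epsilon)$, so $\EE\|\bb\|=p\,r_{t,n}/\epsilon$; conditional on $\bb$, $\bx_0^\top\bb\sim\cN(0,\bb^\top\bSigma\bb)$ with variance $\le C_X\|\bb\|^2/p$ by B1, yielding $\EE|\bx_0^\top\bb|\lesssim\sqrt{p}\,r_{t,n}/\epsilon$. The Newton-error term contributes a companion $r_{t,n}/\sqrt p$ via the fixed-$v$ bound $\EE|\bx_0^\top v|\lesssim\|v\|/\sqrt p$, and summing gives the stated $(2\sqrt p/\epsilon+1/\sqrt p)\,r_{t,n}\,\sqrt{2m+2s}\,\polylog(n)$ upper bound.

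Finally, for the phase transition at $t>T=1+\log_2((\alpha+1)/(1-3\alpha))$: under PHAS $p\asymp n$, substituting $m=n^\alpha$ expresses the upper bound as $n^{(1+\alpha)/2+(3\alpha-1)2^{t-2}}\cdot\polylog(n)$ (the $(3\alpha-1)$ coming from the $m^3/n$ factor inside $r_{t,n}$ raised to $2^{t-2}$). Requiring this exponent to be strictly negative rearranges to $2^{t-1}>(\alpha+1)/(1-3\alpha)$, i.e.\ $t>T$; at this threshold the $\polylog$ factors are absorbed into the strictly decaying polynomial slack and the bound vanishes. The main obstacle is the quadratic-convergence recursion: pinning down $C_1(n)$ and $C_2(n)=O(\polylog(n))$ requires a uniform-in-$\cM$ spectral bound on $\bX_{\backslash\cM}$ together with a non-exploding local Hessian-Lipschitz estimate, where the full interplay of A3, B1, and B2 (along with a union bound over $\binom{n}{m}$ index subsets) enters delicately.
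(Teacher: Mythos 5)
Your overall architecture matches the paper's: a mean-value expansion of $\ell_0$ between $\bx_0^\top\hbm$ and $\bx_0^\top\tbm^{R,t}$, a Cauchy--Schwarz split between the derivative factor and the projection, Gaussian concentration to make both $\polylog(n)$ on a high-probability event, the Gamma control $\|\bb\|\sim\mathrm{Gamma}(p,\epsilon/r_{t,n})$, and finally the algebra $r_{t,n}\asymp n^{(3\alpha-1)2^{t-2}}$ versus $n^{-(1+\alpha)/2}$ giving $t>T$. The phase-transition derivation is correct.

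However, there is a concrete gap in how you seed the quadratic recursion. You claim to iterate $\|\tbm^{(s)}-\hbm\|\le C_1(n)\|\tbm^{(s-1)}-\hbm\|^2$ starting from a ``leave-$m$-out initial bound $\|\hb-\hbm\|\le C_2(n)m^3/(2\lambda\nu n)$, a stability inequality obtained by Taylor-expanding the optimality conditions.'' That stability inequality does not hold. Taylor-expanding the first-order conditions and inverting $\bbGm$ (as in Lemma~\ref{lem:beta_lo_error}) gives
\[
\|\hb-\hbm\|\le \frac{1}{\lambda\nu}\Vert\bX_{\cM}^\top\bld_{\cM}(\hb)\Vert \le \frac{\sqrt m}{\lambda\nu}\,\Vert\bX\Vert\max_i|\ld_i(\hb)| = O_p\!\bigl(\sqrt m\,\polylog(n)\bigr),
\]
which is $\Theta(\sqrt m)$, not $O(m^3/n)$ (this is also what the right panels of Figures~\ref{fig:p_scaling_npequal}--\ref{fig:m_scaling} show: the exact removal error does not decay in $p$). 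If you feed this into the quadratic recursion with the $\polylog(n)$-Hessian-Lipschitz constant, you get $\|\tbm^{(1)}-\hbm\|=O(m\,\polylog(n))$, which does not even vanish under PHAS, let alone match the needed $O(\sqrt{m^3/n})$.

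The point is that the first Newton step cannot be analyzed by the bare quadratic-convergence argument; it is where the statistical structure does the work. The paper's Lemma~\ref{lem:l2-norm-diff} writes $\hbm-\tbm^{(1)}=[\bbGm^{-1}-\bGm^{-1}(\hb)]\bvm$, resolvent-expands the bracket into a $\bbXm^\top\bGamma\bbXm$ form, and crucially isolates the factor $\|\bbXm\bGm^{-1}(\hbm)\bX_\cM^\top\|_{2,\infty}$, which by the Gaussian $(2,\infty)$-norm concentration of Lemma~\ref{lem:gauss-mat-l4infty-norm} (event $F_5$) is $O(\sqrt{m/n})$, not $O(1)$. That extra $1/\sqrt n$ is exactly what turns $\|\hb-\hbm\|=O(\sqrt m)$ into $\|\tbm^{(1)}-\hbm\|=O(\sqrt{m^3/n})$. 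Only from $t=1$ onward does the deterministic quadratic recursion of Lemma~\ref{lem:t step newton general} suffice, which is why $r_{t,n}$ carries the exponent $2^{t-2}$ on the $m^3/n$ factor rather than $2^{t-1}$ and why the recursion factor $C_2(n)m^3/(2\lambda\nu n)$ appears inside the parentheses and not as a bound on $\|\hb-\hbm\|$. Your proposal, as written, would need to replace ``iterate from $\|\hb-\hbm\|$'' with a separate sharp bound on the one-step error.
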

%\arnab{Should we combine the two theorems to emphasize that Theorems 3.1 and 3.2 hold for a common event defined on the dataset?}

The proof of Theorem \ref{thm: main_accuracy} can be found in Section \ref{ssec: proof_main}.
% \arian{We can then write several remarks describing the implications. Let's do this first and then we can work on the rest.  }

Theorems \ref{thm: main_epscert} and \ref{thm: main_accuracy} answer the two questions $\mathcal{Q}_1$ and $\mathcal{Q}_2$ we raised at the beginning of this section:

\begin{enumerate}
    \item[$\cA_1$]: For any $t\geq 1$, the perturbation scale $r_{t,n}$ should be at least 
    $O\left((m^3/n)^{2^{t-2}}\right)$ so that $t$ steps of Newton step is $(\phi,\epsilon)$-PAR with $\phi\to 0$ under PHAS.

   % \arian{I suggest here we raise the question of the number of iterations again, and then we bring Lemma 3.3. I would be happy to revise it if you are working on other aspects of the paper.  }
    \item[$\cA_2$]: The number of iterations $t$ should satisfy 
    \[
    t>T=1+\log_2\left(\frac{1+\alpha}{1-3\alpha}\right)\]
    where $\alpha = \log(m+1)/\log(n)$, so that ${\rm GED}^\epsilon(\tbm^{R,t},\hbm)\to 0$ in probability under PHAS.
\end{enumerate}

Note that in $\cA_2$, $\alpha=\log(m+1)/\log(n)>0$ for $m\ge 1$, so $t>1+\log_2(1)=1$. It again verifies our claim at the end of Section \ref{ssec: newton method} that \textbf{one Newton step is not enough}, even with $m=1$. However, when $m=O(1)$, $\alpha$ can be arbitrarily small when $n$ is large, in which case $\log_2(\frac{1+\alpha}{1-3\alpha})<1$ so $t=2$ is enough. For $m$ increasing with $n$, $\cA_2$ provides the minimum number of iterations needed, provided that $m=o(n^{\frac13})$.

Regarding the sharpness of Theorem \ref{thm: main_accuracy}, Figure \ref{fig:onetwo} illustrates that the amount of noise required for certified unlearning with a single Newton step is too large—it not only erases the targeted information but also corrupts parts of the model that should remain intact. In contrast, with two Newton steps, the required noise is significantly smaller, enabling removal of the intended information while preserving the rest of the model.

\begin{figure}[htbp]
    \centering
    \includegraphics[width=0.8\textwidth]{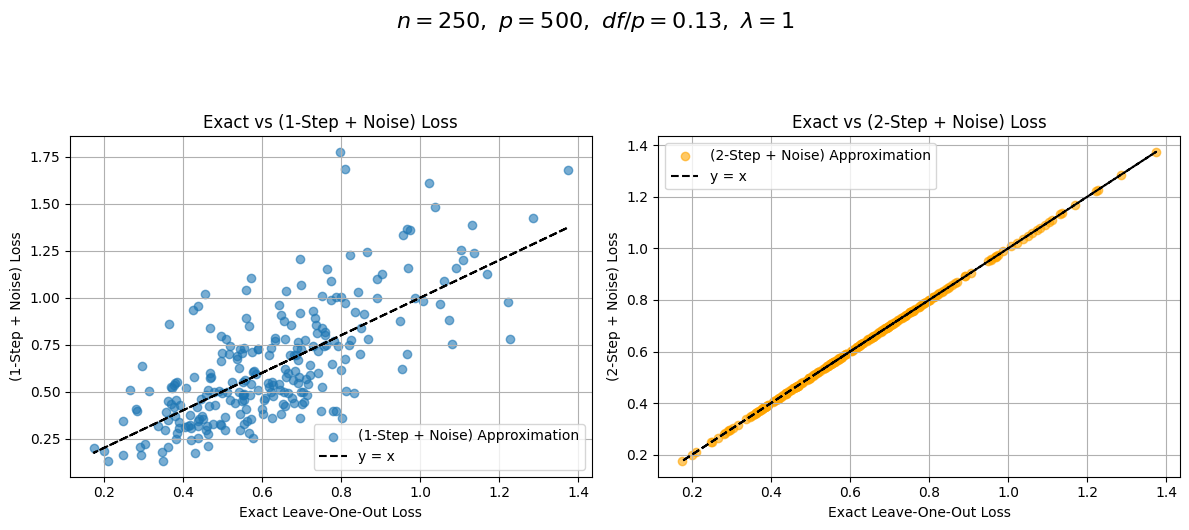} 
    \caption{The impact of $T$ Newton iterations on the accuracy of certified data removal for $T=1,2$. On both plots, the $X$ axis denotes the exact leave-one-out loss. Then, the figure on the left plots the one Newton step plus noise loss on the $Y$ axis. The figure on the right plots the two Newton step plus noise loss on the $Y$ axis. }
    \label{fig:onetwo}
\end{figure}

%The following noise is added to  $\tbm^{R,T}=  \tbm^{(T)} + \bb_T$
%\begin{eqnarray}
%\mathbf{b}_T &=& \rho_T \cdot \frac{\mathbf{z}_T}{\|\mathbf{z}_T\|} \\
%\text{with} && \rho_T \sim \text{Gamma}(p, r_T/e),\quad \mathbf{z} \sim \mathcal{N}(0, I_p)
%\end{eqnarray}

\subsection{Why Single Newton Step is Insufficient}
\label{ssec: why_one_step_not_enough}

In this section we explain in more details why one Newton step is not enough in high dimensions, by providing some key points in the proof of Theorem \ref{thm: main_epscert} and Theorem \ref{thm: main_accuracy},which also provide evidence for the sharpness of the two theorems. 

We start by explaining the trade-off between certifiability and accuracy in Section \ref{sssec: trade-off}.  It eventually turned out that with isotropic Laplace perturbation, the key to both criteria (and their sharpness) is the quantity
\[
    \|\tbm^{(t)}-\hbm \|_2,
\]
where $\tbm^{(t)}$ is the t-step Newton estimator before perturbation.
We then provide bounds for this quantity in Section \ref{sssec: l2_error_newton}, while detailed proofs are postponed to the Appendix.
\subsubsection{Trade-Off between Certifiability and Accuracy}
\label{sssec: trade-off}

Recall that we proposed two principles, namely certifiability and accuracy, and quantified them by $(\phi,\epsilon)$-PAR (Definition \ref{def:epscert}) and the Generalization Error Divergence (GED, Equation \ref{eq: def_err}) . However, as the title suggests, they usually do not work in the same direction: a method in favor of certifiability tends to use a large perturbation to obscure residual information, which usually sacrifices its accuracy, and vice versa. In this section, we will provide a more specific discussion on this trade-off.

We mentioned that the $(\phi,\epsilon)$-PAR and Isotropic Laplace perturbation are naturally connected, as the following lemma suggests:

\begin{lemma}\label{lem: direct_perturbation}
    Let $\hbm, \tbm\in \RR^p$ be any two estimators calculated from $\cD$. Suppose $\bb\in\RR^p$ is a random vector independent of $\cD$ and has a density $p_{\bb}(\bb)\propto{\rm e}^{-\frac{\epsilon}{r}\Vert\bb\Vert}$. Define 
    \[
    \cX_r\triangleq\{\cD: \underset{|\cM|\leq m}{\max}\Vert\hbm-\tbm\Vert_2\leq r\},
    \] 
    then $\forall$ measurable $ \cT\subset \RR^p$,  $\forall |\cM|\leq m$,
    \[
        {\rm e}^{-\epsilon}<\frac{\mathbb{P} \left(\tbm+\bb  \in \cT | \cD\right)}{\mathbb{P}(\hbm+\bb \in \cT|\cD)} \leq {\rm e}^{\epsilon},
        \numberthis\label{eq: direct_perturbation_LR}
    \]
    if and only if $\cD\in \cX_r$.
\end{lemma}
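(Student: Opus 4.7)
The plan is to reduce the claim to the standard Laplace-mechanism argument from differential privacy, leveraging the fact that $\log p_{\bb}$ is $(\epsilon/r)$-Lipschitz with respect to $\|\cdot\|_2$. I would first condition on $\cD$, so that $\hbm$ and $\tbm$ become deterministic, and use the translations $\bz = \bb + \tbm$ and $\bz = \bb + \hbm$ to rewrite
\begin{equation*}
\PP(\tbm + \bb \in \cT \mid \cD) = \int_{\cT} p_{\bb}(\bz - \tbm)\, d\bz, \qquad \PP(\hbm + \bb \in \cT \mid \cD) = \int_{\cT} p_{\bb}(\bz - \hbm)\, d\bz.
\end{equation*}
The pointwise density ratio then equals $\exp\bigl(\tfrac{\epsilon}{r}(\|\bz - \hbm\| - \|\bz - \tbm\|)\bigr)$, which by the reverse triangle inequality is sandwiched in $[e^{-\frac{\epsilon}{r}\|\hbm - \tbm\|},\, e^{\frac{\epsilon}{r}\|\hbm - \tbm\|}]$ for every $\bz \in \RR^p$.

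For the \emph{if} direction, if $\cD \in \cX_r$ then $\|\hbm - \tbm\| \le r$ for every $\cM$ with $|\cM| \le m$, so the pointwise ratio lies in $[e^{-\epsilon}, e^{\epsilon}]$. Multiplying the pointwise inequality $p_{\bb}(\bz - \tbm) \le e^{\epsilon} p_{\bb}(\bz - \hbm)$ (and its reverse) by the indicator of $\cT$ and integrating lifts the bounds directly to \eqref{eq: direct_perturbation_LR}, uniformly in $\cM$ with $|\cM| \le m$ and in measurable $\cT$.

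For the \emph{only if} direction I would argue by contrapositive: if $\cD \notin \cX_r$, there exists $\cM_0$ with $|\cM_0| \le m$ and $\Delta := \|\bm{\hat\beta}_{\backslash\cM_0} - \tilde{\bm\beta}_{\backslash\cM_0}\| > r$. Placing $\bz_0 = \tilde{\bm\beta}_{\backslash\cM_0} + s\bigl(\tilde{\bm\beta}_{\backslash\cM_0} - \bm{\hat\beta}_{\backslash\cM_0}\bigr)/\Delta$ for any $s > 0$ puts $\bz_0$ on the ray from $\bm{\hat\beta}_{\backslash\cM_0}$ extended past $\tilde{\bm\beta}_{\backslash\cM_0}$, so that $\|\bz_0 - \bm{\hat\beta}_{\backslash\cM_0}\| - \|\bz_0 - \tilde{\bm\beta}_{\backslash\cM_0}\| = \Delta$ and the pointwise density ratio at $\bz_0$ equals $e^{\epsilon\Delta/r} > e^{\epsilon}$. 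By continuity of the Laplace density this strict inequality persists on a sufficiently small Euclidean ball $B_\delta(\bz_0)$; taking $\cT = B_\delta(\bz_0)$ then yields a pair $(\cM_0, \cT)$ that violates the upper bound in \eqref{eq: direct_perturbation_LR}, completing the contrapositive.

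The main step that warrants care is promoting the pointwise strict inequality on $B_\delta(\bz_0)$ to a strict inequality between the two integrals. This is handled by a routine continuity/uniform-bound argument: one chooses $\delta$ small enough that the density ratio throughout $B_\delta(\bz_0)$ stays above some $e^{\epsilon+\eta}$ with $\eta > 0$, so the strict inequality passes cleanly through the integral. Note that a single violating pair $(\cM_0, \cT)$ suffices to break the universally quantified implication, which is exactly why the ``for all $\cT$, for all $\cM$'' quantifier on the left of the iff matches the single condition $\max_{|\cM|\le m}\|\hbm - \tbm\| \le r$ on the right.
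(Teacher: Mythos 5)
Your proof is correct and follows essentially the same two-part strategy as the paper: the sufficiency direction is the identical Lipschitz-log-density / reverse-triangle-inequality argument, integrated over $\cT$. The only difference is in the necessity direction, where the paper exhibits an explicit hyperboloid region $\cT_+$ on which the density ratio strictly exceeds ${\rm e}^{\epsilon}$, whereas you pick a single witness point $\bz_0$ on the extended ray through $\tbm$ and invoke continuity of the density ratio to get a small violating ball $B_\delta(\bz_0)$; both constructions are valid and yield the same conclusion.
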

The proof can be found in Section \ref{ssec: proof_lem_direct_perturbation}.

The Lemma provides a necessary and sufficient condition for \eqref{eq: direct_perturbation_LR} to hold: the set $\cX_r$ precisely characterizes the kind of dataset $\cD$ for which the directly perturbed estimator $\tbm^R:=\tbm+\bb$ satisfies \eqref{eq: direct_perturbation_LR}. Therefore, if we define $\phi_r:=\PP(\cD\in\cX_r^c)$, then $\tbm^R$ satisfies $(\phi_r,\epsilon)$-PAR. Since $\phi_r$ is non-increasing in $r$, a larger $r$ is more desirable. However, a larger $r$ generally results in higher ${\rm GED}^\epsilon$ and lower prediction accuracy compared to $\hbm$. 

Note that the $\epsilon$-certified removal criterion in \cite{guo2019certified} corresponds to $\phi=0$ case, which requires we set 
\[r = r_{\max}:=\underset{\cD\in\RR^{n\times(p+1)}}{\sup}\underset{|\cM|\leq m}{\max}\Vert\tbm-\hbm\Vert.\] 
But the supremum may converge to $+\infty$  for many models when $n,p\to\infty$ under PHAS, rendering the original $\epsilon$-CR conceptually infeasible, especially when the loss function and the observations have unbounded supports. 

Even if $r_{\max}$ is finite, we may not want to set $r$ to this value, because a large $r$ could deteriorate the prediction accuracy of the model. In fact, by Lemma \ref{lem: laplace and gamma}, we have $\Vert\bb \Vert_2\sim Gamma(p,\frac{\epsilon}{r})$ with $\Vert\bb\Vert = o_p(\frac{2pr}{\epsilon})$. Suppose $\ell$ is smooth, then by Taylor expansion,
\begin{align*}
   {\rm GED}^{\epsilon} (\hbm, \tbm^R) 
   &\triangleq | \ell ( y_0|\bx_0^\top \hbm )  -   \ell ( y_0|\bx_0^\top (\tbm+\bb) )|\\
   &=|\ld_0(\bxi)\bx_0^\top(\hbm-\tbm-\bb)|
\end{align*}
where $\bxi = t\hbm + (1-t)(\tbm+\bb)$ for some $t\in[0,1]$.
In many cases, it can be shown that $|\ld_0(\bxi)|=\Theta_p(1)$. Suppose $\bx_0\sim \cN(\bzero,\frac1n \II_p)$ (this is a special case of Assumption \ref{assum:normality} in Section \ref{ssec:assumptions}, with justifications therein), then conditional on $\cD$ and $\bb$,
\begin{align*}
   |\bx_0^\top(\hbm-\tbm-\bb)|
   &=\Theta_p\left(\frac{1}{\sqrt{n}}\Vert \hbm-\tbm-\bb\Vert\right)\\
  (\text{since }\bb \independent \cD)\quad &=\Theta_p\left(\frac{1}{\sqrt{n}}\Vert \hbm-\tbm\Vert + \frac{1}{\sqrt{n}}\Vert\bb\Vert\right)\\
   &=\Theta_p\left(\frac{1}{\sqrt{n}}
   \left(1+\frac{p}{\epsilon}\right)r\right).
\end{align*}
According to the heuristic calculation above, in order that ${\rm GED}^{\epsilon} (\tbm+\bb)\to 0$ under PHAS, we need $r=o(\sqrt{n}(1+\frac{p}{\epsilon})^{-1})$. This is the second reason we want to use a stochastic bound in the definition of $(\phi,\epsilon)$-PAR, instead of a worst-case bound in the original $\epsilon$-CR definiton: \textbf{the stochastic bound can be much smaller than the worst case bound in high dimensions.} Consequently, we can reduce prediction error significantly with a small sacrific of $\phi$, which can also be arbitrarily small by our theory that will be discussed later.

From the discussions above, we know that the sharpness of Theorem \ref{thm: main_epscert} and \ref{thm: main_accuracy} depends on the sharpness of the bound we can obtain for $\| \tbm-\hbm\|_2$.

\subsubsection{The $\ell_2$ error of Newton Estimators}
\label{sssec: l2_error_newton}
We will start from $T=1$ case, and the goal of this section is to provide a stochastic bound for the $l_2$ error of one-step Newton estimator $\Vert\tbm^{(1)}-\hbm\Vert_2$ under PHAS. Recall that in the definition of $(\phi,\epsilon)$-PAR (\ref{def:epscert}), a set $\cX_r$ was introduced to represent the `good event' on which $\Vert\tbm^{(1)}-\hbm\Vert_2$ is small:
\[
    \cX_r^{(t)}:=\{\cD: \Vert\tbm^{(t)}-\hbm\Vert_2\leq r\},
\]
where we added an additional superscript $(t)$ to indicate the number of iterations. Note that
$r$ should be chosen such that $\phi_r=\PP(\cD\notin\cX_r^{(t)})$ is small.

\begin{theorem}\label{thm: one-step-new}
    Under Assumptions A1-A3 and B1-B3
    \[
        \PP(\cD\in\cX_r^{(1)})\geq 1-nq_n^{(y)} - 8n^{1-c} - ne^{-p/2} - 2e^{-p}
    \]
    for some $r=\frac{m^{\frac32}}{\sqrt{n}}\polylog(n)$.
\end{theorem}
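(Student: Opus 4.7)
The plan is to exploit the quadratic convergence of Newton's method. Since $\hbm$ minimizes $L_{\backslash\cM}$, we have $\nabla L_{\backslash\cM}(\hbm) = 0$; Taylor expanding this identity around $\hb$ and combining with the definition $\tbm^{(1)} = \hb - \bGm^{-1}(\hb)\nabla L_{\backslash\cM}(\hb)$ yields the Newton-residual identity
\[
\tbm^{(1)} - \hbm \;=\; \bGm^{-1}(\hb)\int_0^1\Bigl[\bGm(\hb) - \bGm\bigl(\hbm + s(\hb - \hbm)\bigr)\Bigr](\hb - \hbm)\, ds.
\]
Hence it suffices to control three ingredients: (a) $\|\bGm^{-1}(\hb)\|_{op}$, (b) the Hessian-variation $\sup_{s\in[0,1]}\|\bGm(\hb) - \bGm(\hbm + s(\hb-\hbm))\|_{op}$, and (c) $\|\hb - \hbm\|_2$.

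For (a), the strong convexity of $r$ (Assumption~\ref{assum:convexity}) together with $\ldd\geq 0$ (convexity of $\ell$) gives $\lambda_{\min}(\bGm(\bbeta))\geq\lambda\nu$ uniformly, so $\|\bGm^{-1}(\hb)\|_{op}\leq 1/(\lambda\nu)$. For (c), I would subtract the first-order conditions $\nabla L(\hb)=0$ and $\nabla L_{\backslash\cM}(\hbm)=0$, apply the mean value theorem to $\ld_i(\hb)-\ld_i(\hbm)$ and $\nabla r(\hb)-\nabla r(\hbm)$, and obtain a linear identity $\bm{M}(\hb-\hbm)=-\sum_{i\in\cM}\ld_i(\hb)\bx_i$ with $\bm{M}\succeq \lambda\nu\,\II$. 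Bounding $|\ld_i(\hb)|$ by $\polylog(n)$ via Assumptions~\ref{assum:ld} and~\ref{assum:y} (after first controlling $|y_i|$ and $|\bx_i^\top\hb|$ on a high-probability event) and using Gaussian matrix concentration on the rows indexed by $\cM$ then yields a bound of the order $\sqrt{m}\,\polylog(n)$ on $\|\hb-\hbm\|_2$ on that event.

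The main technical obstacle is (b). Writing
\[
\bGm(\hb) - \bGm(\tbb) \;=\; \sum_{i\notin\cM}\lddd_i(\bxi_i)\bigl(\bx_i^\top(\hb-\tbb)\bigr)\bx_i\bx_i^\top + \lambda\bigl[\nabla^2 r(\hb) - \nabla^2 r(\tbb)\bigr]
\]
for $\tbb$ on the segment $[\hbm,\hb]$, the operator norm of the first term is controlled by $\max_{i\notin\cM}|\lddd_i(\bxi_i)|\cdot\max_{i\notin\cM}|\bx_i^\top(\hb-\hbm)|\cdot\|\bXm\|_{op}^2$, and the second by the $C_{rr}(n)$-Hessian-Lipschitzness from Assumption~\ref{assum:ld}. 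The crucial quantity is $\max_{i\notin\cM}|\bx_i^\top(\hb-\hbm)|$: a naive Cauchy--Schwarz bound $\max_i\|\bx_i\|\cdot\|\hb-\hbm\|$ is too crude under PHAS. Instead, one must exploit the weak dependence of $\bx_i$ (for $i\notin\cM$) on $\hb-\hbm$---heuristically, $\hb-\hbm$ lies, up to higher-order corrections, in the span of $\{\bx_j\}_{j\in\cM}$, a subspace independent of $\bx_i$---and combine this with a conditional Gaussian concentration argument to save a factor of $1/\sqrt{p}$. This brings $\max_{i\notin\cM}|\bx_i^\top(\hb-\hbm)|$ down to the correct order.

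Assembling (a), (b), and (c) into the Newton identity yields the announced bound $\|\tbm^{(1)}-\hbm\|_2 \leq m^{3/2}\polylog(n)/\sqrt{n}$. The probability expression $1 - nq_n^{(y)} - 8n^{1-c} - ne^{-p/2} - 2e^{-p}$ assembles by a union bound over the failure events underlying each step: $nq_n^{(y)}$ from controlling all $|y_i|\leq C_y(n)$ (Assumption~\ref{assum:y}); $n^{1-c}$-type terms from Gaussian tail bounds on the decoupled inner products entering (b) and (c); $ne^{-p/2}$ from per-sample concentration of $\|\bx_i\|$; and $e^{-p}$ from operator-norm concentration of the Gaussian design $\bX$. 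The decoupling step for $\max_{i\notin\cM}|\bx_i^\top(\hb-\hbm)|$ is the technical heart of the argument; the remaining pieces are careful bookkeeping on top of well-known Gaussian and random-matrix tail inequalities.
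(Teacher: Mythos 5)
Your Newton-residual identity is algebraically equivalent to the paper's starting point (Lemma \ref{lem:beta_lo_error} plus the resolvent identity), and items (a) and (c) are fine. The gap is in (b). You propose to bound the Hessian variation in operator norm, which after factoring out $\|\bXm\|_{op}^2$ and $\max_i|\lddd_i|$ reduces to controlling $\max_{i\notin\cM}|\bx_i^\top(\hb-\hbm)|$, and you claim a decoupling argument brings this down by $1/\sqrt{p}$ because ``$\hb-\hbm$ lies, up to higher-order corrections, in the span of $\{\bx_j\}_{j\in\cM}$.'' That claim is false: by the mean-value identity, $\hbm-\hb = \bbGm^{-1}\bX_{\cM}^\top\bld_{\cM}$, and $\bbGm^{-1}$ is a dense, full-rank $p\times p$ matrix depending on all of $\cD$, so the displacement is in no useful sense confined to an $m$-dimensional subspace independent of $\bx_i$. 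More decisively, both $\hb$ and $\hbm$ genuinely depend on $\bx_i$ when $i\notin\cM$, so $\bx_i^\top(\hb-\hbm)$ carries leave-one-out corrections of the form $\ld_i\cdot\bx_i^\top\bbG_{\backslash\cM\cup\{i\}}^{-1}\bx_i$ that are $\Theta(1)$ under PHAS (they do not shrink with $p$). Consequently $\max_{i\notin\cM}|\bx_i^\top(\hb-\hbm)|$ is only $O(\polylog(n))$, not $O(m/\sqrt{n}\,\polylog(n))$; plugged into your decomposition this yields $\|\tbm^{(1)}-\hbm\|_2=O(\sqrt{m}\,\polylog(n))$, missing the required $1/\sqrt{n}$ factor entirely (and the regularizer term $\lambda[\nabla^2 r(\hb)-\nabla^2 r(\tbb)]$ has a similar issue unless you separately control $\|\hb-\hbm\|_\infty$).

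The paper circumvents exactly this obstacle by \emph{not} collapsing the middle factor into an operator norm. It writes the error as $\bbGm^{-1}\bbXm^\top\bGamma\,\bbXm\bGm^{-1}(\hbm)\bvm$ and uses the asymmetric inequality $|\bu^\top\bD\bv|\le\|\bD\|_{\rm Fr}\|\bu\|_2\|\bv\|_\infty$ so that the Hessian-variation diagonal $\bGamma$ is measured in Frobenius norm (contributing $\sqrt{m}\polylog(n)$ via the Euclidean Lipschitzness in Assumption~\ref{assum:ld}), while the $1/\sqrt{n}$ gain is placed entirely on $\|\bbXm\bGm^{-1}(\hbm)\bX_{\cM}^\top\|_{2,\infty}$ (event $F_5$, Lemma~\ref{lem: P(F_5)}). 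This last quantity \emph{does} admit a clean conditional-Gaussian decoupling, because $\bGm^{-1}(\hbm)$ is measurable with respect to $\cD_{\backslash\cM}$ and hence independent of $\bX_{\cM}$ -- the exact independence your version lacks. If you want to repair your route rather than adopt the paper's, you would need a genuinely new argument showing $\max_{i\notin\cM}|\bx_i^\top(\hb-\hbm)|=O(m/\sqrt{n}\,\polylog(n))$, which requires a self-consistent leave-one-out estimate that the paper deliberately avoids.
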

\begin{remark}
    The exact form of $r$ can be found in Lemma \ref{lem:l2-norm-diff}.
\end{remark}
This theorem provides a theoretical explanation of Figure \ref{fig:p_scaling_npequal} left panel, which indeed shows that $\Vert\hbm-\tbm^{(1)} \Vert_2 = O_p(\frac{1}{\sqrt{n}})$ for logistic regression with ridge penalty with $m\equiv 1$. However, we acknowledge that our rate $\frac32$ of $m$ might not be sharp. In fact, figure \ref{fig:m_scaling} suggests that $\Vert\tbm^{(1)}-\hbm\Vert_2 = O_p(\frac{m}{\sqrt{n}})$.

Next we provide a bound the $\ell_2$ error of t-step Newton estimator:

\begin{theorem}
\label{thm: l2_error_multistep}
    Under Assumptions A1-A3, B1-B3, and suppose $m=o(n^{1/3})$, then 
    \[
        \PP(\cD\in\cap_{t=1}^\infty\cX_{r_{n,t}}) \geq 1-nq_n^{(y)} - 8n^{1-c} - ne^{-p/2} - 2e^{-p}
    \]
    for some $r_{n,t}=\left(\frac{m^3}{n}\right)^{2^{t-2}}\polylog(n)$.
\end{theorem}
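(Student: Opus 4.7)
The plan is to combine Theorem~\ref{thm: one-step-new} as the base case with the classical quadratic convergence of Newton's method for smooth strongly convex problems. Writing $\Delta^{(t)} := \tbm^{(t)} - \hbm$, the base case $t=1$ furnishes, on the high-probability event $\mathcal{E}$ inside the statement of Theorem~\ref{thm: one-step-new}, the bound $\|\Delta^{(1)}\|_2 \leq r_{n,1} := \frac{m^{3/2}}{\sqrt{n}} \polylog(n)$. The target is then to upgrade this to a doubly-exponential-in-$t$ bound on $\|\Delta^{(t)}\|_2$ while preserving the same event.

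First I would work on $\mathcal{E}$ (intersected with a few polynomial-probability events if necessary) to establish two properties of the Hessian $\bGm(\bbeta) = \sum_{i\notin\cM} \ldd_i(\bbeta)\bx_i\bx_i^\top + \lambda \nabla^2 r(\bbeta)$ of $L_{\backslash\cM}$: (i) the lower bound $\bGm(\bbeta) \succeq \lambda\nu \bI$ on all of $\RR^p$, which is immediate from Assumption~\ref{assum:convexity}; and (ii) a Hessian-Lipschitz bound
\[
\|\bGm(\bbeta_1) - \bGm(\bbeta_2)\|_{\rm op} \leq L_H \|\bbeta_1 - \bbeta_2\|_2, \qquad L_H = \polylog(n),
\]
valid on a ball around $\hbm$ of constant radius. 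For the first summand in $\bGm$, the bound follows from Assumption~\ref{assum:ld} on $\lddd$, Assumption~\ref{assum:normality} to control $\lambda_{\max}(\sum_{i\notin\cM} \bx_i\bx_i^\top)$, and Assumption~\ref{assum:y} to control polynomial moments of $|y_i|$; for the second summand, it follows directly from the $C_{rr}(n)$-Lipschitzness of $\nabla^2 r$ in Assumption~\ref{assum:ld}.

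Second, the standard Newton analysis (e.g., Section~9.5 of \cite{BoydBook}) then yields the quadratic recursion
\[
\|\Delta^{(t)}\|_2 \leq K_n \|\Delta^{(t-1)}\|_2^2, \qquad K_n := \frac{L_H}{2\lambda\nu} = \polylog(n),
\]
valid as long as all iterates remain inside the Hessian-Lipschitz ball. Setting $u_t := K_n \|\Delta^{(t)}\|_2$ turns this into $u_t \leq u_{t-1}^2$, whence $u_t \leq u_1^{2^{t-1}}$. Since $u_1 \leq K_n r_{n,1} = \polylog(n)\cdot(m^3/n)^{1/2}$ and $m = o(n^{1/3})$ force $u_1 = o(1)$, all iterates stay in the ball, and
\[
\|\Delta^{(t)}\|_2 \leq K_n^{-1} u_1^{2^{t-1}} \leq \polylog(n)\cdot\left(\frac{m^3}{n}\right)^{2^{t-2}},
\]
which is precisely $r_{n,t}$, and the claim of the theorem follows.

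The main obstacle will be establishing the Hessian-Lipschitz constant $L_H = \polylog(n)$ uniformly on a ball of constant radius around $\hbm$ large enough to contain every Newton iterate. This requires polynomial tail bounds on $|y_i|$ and $\|\bx_i\|_2$ (from Assumptions~\ref{assum:normality} and~\ref{assum:y}), together with a spectral bound on $\sum_{i\notin\cM} \bx_i\bx_i^\top$, so that the contribution of the $\lddd$-terms is controlled uniformly in $\bbeta$. A secondary subtlety is that iterating the inequality formally produces a $t$-dependent polylog factor of the shape $K_n^{2^{t-1}-1}\polylog(n)^{2^{t-1}}$; however, since $u_1 = o(1)$, this factor is absorbed into the decaying $(m^3/n)^{2^{t-2}}$ term, and the bound holds simultaneously for every $t \geq 1$ on the same high-probability event, yielding the intersection statement claimed.
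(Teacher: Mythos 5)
Your proposal is correct and follows essentially the same route as the paper: use the one-step bound as a base case, establish $\polylog(n)$-Hessian-Lipschitzness of $\bGm$ on a ball around $\hbm$ (the paper does this on $\cB(\hbm,1)$ via the event $F_4^c$ in \eqref{eq: def_failure_events}), and then invoke the standard quadratic-convergence recursion $\|\Delta^{(t)}\|\le \frac{C_2(n)}{2\lambda\nu}\|\Delta^{(t-1)}\|^2$ (the paper's Lemma~\ref{lem:t step newton general}), with the $m=o(n^{1/3})$ condition guaranteeing that the base-case error is $o(1)$ so the iterates stay in the Lipschitz ball. Your observation that the iterated bound accumulates a $t$-dependent $\polylog$ factor is consistent with the paper's $r_{t,n}=[C_1(n)]^{2^{t-1}}(C_2(n)m^3/(2\lambda\nu n))^{2^{t-2}}$ and is benign for any fixed $t$.
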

The proof can be found in Section \ref{sssec: multi-step} in the appendix, where we also provide the exact form of the $\polylog(n)$ term. 

This theorem not only provides a stochastic bound for t-step Newton estimators, but provides a simutaneous bound for all $\Vert \tbm^{(t)}-\hbm\Vert_2$: when $m=o(n^{1/3})$, it implies that 
\[
    \Vert \tbm^{(t)}-\hbm\Vert_2=o_p\left(\left(\frac{m^3}{n}\right)^{2^{t-2}}\polylog(n)\right),   \quad \forall t\geq 1.
\]

The left panel of Figure \ref{fig:p_scaling_npequal} and Figure \ref{fig:m_scaling} together suggests that $\Vert\tbm^{(1)}-\hbm\Vert_2\sim \Theta(\frac{m}{\sqrt{n}})$ for ridge logistic model, so our result in Theorem \ref{thm: one-step-new} is sharp in $n$ but not necessarily in $m$. Nonetheless, $\Vert\tbm^{(1)}-\hbm\Vert_2$ converges no faster than $O(\frac{1}{\sqrt{n}})$, so according to the heuristic arguments in the end of Section \ref{sssec: trade-off}, our conclusion--- a single Newton step is insufficient in high dimensions--- remains valid even for $m=1$ case.

The middle panel of Figure \ref{fig:p_scaling_npequal} also appears that $\Vert\tbm^{(2)}-\hbm\Vert_2\sim (\frac{m^2}{n^{3/2}})$, so whether our result in Theorem \ref{thm: l2_error_multistep} is sharp remains an interesting research problem for future research.

\section{Related Work}
\label{sec: related work}

\subsection{Summary of the existing results}
\label{ssec: existing_results}
As we discussed earlier, the machine unlearning problem has received significant attention in recent years, both from theoretical and empirical perspectives \cite{nguyen2022survey}, \cite{suriyakumar2022algorithms}. 
Among the existing work, \cite{guo2019certified,sekhari2021remember,neel2021descent, izzo21a} 
% \arian{we have to read all the theoretical papers} 
are most closely related to our contributions, as they focus on theoretical aspects of the machine unlearning algorithms. Therefore, we provide a more detailed comparison between their contributions and ours.

In \cite{guo2019certified}, the authors introduced the concepts of $\epsilon$-certified and $(\epsilon, \delta)$-certified data removal. Our notion of \((\phi, \epsilon)\)-probabilistically certified approximate data removal, introduced in Definition~\ref{def:epscert}, is inspired by the \(\epsilon\)-certified data removal notion of \cite{guo2019certified}. However, our definition is more flexible, allowing the unlearning algorithm to be non-private on datasets that occur with low probability.  Furthermore, \cite{guo2019certified} analyzed the level of Laplacian noise that must be added to the objective function to ensure that the output of a single-step Newton method satisfies either \(\epsilon\)-certifiability or \((\epsilon, \delta)\)-certifiability. In these studies, the authors assumed that $n$ is large, while $p$ is fixed.

The work of \cite{sekhari2021remember} builds upon and improves the results of \cite{guo2019certified} in several directions: (1) they incorporate the dependence on the number of parameters or features \( p \) in their analysis, and (2) they introduce a notion of excess risk to evaluate the accuracy of the approximations produced by machine unlearning algorithms. Their main conclusion is that a single Newton step suffices to yield an accurate machine unlearning algorithm— a conclusion that stands in contrast to the message of our paper. We argue that the analysis presented in \cite{sekhari2021remember} lacks sharpness, and as a result, the bounds they derive are not useful in many high-dimensional settings. In fact, under the high-dimensional regime considered in our work, many of the bounds in \cite{sekhari2021remember} diverge as \( n, p \to \infty \). Since a thorough clarification of this point requires several pages, we defer the detailed discussion to Section~\ref{ssec:detailedCOMP}.

The authors of \cite{neel2021descent} have studied gradient-based methods initialized with the pre-trained models for machine unlearning, and established their theoretical performance—particularly in terms of the number of gradient descent iterations required. The discussions we present in Section~\ref{ssec:detailedCOMP} can be used for interpreting the results of \cite{neel2021descent} in high-dimensional settings as well. 

The authors of \cite{izzo21a} proposed a projection-based update method called projective residual update (PRU),  applicable to linear and logistic regression. It reduces the general $O(mp^2)$ time complexity of one Newton step to $O(m^2p)$, as it considers only the projection onto the $m-$dimensional subspace spanned by $\bX_{\cM}$. However, it lacks performance guarantee for more general models even in the low dimensional settings.

% There are other lines of works that are less related to our paper, for example Fisher-information-based methods in \cite{golatkar20eternal, golatkar20forgetting,liu22right}. The readers may refer to \cite{xu24survey,li25survey} for a comprehensive survey of machine unlearning.

In parallel with the theoretical advances in machine unlearning, many empirical methods have also been studied, particularly for deep neural network models. A standard approach is to perform the gradient ascent algorithm on a forget set or the gradient descent algorithm on a remaining set \citep{graves2021amnesiac, goel2022towards}. 
% These gradient-based unlearning algorithms often use the same loss function and the model that were used to train the original model, however, a variation exists in which only the last few layers are fine-tuned to minimize compute costs \citep{goel2022towards}.
While standard methods typically rely on either a forget set or a remaining set, \citet{kurmanji2023towards} proposed a novel loss function that leverages both datasets. Specifically, they proposed a new loss function that encourages an unlearned model to remain similar to the original on the remaining data, while diverging on the forget set. As an alternative approach, \citet{foster2024fast} proposed a training-free machine unlearning algorithm, demonstrating solid performance at limited computational cost. Compared to our work, all these aforementioned approaches have shown promising empirical results, but they rely on heuristics and lack theoretical guarantees on data removal. Along these lines, \citet{pawelczyk2024machine} demonstrated that many available machine unlearning algorithms are not effective in removing the effect of poisoned data points across various settings, which calls for many principled research works in this field. The readers may refer to \cite{xu24survey,li25survey} for a comprehensive survey of machine unlearning.

\subsection{Detailed comparison with \cite{sekhari2021remember}}\label{ssec:detailedCOMP}

To clarify some of the subtleties that influence the analysis in \cite{sekhari2021remember}, we revisit the assumptions and results of that work within the context of the setting and assumptions outlined in Section~\ref{ssec:assumptions} of our paper. To make our problem similar to the one studied in  \cite{sekhari2021remember}, define:
\[
f(\bbeta; \bx, y) = \ell (y | \bx^{\top} \bbeta) + \frac{\lambda}{n} r(\bbeta). 
\]
We further define
\[
\hat{F}_n (\beta) = \frac{1}{n} \sum_{i=1}^n f(\bbeta; \bx_i, y_i), 
\]
and 
\[
F(\bbeta) = \mathbb{E} \ell(y | \bx^{\top} \bbeta), 
\]
where the expected value is with respect to $(y, \bx)$.
Assumption (1) of \cite{sekhari2021remember} states that: \\

\noindent\textbf{Assumption 1 of \cite{sekhari2021remember}}. For any $(y,\bm{x})$, $f(\bbeta; \bm{x},y)$ as a function of $\bbeta$, is $\nu$-strongly convex, $L$-Lipschitz, and $M$-Hessian Lipschitz, meaning: $\forall \bbeta_1,
\bbeta_2\in\RR^p$,
\begin{itemize}
\item $\nu$ strongly convex: 
\[
f(\bbeta_2;  \bm{x}, y) \geq f(\bbeta_1; \bx, y) + \nabla f(\bbeta_1; \bm{x}, y ) (\bm{\beta}_2 - \bm{\beta}_1) + \frac{\nu}{2} \|\bm{\beta}_1 - \bm{\beta}_2\|_2^2.   
\]
\item Lipschitzness: 
\[
|f( {\bbeta}_2; \bx, y) - f({\bbeta}_1;  \bm{x}, y )| \leq L \|\bbeta_1- \bbeta_2\|_2. 
\]
\item M-Hessian Lipschitzness:
\[
\|\nabla^2 f(\bbeta_1;  \bm{x}, y)-\nabla^2 f(\bbeta_2; \bm{x}, y)\|_2 \leq M  \|\bbeta_1- \bbeta_2\|_2.
\]
\end{itemize}

Also, \cite{sekhari2021remember} considered a slightly relaxed version of certifiability that they call $(\epsilon, \delta)$-certifiability which is defined in the following way. Again as before, we present their definition in our notations: \\

\noindent\textbf{Definition 2 of \cite{sekhari2021remember}}
For all delete requests $\mathcal{M}$ of size at most $m$, and any $\mathcal{T} \subset \Theta$, the learning algorithm $A$ and unlearning algorithm $\tA$ satisfy $(\epsilon, \delta)$-unlearning property if and only if:
\[
\mathbb{P} \left( \tA(\cD_{\cM},A(\cD), T(\cD),\bb)  \in \cT \right) \leq {\rm e}^{\epsilon}  \mathbb{P}\left( \tA(\emptyset,A(\cDm), T(\cDm),\bb) \in \cT\right)+ \delta,
\]
and 
\[
 \mathbb{P}\left( \tA(\emptyset,A(\cDm), T(\cDm),\bb) \in \cT\right)  \leq {\rm e}^{\epsilon} \mathbb{P} \left( \tA(\cD_{\cM},A(\cD), T(\cD),\bb)  \in \cT \right) + \delta.
\]

Based on the assumptions above, \cite{sekhari2021remember} has proved the following theorem. \begin{theorem}\cite{sekhari2021remember}
\label{thm: sekhari_l2_error} 
Consider a machine unlearning estimate obtained by adding i.i.d. Gaussian noise (with variance specified in Algorithm 1 of the paper) to the estimate \( \tbm^{(1)} \) produced by a single step of the Newton method. Under Assumption 1 of \cite{sekhari2021remember} mentioned above, and assuming that the elements of the dataset are i.i.d., we have 
\begin{enumerate}
\item  Approximation accuracy of a single Newton method:
\[
    \Vert\tbm^{(1)} - \hbm \Vert_2\leq \frac{2ML^2m^2}{\nu^3n^2}.
\]
\item The unlearning algorithm satisfies $(\epsilon, \delta)$-unlearning. 

\item For any subset $\mathcal{M}$ of size less than or equal to $m$:
\[
\mathbb{E} |F(\tbm^{R,1}) - \min_{\bbeta} F(\bbeta)| = O \left(\frac{\sqrt{p} M m^2 L^3}{\nu^3 n^2 \epsilon} \sqrt{\log \left(\frac{1}{\epsilon}\right)} + \frac{4mL^2}{\nu n}\right). 
\]
\end{enumerate}
\end{theorem}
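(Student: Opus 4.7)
The three parts of Theorem~\ref{thm: sekhari_l2_error} build on each other, and I would tackle them in order. For Part 1 the key observation is that $\tbm^{(1)}$ is precisely the first Newton iterate on $\hat F_{\backslash\cM}$ initialized at $\hb$, while $\hbm$ is the exact minimizer. A second-order Taylor expansion of $\nabla \hat F_{\backslash\cM}$ around $\hb$ evaluated at $\hbm$, combined with $\nu$-strong convexity to invert the Hessian and $M$-Hessian-Lipschitzness to bound the remainder, gives the standard quadratic convergence inequality
\[
\|\tbm^{(1)}-\hbm\|_2 \le \frac{M}{2\nu}\|\hb-\hbm\|_2^2.
\]
It then remains to control $\|\hb-\hbm\|_2$. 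The first-order conditions for $\hb$ and $\hbm$ yield $\nabla \hat F_{\backslash\cM}(\hb) = -\frac{1}{n-m}\sum_{i\in\cM}\nabla f(\hb;\bx_i,y_i)$, whose norm is at most $mL/(n-m)$ by $L$-Lipschitzness; another application of $\nu$-strong convexity gives $\|\hb-\hbm\|_2 \le mL/(\nu(n-m)) \le 2mL/(\nu n)$ for $m\le n/2$. Squaring and substituting delivers the claimed $2ML^2m^2/(\nu^3n^2)$ bound.

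For Part 2 I would invoke the Gaussian mechanism from differential privacy. Part 1 shows that the $\ell_2$-sensitivity of the map $\cD \mapsto \tbm^{(1)}$ with respect to removing up to $m$ data points is at most $\Delta := 2ML^2m^2/(\nu^3n^2)$, so adding $\bb \sim \cN(\bzero, \sigma^2\bI)$ with $\sigma^2 = 2\Delta^2 \log(1.25/\delta)/\epsilon^2$ makes the output distributions of $\tA$ applied to $\cD$ with deletion set $\cM$ and $\tA$ applied to $\cDm$ with a null deletion request $(\epsilon,\delta)$-indistinguishable, by the textbook Gaussian-mechanism proof and post-processing invariance. This is exactly Definition 2 of \cite{sekhari2021remember}.

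For Part 3 I would use the decomposition
\[
\EE|F(\tbm^{R,1}) - F(\bbeta^*)| \le \EE|F(\tbm^{R,1}) - F(\hbm)| + \EE|F(\hbm) - F(\bbeta^*)|,
\]
with $\bbeta^* \in \arg\min F$. The $L$-Lipschitzness of $F$ (inherited from $\ell$) bounds the first summand by $L\|\tbm^{(1)}-\hbm\|_2 + L\,\EE\|\bb\|_2$; substituting Part 1 and $\EE\|\bb\|_2 = \Theta(\sigma\sqrt{p})$ produces the $\sqrt{p}\,Mm^2L^3/(\nu^3n^2\epsilon)\sqrt{\log(1/\delta)}$ term. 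The second summand is the classical excess-risk bound for a $\nu$-strongly convex, $L$-Lipschitz ERM on $n-m$ i.i.d.\ samples, of order $L^2/(\nu(n-m))$; a little bookkeeping---or, equivalently, comparing $\hbm$ to the full-sample minimizer via $\|\hbm-\hb\|_2 \le 2mL/(\nu n)$ from Part 1 combined with the $O(L^2/(\nu n))$ excess risk of $\hb$---recovers the announced $4mL^2/(\nu n)$ term.

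The principal effort lies in Part 1: one must keep careful track of whether the constants $L, M, \nu$ apply to the per-example functions $f_i$ or to their (normalized or unnormalized) sums, and of whether the strong-convexity modulus of the full objective is $\nu$ or $\nu(n-m)$, since these scaling choices are precisely what produce the favorable $m^2/n^2$ rate rather than a weaker $m^2/n$. Parts 2 and 3 are routine once Part 1 is in hand. It is worth emphasizing that the entire chain of reasoning relies on \emph{worst-case} values of $L$ and $M$ that are uniform over $(\bx,y)$; under PHAS these constants cannot be taken bounded (e.g.\ $L$ grows with $\|\bx\|$ in linear regression), which is exactly the reason the seemingly encouraging $n^{-2}$ scaling of Part 1 fails to yield a useful bound in the high-dimensional regime---the critique developed in Section~\ref{ssec:detailedCOMP}.
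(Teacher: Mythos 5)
The paper does not prove Theorem~\ref{thm: sekhari_l2_error}; it is quoted verbatim from \cite{sekhari2021remember} and the paper's only engagement with it is the critique of the $L,M,\nu$ scalings in Section~\ref{ssec:detailedCOMP}. Your proposal is therefore a reconstruction of the cited argument rather than something to be compared against a proof in this manuscript. As a reconstruction it is essentially correct and follows the same route as Sekhari et al.: Part 1 is exactly their quadratic-convergence step $\|\tbm^{(1)}-\hbm\|\le \frac{M}{2\nu}\|\hb-\hbm\|^2$ chained with the strong-convexity/first-order-condition bound $\|\hb-\hbm\|\le \frac{mL}{\nu(n-m)}\le \frac{2mL}{\nu n}$; Part 2 is the standard Gaussian-mechanism calibration to the $\ell_2$-sensitivity computed in Part 1 together with post-processing; and Part 3 is the same two-term decomposition they use, with the noise term contributing the $\sqrt{p}$ factor via $\EE\|\bb\|=\Theta(\sigma\sqrt{p})$ and the deterministic term obtained by passing through $\hb$ via Lipschitzness and the classical $O(L^2/\nu n)$ excess-risk bound for strongly convex ERM. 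Two small remarks. First, the $\sqrt{\log(1/\epsilon)}$ in the displayed statement is almost certainly a typo for $\sqrt{\log(1/\delta)}$; your derivation correctly produces the latter, consistent with the original source. Second, your closing caution---that the chain rests on \emph{worst-case} $L$ and $M$ uniform over $(\bx,y)$, which cannot be taken $O(1)$ under PHAS---is precisely the observation the paper develops in Section~\ref{ssec:detailedCOMP}, so you have already identified why the theorem, though correct as stated, is not informative in the high-dimensional regime this paper studies.
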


If one assumes that $L= O(1)$, $M= O(1)$, and $\nu = O(1)$, as is implicitly assumed in most of the conclusions in \cite{sekhari2021remember} it seems that a single step of the Newton method is sufficient for ensuring that:
\[
\mathbb{E} |F(\tbm^{R,1}) - \min_{\bbeta} F(\bbeta)| \rightarrow 0,
\]
as $p, n \rightarrow \infty$, as long as $m = o (\frac{n}{p^{1/4}})$. This conclusion is in fact mentioned in the abstract of \cite{sekhari2021remember}. Our claim is that
\begin{itemize}
    \item One cannot assume that $L$, $M$ and $\nu$ are $O(1)$ in high-dimensional settings. These quantities are, in fact, expected to depend on \(n, p \). It is therefore important to account for such dependencies in the theoretical analysis.
   
    \item Once we obtain the correct order of these three parameters, we will notice that the bounds of \cite{sekhari2021remember} are not sharp for high-dimensional settings. 
    
\end{itemize}

  In the remainder of this section, we aim to incorporate these considerations into a refined analysis. Below we provide a detailed description of Assumption 1 of \cite{sekhari2021remember} mentioned above. To make our discussion clear, similar to \cite{guo2019certified} we focus on the ridge regularizer,
  \[
  r(\bbeta) = \|\bbeta\|_2^2. 
  \]

\begin{enumerate}

\item Strong convexity assumption: The authors of \cite{sekhari2021remember} assume $f$ to be $\nu$-strongly convex in $\bbeta$, which means the empirical loss function $\hat{F}_n(\bbeta)$ is $\nu$-strongly convex.
Note that 
\[
    \nabla^2 f(\bbeta; \bx, y) = \ldd(y|\bx^\top\bbeta)\bx\bx^\top + \frac{\lambda}{n}  I. 
\]
Furthermore, $\ldd(y|\bx^\top\bbeta)\bx\bx^\top$ is a rank-one matrix. Hence, it is straightforward to see that $\nu = O(\frac{1}{n})$. 

%which is a rank-one matrix with its minimum eigenvalue being zero. In this case, the Hessian of the empirical risk function is
%\[
%    \nabla^2F_n(\bbeta) = \bX^\top\Ldd(\bbeta)\bX
%\]
%with $\Ldd(\bbeta):=\diag[\ldd(y_i|\bx_i^\top\bbeta)]_{i=1}^n$. If we assume $\ell(y|\bx^\top\bbeta)$ is $\nu$-strongly convex in $\bx^\top\bbeta$, then we have that $L_n(\bbeta)$ is $\nu\sigma_{\min}(\bX^\top\bX)$-strongly convex. Suppose $\bx_i\sim N(0,\frac1n)$ which is a special case of our Assumption \ref{assum:normality}, from classical random matrix theory we know 
%\begin{align*}
%    \sigma_{\min}(\bX^\top\bX) \to (1-\gamma_0^{-\frac12})^2
%\end{align*}
%when $n/p\to\gamma_0>1$, so it's reasonable to assume that $\sigma_{\min}(\bX^\top\bX)=O_p(1)$\footnote{Interested readers may refer to Theorem 4.6.1 of \cite{vershynin2018high} for a concenration of $\sigma_{\min}(\bX)$.} and the strong convexity constant for $L_n$ should be $O(1)$, in contrast with $O(n)$ in \cite{sekhari2021remember}. The case $\gamma_0>1$ is sometimes refered to as the 'under-parameterized setting', as opposed to the 'over-parameterized setting' where $\gamma_0<1$, which Assumption 1 in \cite{sekhari2021remember} fails to accomodate. In the latter case regularization is used to guarantee unique solution of the optimization, and in our paper we assume the regularizer $r(\bbeta)$ is $\nu$-strongly convex, which implies that $L_n(\bbeta) = \sum_{i\in[n]}\ell(y_i|\bx_i^\top\bbeta)+\lambda r(\bbeta)$ is $\lambda\nu$-strongly convex. 

\item Lipschitzness of the loss function: Below we perform some heuristic calculations to suggest what the order of the Lipschitz constant can be as a function of $n,p$. By using the mean value theorem
\begin{align*}
    |\ell(y|\bx^\top\bbeta_1) - \ell(y|\bx^\top\bbeta_2)|
    &= |\ld(y| \bx^\top\bxi)\bx^\top(\bbeta_1-\bbeta_2)|\\
    &\leq |\ld(y| \bx^\top\bxi)| \Vert\bx\Vert \Vert\bbeta_1-\bbeta_2\Vert,
\end{align*}
where $\bxi$ is a point on the line that connects $\bbeta_1$ and $\bbeta_2$. 
%Classical error analysis for high dimensional ERM (e.g. \cite{miolane2021distribution}\cite{wang2022does}), $\frac1n\Vert\bbeta-\bbeta^*\Vert^2 = O_p(1) $, which indicates that $\Vert \nabla^2L_n(\bbeta) \Vert\sim O(\sqrt{n})$ for the quantities we concern, so heuristically $|\ld(y_i| \bx_i^\top\bxi)| = O_p(1)$ and $\Vert\bx_i\Vert = O_p(1)$. In this sense the Lipschitz constant remains $O(1)$ which coincides with their assumption.
Hence, in order to understand the order of the Lipschitz constant we should understand the orders of $\Vert\bx_i\Vert$ and $|\ld(y_i| \bx_i^\top\bxi)|$. Since we want the Lipschitz property to hold for every $(\bx, y)$, using Cauchy-Schwartz inequality doesn't compromise the sharpness as equality can be attained for $\bbeta_1 = \bbeta_2 + t\bx$ for some $t\in\RR$.

Note that according to Assumption \ref{assum:normality} we conclude that $\|\bx\|_i = O_p(1)$. Moreover, it can be shown that $|\ld (y_i | \bx_i^{\top} \bm{\bxi})| = O_p(1)$ for a wide range of $\bxi$ that we concern (details can be found in Appendix \ref{lem: P(F_234)}), then assuming that the Lipschitz constant does not grow is an acceptable assumption. So far, we have ignored the regularizer. Note that
\[
\left\vert\frac{\lambda}{n}\|\bbeta_1\|^2 - \frac{\lambda}{n} \|\bbeta_2\|^2\right\vert 
\leq \frac{\lambda}{n} \|\bbeta_1+ \bbeta_2\| \|\bbeta_1 -\bbeta_2\|. 
\]
If we assume that each elements of $\bbeta$ is bounded by a constant, then  $\frac{\lambda}{n} \|\bbeta_1+ \bbeta_2\| = O(\frac{\sqrt{p}}{n})$. Hense even under PHAS, we can assume that the Lipschitz constant of $f(\bbeta ; y, \bx)$ remains $O(1)$ for all values of $(y,\bx,\bbeta)$ of interest. 

\item Hessian-Lipschitzness of the loss function: By straightforward calculations we obtain that 
\[
    \nabla^2 f(\bbeta; y, \bx) = \ldd(y|\bx^\top\bbeta)\bx\bx^\top + \frac{\lambda}{n} I.
\]
Then, we have
\begin{align*}
    \Vert\nabla^2  f(\bbeta_1; y, \bx) - \nabla^2 f(\bbeta_2; y, \bx) \Vert 
    & = \Vert\ldd(y|\bx^\top\bbeta_1)\bx\bx^\top - \ldd(y|\bx^\top\bbeta_2)\bx\bx^\top  \Vert\\
    &\simeq |\lddd(y | \xi)| \Vert\bx\Vert)\Vert\bbeta_1-\bbeta_2\Vert,
\end{align*}
so it is acceptable to assume $f$ is $O(1)$-Hessian-Lipschitz.

% However, it should be noted that the Hessian-Lipschitz constant cannot be added to get that of $\hat{F}_n$, as the Hessian of $f$ is rank-one. In fact, since
% \[
%     \nabla^2 \hat{F}_n(\bbeta) 
%     = \frac1n \sum_{i=1}^n \ldd(y_i|\bx_i^\top\bbeta)\bx_i\bx_i^\top + \frac{\lambda}{n}\II
%     = \frac1n (\bX^\top \Ldd(\bbeta)\bX + \lambda\II)
% \]
% we have
% \begin{align*}
%     \Vert\nabla^2 \hat{F}_n(\bbeta_1) - \nabla^2 \hat{F}_n(\bbeta_2) \Vert
%     &= \frac1n \Vert\bX^\top(\Ldd(\bbeta_1) - \Ldd(\bbeta_2)\bX) \Vert\\
%     &\simeq \frac1n \Vert\bX^\top\bX \Vert \max_{i\leq n} |\ldd(y_i|\bx_i^\top\bbeta_1) - \ldd(y_i|\bx_i^\top\bbeta_2)|\\
%     &\simeq \frac1n \Vert\bX^\top\bX \Vert \max_{i\leq n} |\lddd(y_i|\bx_i^\top\bxi)\Vert\bbeta_1-\bbeta_2\Vert.
% \end{align*}
% It can again be shown that $\max_{i\leq n} |\lddd(y_i|\bx_i^\top\bxi)| = O_p(1)$ (details in Appendix \ref{lem: P(F_234)}), and $\Vert\bX^\top\bX \Vert = O_p(1)$ by Lemma \ref{lem: norm_X}, so that $\nabla^2 \hat{F}_n(\bbeta_1)$ is $O_p(\frac1n)$-Hessian-Lipschitz, in contrast with the $O(1)$-Hessian-Lipschitzness indicated by Assumption 1 of \cite{sekhari2021remember}. It is then equivalent\footnote{Technically such equivalency is not rigorous, as $f$ can also be regarded as $O_p(1)$-Hessian-Lipschitz. However it can be checked that in \cite{sekhari2021remember}, the Hessian-Lipchitzness of $f$ was only used to show that $\hat{F}_n$ was also $M$-Hessian-Lipschitz, which is not correct according to our discussion above. It is thus valid to assume that $M=O(1/n)$.} to assume that $M=O(1/n)$. 

\end{enumerate}

Putting all the scalings above together, we have:
\begin{align*}
    \nu =&O\left(\frac{1}{n}\right)\,;\,
    M = O(1)\,;\,    L = O(1).  
\end{align*}
Therefore the bound in Theorem \ref{thm: sekhari_l2_error} becomes 
\[
    \Vert\tbm^{(1)} - \hbm \Vert_2\leq \frac{2ML^2m^2n}{\nu^3} = O(m^2n).
\]
In contrast, our Theorem \ref{thm: one-step-new} shows that 
\[
    \Vert\tbm^{(1)} - \hbm \Vert_2 = O\left(\sqrt{\frac{m^3}{n}}\right),
\]
which goes to zero even when $m$ growns with $n$ slowly enough.

Similarly, the excess risk of Theorem \ref{thm: sekhari_l2_error}  becomes:
\[
\mathbb{E} (F(\tbm^{R,1}) - \min_{\bbeta} F(\bbeta)) = O \left(\frac{\sqrt{p} m^2 n}{\epsilon} \sqrt{\log \left(\frac{1}{\epsilon}\right)} \right). 
\]
Note that the bounds on the excess risk is proportional to $n^{3/2}$ even when we set $m=1$, hence it does not provide useful information about the accuracy of the approximations. 
We should mention that in this paper we did not work with the excess risk, and instead we worked with the GED (Definition \ref{def: gen error div}), since the excess risk diverges in high dimensional R-ERM even for exact removal $\hbm$ without perturbation. To see this, consider a simple model of linear regression with $\ell_2$ loss, and $y_i |\bx_i\sim\cN(\bx_i^\top\bbeta^*,\sigma^2)$, $\bx_i\sim \cN(0,\frac1n\II_p)$. Then 
\begin{align*}
    f(y|\bx^\top\bbeta) &=  (y-\bx^\top\bbeta)^2 \\
    F(\bbeta) &= \EE f(y|\bx^\top\bbeta)=\sigma^2 + \frac1n \|\bbeta-\bbeta^*\|^2\\
    \min_{\bbeta} F(\bbeta) &= \sigma^2\\
    \EE|F(\hbm) - \min_{\bbeta} F(\bbeta)| &= \frac1n \|\hbm - \bbeta^*\|^2.
\end{align*}
It is known that in high dimensions, for most R-ERM estimators including the MLE, $\frac1n \|\hbm - \bbeta^*\|^2 \to \alpha_*>0$ (e.g. \cite{miolane2021distribution,thrampoulidis2018precise}), so in general the excess risk does not converge to zero  under PHAS.
In contrast, the Generalization Error Divergence (GED) we defined still converges to 0 under PHAS by Theorem \ref{thm: main_accuracy}.

\subsection{Literature on approximate leave-one-out cross validation}

Another line of work related to our paper focuses on efficient approximations of the leave-one-out cross-validation (LO) estimate of the risk. LO is widely known to provide an accurate estimate of out-of-sample prediction error (\cite{rad2020LO}). However, a major limitation of LO is its high computational cost. As a result, recent studies have explored methods for approximating LO more efficiently \citep{beirami2017optimal, stephenson2020approximate, rad2018scalable, giordano2019swiss, giordano2019higher, wang2018approximate, rad2020LO, patil2021uniform, patil2022}.

Among this body of work, the contributions of \cite{rad2018scalable, rad2020LO} are most closely related to our own. Specifically, \cite{rad2018scalable} demonstrated that approximating the LO solution using a single Newton step yields an estimate that falls within the statistical error of the true LO estimate.

While there are some technical parallels, our work differs from theirs in several key ways:  
(1) The criteria considered in this paper differ fundamentally from those in \cite{rad2018scalable}, where the focus is solely on risk estimation and not on privacy-related concerns. Consequently, the notion of certifiability, which is central to our analysis, was not considered in that line of work.  
(2) We consider a more general setting involving the removal of \( m \) data points, rather than a single leave-one-out sample.  
(3) As we show in Theorem~\ref{thm: main_accuracy}, due to the stricter requirements of our certifiability criterion, a single Newton step is no longer sufficient, and multiple steps are necessary to achieve a reliable approximation.

\section{Numerical Experiments}
We present numerical experiments that validate our theoretical findings regarding the perturbation  scale $r_{t,n}$, as well as the accuracy of the one-step and two-step Newton approximations, as functions of \( n \), \( p \), and \( m \). Specifically, we empirically examine the scaling behavior of \( \| \hat{\bm{\beta}} - \hat{\bm{\beta}}_{/\mathcal{M}} \|_2 \), \( \| \tilde{\bm{\beta}}^{(1)}_{/\mathcal{M}} - \hat{\bm{\beta}}_{/\mathcal{M}} \|_2 \), and \( \| \tilde{\bm{\beta}}^{(2)}_{/\mathcal{M}} - \hat{\bm{\beta}}_{/\mathcal{M}} \|_2 \) with respect to these parameters. Furthermore, we show that injecting noise into the one-step and two-step Newton approximations can provide privacy guarantees with and without significantly compromising the informativeness of the estimates, respectively. Finally, we demonstrate how our method can be applied to real-world datasets and practical problems, illustrating its empirical performance. The code for reproducing our experimental results is available at \href{https://github.com/krad-zz/Certified-Machine-Unlearning}{https://github.com/krad-zz/Certified-Machine-Unlearning}.

%\begin{figure}[h]
%\includegraphics[scale=0.3]{p_scaling.png}
%\caption{ The approximation and perturbation error for ridge logistic regression as function of $p$. To compare the one and %two Newton step performances, no noise is added, leading to non-certified machine unlearning. Left: The one Newton step %approximation error $\| \tbm^{(1)} - \hbm \|_2$. Middle: The two Newton step approximation error $\|\tbm^{(2)} - \hbm %\|_2$. Right: The perturbation error $\| \hbm - \hb \|_2$.} 
%\label{fig:p_scaling}
%\end{figure}

\begin{figure}[h]
\includegraphics[scale=0.27]{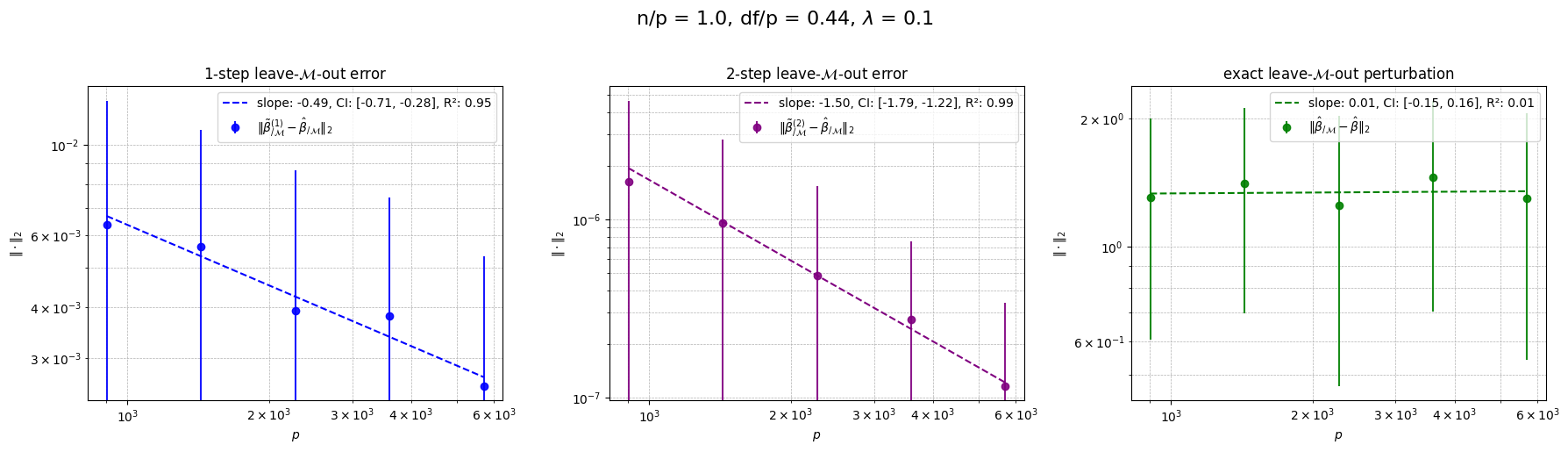}
\caption{The approximation and exact removal error for ridge logistic regression as function of $p$. To compare the one and two Newton step performances, no noise is added, leading to non-certified machine unlearning. Left: The one Newton step approximation error $\| \tbm^{(1)} - \hbm \|_2$. Middle: The two Newton step approximation error $\|\tbm^{(2)} - \hbm \|_2$. Right: The exact removal error $\| \hbm - \hb \|_2$.} 
\label{fig:p_scaling_npequal}
\end{figure}

\begin{figure}[h]
\includegraphics[scale=0.27]{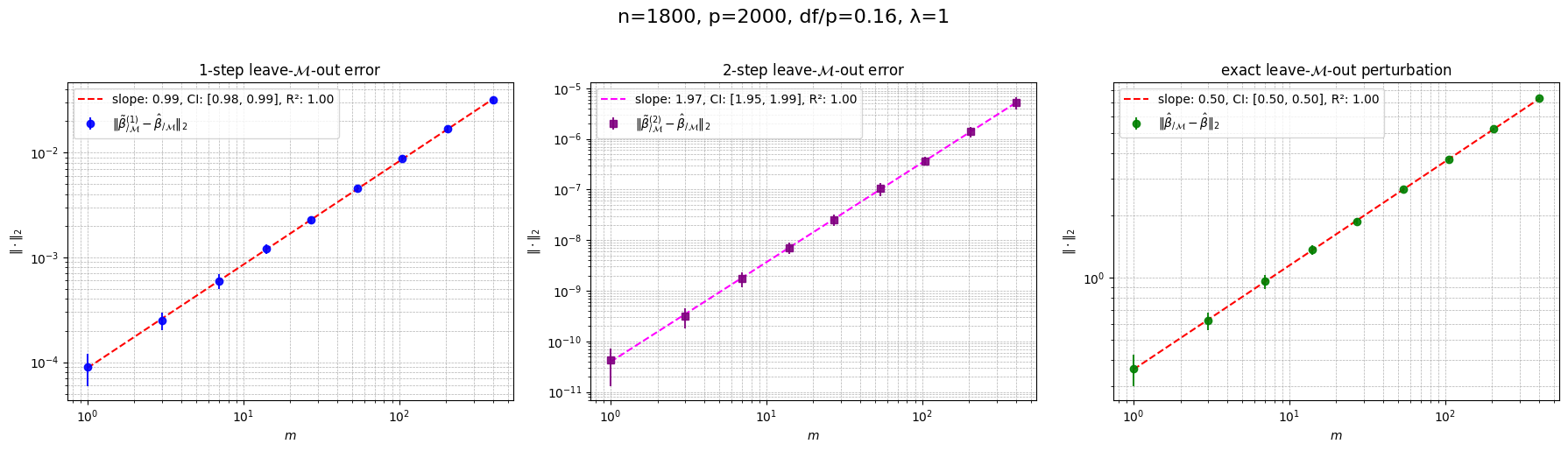}
\caption{The approximation and exact removal error error for ridge logistic regression as function of $m=|\mathcal{M}|$. Left: The one Newton step approximation error $\| \tbm^{(1)} - \hbm \|_2$. Middle: The two Newton step approximation error $\|\tbm^{(2)} - \hbm \|_2$. Right: The exact removal error $\| \hbm - \hb \|_2$.} \label{fig:m_scaling}
\end{figure}

%\begin{figure}[h]
%\includegraphics[scale=0.3]{loss_withnoise.png}
%\caption{Comparisons between the loss of the exactly unlearned models and its first and second Newton step approximations %for ridge logistic regression as function of $p$. No noise is added. Top row: Tested on the left out data point. Left: loss %of the exactly unlearned model. Middle: absolute error between the exactly unlearned model and its one Newton %approximation. Right: absolute error between the exactly unlearned model and its second Newton approximation. Buttom row: %Tested on an unseen data point. Left: loss of the exactly unlearned model. Middle: absolute error between the exactly %unlearned model and its one Newton approximation. Right: absolute error between the exactly unlearned model and its second %Newton approximation.} \label{fig:loss_withnoise}
%\end{figure}

\begin{figure}[h]
\begin{center}
\includegraphics[scale=0.3]{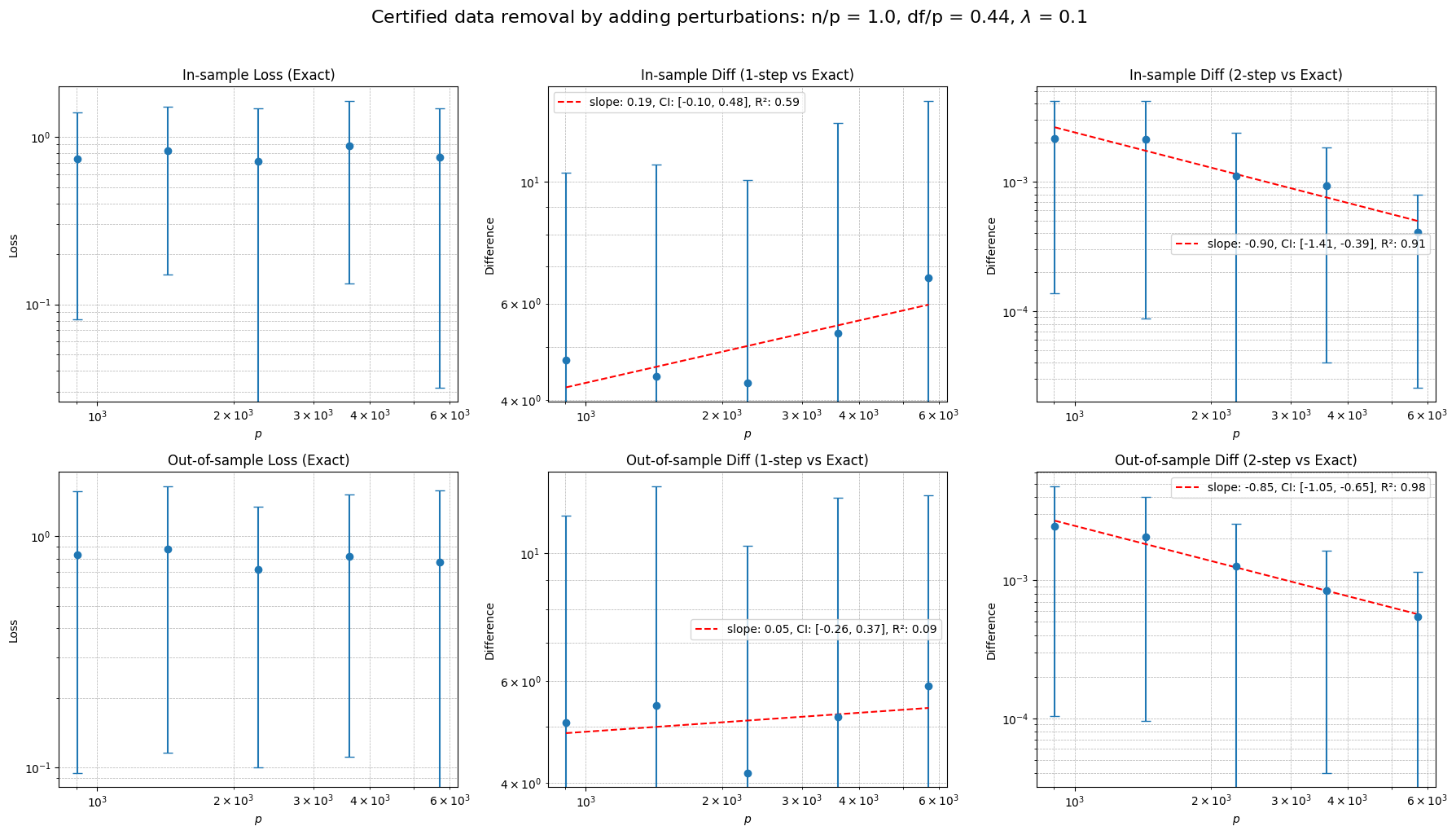}
\end{center}
\caption{Comparisons between the loss of the exactly unlearned models and its first and second Newton step approximations for ridge logistic regression as function of $p$. Top row: Tested on the left out data point. Left: loss of the exactly unlearned model. Middle: absolute error between the exactly unlearned model and its one Newton approximation. Right: absolute error between the exactly unlearned model and its second Newton approximation. Buttom row: Tested on an unseen data point. Left: loss of the exactly unlearned model. Middle: absolute error between the exactly unlearned model and its one Newton approximation. Right: absolute error between the exactly unlearned model and its second Newton approximation.} \label{fig:loss_withnoise_npequal}
\end{figure}

We let the true unknown parameter vector \( \bm{\beta}^* \sim \mathcal{N}(\bm{0}, \II_p) \), and we generate the feature vectors as \( \bm{x} \sim \mathcal{N}(\bm{0}, \II_n/n) \), which implies that \( \operatorname{var}(\bm{x}^\top \bm{\beta}^*) = p/n \). This is consistent with the finite signal to noise, high dimensional setting considered in this paper. We sample the responses as $$ y_i \sim \operatorname{Bernoulli}\left(\frac{1}{1 + e^{-\bm{x}_i^\top \bm{\beta}^*}} \right).$$ We used 100 MCMC samples to compute means and standard deviations.

%, and compute \( \hat{\bm{\beta}} \) for two regularization settings: \( \lambda = 0.1 \) (corresponding to \( n/p = 1.8 \) and \( \mathrm{df} = 0.57p \)) and \( \lambda = 1 \) (corresponding to \( n/p = 0.5 \) and \( \mathrm{df} = 0.37p \)). regardless of \( n \) and \( p \). We used 250 MCMC samples to estimate the standard deviations.

\subsection{Non-certified Machine Unlearning}

While the no-noise setting does not constitute certified machine unlearning, we include it to empirically observe the scaling behavior of the one-step and two-step Newton approximations presented in Theorem \ref{thm: one-step-new} and \ref{thm: l2_error_multistep}. Figure \ref{fig:p_scaling_npequal} illustrates the following scalings (for fixed $m$):
\begin{eqnarray*}
    \| \tbm^{(1)} - \hbm \|_2 &=& O \left ( p^{-0.5}\right) \\
    \| \tbm^{(2)} - \hbm \|_2 &=& O \left ( p^{-1.5} \right).
\end{eqnarray*}
Figure \ref{fig:m_scaling} illustrates the following scalings (for fixed $p$):
\begin{eqnarray*}
    \| \tbm^{(1)} - \hbm \|_2 &=& O \left (m \right) \\
    \| \tbm^{(2)} - \hbm \|_2 &=& O \left ( m^2 \right) \\
    \| \hb - \hbm \|_2 &=& O \left (\sqrt{m} \right).
\end{eqnarray*}

These observations are consistent with the scalings presented in Theorem \ref{thm: one-step-new}.

%Figure ~\ref{fig:p_scaling_npequal} and Figure \ref{fig:m_scaling} show the (one- and two- step) Newton approximation errors in estimating the leave-$\mathcal{M}$-out perturbation error for ridge logistic regression as a function of $p$ and $m=|\mathcal{M}|$, respectively. 

%In this section, we examine the case with added perturbation, leading to certified machine unlearning. Figures~\ref{fig:loss_withnoise, fig:loss_withnoise_npequal} 
%show the in-sample error
%\[
%\frac{1}{|\mathcal{M}|} \sum_{i \in \mathcal{M}} \left| \ell(y_i \mid \bx_i^\top \tbm^{(T)}) - \ell(y_i \mid \bx_i^\top %\hbm) \right|
%\]
%and the out-of-sample error
%\[
%\left| \ell(y_0 \mid \bx_0^\top \tbm^{(T)}) - \ell(y_0 \mid \bx_0^\top \hbm) \right|
%\]
%for $T = 1, 2$ and $|\mathcal{M}|=1$, in the case where no noise is added. This setting corresponds to the non-private %regime.

\subsection{Certified Machine Unlearning}
This case requires adding noise to ensure the data to be unlearned is effectively masked. Noise is added via a random vector $\bm{b}_T \sim \mathrm{Gamma}(p, \epsilon / r_{T,n})$, where $\epsilon = 0.1$ and
\[
r_{T,n} = \max_{|\mathcal{M}| \leq m} \left\| \hbm - \tbm^{(T)} \right\|_2.
\]
Computing $r_{T,n}$  requires evaluating all ${n \choose |\mathcal{M}|}$ possible subsets, which is computationally infeasible even for moderate $n$. 

For the case $|\mathcal{M}| = 1$, instead of exhaustively enumerating all ${n \choose |\mathcal{M}|}$ configurations, we select a random subset of size $m_0$ and compute the maximum $\|\hbm - \tbm^{(T)}\|_2$ over this subset. To approximate the global maximum, we rescale the result by a factor of $\sqrt{\log{n \choose |\mathcal{M}|} / \log m_0}$.

Figure \ref{fig:loss_withnoise_npequal} shows that the amount of noise required for certified unlearning in a single step Newton model is so large that it not only removes the targeted information, but also degrades parts of the model that should have been retained. In contrast, the two-step Newton model requires significantly less noise, allowing it to effectively remove only the targeted information while retaining the rest of the model’s learned patterns. The in-sample error is defined as
\[
\frac{1}{|\mathcal{M}|} \sum_{i \in \mathcal{M}} \left| \ell\left(y_i \mid \bx_i^\top (\bm{b}_T + \tbm^{(T)})\right) - \ell\left(y_i \mid \bx_i^\top ( \hbm)\right) \right|,
\]
and the out-of-sample error is given by
\[
\left| \ell\left(y_0 \mid \bx_0^\top (\bm{b}_T + \tbm^{(T)})\right) - \ell\left(y_0 \mid \bx_0^\top ( \hbm)\right) \right|
\]
where $\bx_o \sim N(\bm{0}, \bm{I}/n)$ and $y_0 \sim {\rm Binomial}\left(\frac{1}{1+e^{-\bx_0^\top \bbeta^*}}\right)$ are new unseen data points.

\bibliographystyle{plainnat}
\bibliography{main}

\appendix

\section{Detailed Proofs}

Before we dive into the proof details, we first provide a proof sketch so that our proof strategy could appear more clear to the reader, which also helps the reader navigate through the rest of the appendix.

Recall that by our definition, $\tbm^{R,t} = \tbm^{(t)}+\bb$, where $\tbm^{(t)}$ is the t-step Newton approximation for $\hbm$, and $p(\bb)\propto e^{-\frac{\epsilon}{r_{t,n}}\Vert\bb\Vert} $. The ultimate goal is to prove that 
\begin{enumerate}
    \item (Theorem \ref{thm: main_epscert}) $\tbm^{R,t}$ achieves $(\phi_n,\epsilon)$-PAR for some $\phi_n\to 0 $ if 
    \[
        r_{n,t} = O\left(\left(\frac{m^3}{n}\right)^{2^{t-2}}\polylog(n)\right).
    \]
    \item (Theorem \ref{thm: main_accuracy}) ${\rm GED}^{\epsilon}(\tbm^{R,t},\hbm) = O(\frac{\sqrt{mp}}{\eps}r_{t,n}\polylog(n))$.
\end{enumerate}

According to Section \ref{sssec: trade-off}, the key to both is the $\ell_2$ error of the $t$-step Newton estimator, i.e., $\Vert\tbm^{(t)} - \hbm \Vert_2$. To bound this quantity for $t\geq 1$, we first provided a bound for $t=1$ case in Theorem \ref{thm: one-step-new}:
\[
    \Vert\tbm^{(1)} - \hbm\Vert_2 = o_p\left(\sqrt{\frac{m^3}{n}}\polylog(n)\right),
\]
and when $m=o(n^{1/3})$, the general case can be then obtained by the quadratic convergence property of Newton method (Lemma \ref{lem:t step newton general}):
\[
    \Vert\tbm^{(t)} - \hbm \Vert_2 \simeq C \Vert\tbm^{(t-1)}-\hbm \Vert_2^2,\quad t\geq 2,
\]
where $C$ depends on the Lipschitzness of the Hessian and the strong convexity of the objective function.
So we get the general bound $\Vert \tbm^{(t)}-\hbm\Vert_2=o_p\left(\left(\frac{m^3}{n}\right)^{2^{t-2}}\polylog(n)\right)$. This is exactly the $r_{n,t}$ in Theorem $\ref{thm: main_epscert}$ and Theorem \ref{thm: main_accuracy}.

\subsection{Proof of Lemma \ref{lem: direct_perturbation}}
\label{ssec: proof_lem_direct_perturbation}
\begin{proof}
    We first prove sufficiency. Conditional on the data $\cD$, $\tbm+\bb$ is nothing but a translation of $\bb$, so its conditional density is
    \[
        p_{\tbm+\bb|\cD}(\bm{\beta}) = p_{\bb}(\bm{\beta} - \tbm),
    \]
    and it is similar for $\hbm+\bb$. 
    Notice that $\log(p_{\bb}(\bb))=-\frac{\epsilon}{r}\Vert\bb\Vert$ is $\frac{\epsilon}{r}$-Lipschitz in $\bb$, therefore $\forall~ \cD\in\mathcal{X}_r$,
    \begin{align*}
        &~|\log(p_{\tbm+\bb}(\bm{\beta})) - \log(p_{\hbm+\bb}(\bm{\beta}))|\\
        =&~ |\log(p_{\bb}(\bm{\beta} - \tbm)) - \log(p_{\bb}(\bm{\beta} - \hbm))|\\
        \leq &~ \frac{\epsilon}{r}\Vert\tbm-\hbm\Vert\\
        \leq&~\epsilon
    \end{align*}
    which is equivalent to
    \[
        {\rm e}^{-\epsilon}\leq \frac{p_{\tbm+\bb}(\bm{\beta})}{p_{\hbm+\bb}(\bm{\beta})}\leq {\rm e}^{\epsilon}.
    \]
    The result then follows by integrating the densities over the set $\cT$.

    For the necessity part, for $\cD_0\notin \cX_r$, by definition $\exists \cM\subset [n], \exists \delta>0$, $\Vert\hbm-\tbm\Vert_2\geq r+\delta$. The proof is then straightforward since $-\frac{\epsilon}{r}\Vert\bb\Vert$ is not $\frac{\epsilon}{r+\delta}$-Lipschitz. To be more specific, define
    \begin{align*}
        \mathcal{T}_+&:=\{\bbeta\in \RR^p | \Vert\bbeta - \hbm\Vert - \Vert\bbeta-\tbm\Vert\geq r+\frac{\delta}{2}\},\\
        \mathcal{T}_-&:=\{\bbeta\in \RR^p | \Vert\bbeta - \tbm\Vert - \Vert\bbeta-\hbm\Vert\geq r+\frac{\delta}{2}\}.
    \end{align*}
    By basic geometry this is the areas within two pieces of a hyperboloid. Then $\forall T\subset \mathcal{T}_+$, 
    \[
        \frac{p_{\tbm+\bb}(\bbeta)}{p_{\hbm+\bb}(\bbeta)} = {\rm e}^{\frac{\epsilon}{r}(\Vert\bbeta-\hbm \Vert - \Vert\bbeta-\tbm\Vert)}
        \geq {\rm e}^{\epsilon (1+\frac{\delta}{2r})},
    \]
    and similarly $\forall T\subset \mathcal{T}_-$, 
    \[
        \frac{p_{\tbm+\bb}(\bbeta)}{p_{\hbm+\bb}(\bbeta)} \leq {\rm e}^{-\epsilon (1+\frac{\delta}{2r})}.
    \]
\end{proof}

\medskip

\subsection{$\ell_2$ error of t-step Newton estimators}
\label{ssec: l2-newton}
In this section, we prove Theorem \ref{thm: one-step-new} and Theorem \ref{thm: l2_error_multistep}. In fact we prove some statements slightly stonger than we discussed in the beginning of the Appendix: not only can we bound the $\ell_2$ errors of t-step Newton estimators, but we can bound them \textbf{simutaneously}. To be more specific, we define a ``failure event'' $F$ for the dataset $\cD$, such that:
\begin{enumerate}
    \item on event $F$, 
    \[
        \|\tbm^{(1)} - \hbm\|_2 = O\left(\sqrt{\frac{m^3}{n}}\polylog(n)\right)
    \]
    (Lemma \ref{lem:l2-norm-diff}),
    \item $\PP(F)=\phi_n \to 0$ (Lemma \ref{lem: P(F)}), and 

    \item on event $F$, the Hessian $\bGm$ is $\polylog(n)$-Lipschitz in $\bbeta$, so that the Newton estimators satisfy quadratic convergence (Lemma \ref{lem:t step newton general}) and we get
    \[  
        \Vert \tbm^{(t)}-\hbm\Vert_2=O\left(\left(\frac{m^3}{n}\right)^{2^{t-2}}\polylog(n)\right),   \quad \forall ~t\geq 1
    \]
    (Theorem \ref{thm: l2_error_multistep}).
\end{enumerate}
In the rest of the section, we adopt the following notations for brevity:
\begin{align*}
    \bldd(\bbeta) &= [\ldd_i(\bbeta)]_{i\in[n]}, \Ldd(\bbeta) = \diag[\bldd(\bbeta)]\\
    \bldd_{\backslash\cM}(\bbeta) &= [\ldd_i(\bbeta)]_{i\notin\cM},\Ldd_{\backslash\cM} = \diag[\bldd_{\backslash\cM}(\bbeta)]\\
    \brdd(\bbeta) &= [\rdd_k(\beta_k)]_{k\in[p]},
    \Rdd(\bbeta) =\diag[\brdd(\bbeta)]\\
    \bG(\bbeta) &= \bX^\top\Ldd(\bbeta)\bX+\lambda\Rdd(\bbeta)\\
    \bGm(\bbeta)&=\bXm^\top\Ldd(\bbeta)\bXm+\lambda\Rdd(\bbeta).
\end{align*}
\subsubsection{$\ell_2$ error of one Newton step}
\label{sssec: one-step}
We first provide a finer characterization of $\cX_r^{(1)}=\{\Vert\tbm^{(1)}-\hbm \Vert\leq r\}$ in terms of some smaller ``failure events'' $F_i, i=1,2,...,5$ that are easier to verify:
\begin{align*}
    F_1&:=\{\Vert\bX \Vert>C_1\},\\
    F_2&:=\{\max_{i\in[n]}|\ld_i(\hb)|>C_\ell(n)\},\\
    F_3&:= \{\exists \cM\subset\cD \text{ with } |\cM|\leq m, \bbeta\in\cB_{1,\cM}, \\
    &\qquad\Vert\bldd_{\backslash\cM}(\bbeta) - \bldd_{\backslash\cM}(\hbm) \Vert>C_{\ell\ell}(n)\Vert\bbeta-\hbm \Vert\\
    &\qquad\text{ or } \Vert\nabla^2 r(\bbeta) - \nabla^2 r(\hbm) \Vert>C_{rr}(n)\Vert\bbeta-\hbm \Vert \}
    \end{align*}    
\begin{align*}
    \text{wh}&\text{ere } \bxi(t):=t\hb+(1-t)\hbm,\; \bldd_{\backslash\cM}(\bbeta):=[\ldd_i(\bbeta)]_{i\notin\cM},\\   &\quad \cB_{1,\cM}:=\{\bbeta: \bbeta = t\hb+(1-t)\hbm, t\in[0,1]\}. \\
    F_4&:= \{ \exists \cM\subset\cD \text{ with } |\cM|\leq m, \bbeta_1,\bbeta_2\in\cB_{2,\cM},\\
    &\qquad\Vert\bldd_{\backslash\cM}(\bbeta_1) - \bldd_{\backslash\cM}(\bbeta_2) \Vert>C_{\ell\ell}(n)\Vert\bbeta_1-\bbeta_2 \Vert,\\
    &\qquad\text{ or } \Vert\nabla^2 r(\bbeta) - \nabla^2 r(\hbm) \Vert>C_{rr}(n)\Vert\bbeta-\hbm \Vert \}\\
    \text{wh}&\text{ere } \cB_{2,\cM}:=\cB(\hbm ,1)\\
    F_5&:= \{\max_{|\cM|\leq m}\Vert\bbXm \bGm^{-1}(\hbm)\bX_{\cM}^\top \Vert_{2,\infty}>C_{xx}(n)\sqrt{\frac{m}{n}}\}, 
    \\ 
    \text{wh}&\text{ere }
    \bbXm = \begin{pmatrix}
        \bXm\\\II_p
    \end{pmatrix}
    ,\; \bX_{\cM}^\top = (\bx_i)_{i\in\cM}\\
    F&:=\cup_{i=1}^5 F_i
    \label{eq: def_failure_events}\numberthis
\end{align*}
\begin{remark}
    Technically, $F_4$ will not be used in this section, but will be used in the proof of Theorem~\ref{thm: l2_error_multistep} in Section \ref{sssec: multi-step}. However, we enlist it among other events as it is similar to $F_3$ and the proof is similar too.
\end{remark}

Note that the undetermined constants, namely $C_1$, $C_\ell(n)$, $C_{\ell\ell}(n)$, $C_{rr}(n)$ and $C_{xx}(n)$, will be decided later when we analyze their probabilities. Roughly speaking, these constants will be at most $O(\polylog(n))$ under Assumptions B1-B3. The next lemma shows that $\left(\cX_r^{(1)}\right)^c\subset F:=\cup_{i=1}^5 F_i$ for a specific $r$:
\begin{lemma}\label{lem:l2-norm-diff}
    under Assumptions A1-A3, Let $F=\cup_{i=1}^5 F_i$ be the failure event, then $\left(\cX_r^{(1)}\right)^c\subset F$ with 
    \[
        r= \left[\frac{2\sqrt{3}}{3\lambda^2\nu^2}[C_{\ell\ell}(n)+\lambda C_{rr}(n)]C_1(C_1+1)C_\ell^2(n)C_{xx}(n) \right]\frac{m^{3/2}}{\sqrt{n}}:=C_1(n)\frac{m^{3/2}}{\sqrt{n}}
    \]    
    i.e., under $F^c$, 
    \[
        \max_{|\cM|\leq m}\Vert\tbm^{(1)} - \hbm \Vert\leq C_1(n)\frac{m^{3/2}}{\sqrt{n}}.
    \]
\end{lemma}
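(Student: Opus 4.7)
\textbf{Proof plan for Lemma \ref{lem:l2-norm-diff}.} The plan is to reduce the one-step Newton error to a second-order expression quadratic in $\hb-\hbm$ and then exploit the failure events $F_1^c$--$F_5^c$ to bound its $\ell_2$ norm by $m^{3/2}/\sqrt{n}$ up to $\polylog(n)$ factors. First, combining $\nabla L(\hb)=\bzero$ and $\nabla L_{\backslash\cM}(\hbm)=\bzero$ gives $\nabla L_{\backslash\cM}(\hb)=-\bX_{\cM}^\top\bld_{\cM}(\hb)$, so that the Newton update $\tbm^{(1)}=\hb-\bGm^{-1}(\hb)\nabla L_{\backslash\cM}(\hb)$ and an integral Taylor expansion of $\nabla L_{\backslash\cM}$ around $\hbm$ together give
\[
\tbm^{(1)}-\hbm=\bGm^{-1}(\hb)\int_0^1[\bGm(\hb)-\bGm(\bm{\beta}_t)]\,dt\cdot(\hb-\hbm),\qquad \bm{\beta}_t:=\hbm+t(\hb-\hbm)\in\cB_{1,\cM}.
\]
Unpacking $\bGm(\bbeta)=\bXm^\top\Ldd(\bbeta)\bXm+\lambda\Rdd(\bbeta)$ and applying one further Taylor step to each $\ldd_i$ and $\rdd_k$ collapses the integrand into a weighted quadratic form in $\hb-\hbm$: namely $\tfrac12\bXm^\top\bm{c}+\tfrac{\lambda}{2}\bm{d}$ with $c_i=\bar\lddd_i(\bx_i^\top(\hb-\hbm))^2$ and $d_k=\bar\rddd_k(\hb-\hbm)_k^2$, the bars denoting mean-value third derivatives along $\cB_{1,\cM}$.

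On $F^c$ three ingredients do the heavy lifting. (i) A crude $\ell_2$ estimate $\|\hb-\hbm\|_2\leq C_1 C_\ell(n)\sqrt{m}/(\lambda\nu)$ follows from the mean-value identity $\hb-\hbm=-\bGm^{-1}(\bxi_{\cM})\bX_{\cM}^\top\bld_{\cM}(\hb)$ together with $\|\bX\|\leq C_1$ from $F_1^c$, $\|\bld_{\cM}(\hb)\|_2\leq\sqrt{m}\,C_\ell(n)$ from $F_2^c$, and strong convexity $\|\bGm^{-1}\|\leq 1/(\lambda\nu)$. (ii) Sharp row-wise estimates $\max_{i\notin\cM}|\bx_i^\top(\hb-\hbm)|\leq C_{xx}(n)C_\ell(n)\,m/\sqrt{n}$ and $\|\hb-\hbm\|_\infty\leq C_{xx}(n)C_\ell(n)\,m/\sqrt{n}$ are obtained by plugging the same identity (after bridging $\bGm^{-1}(\bxi_{\cM})$ to $\bGm^{-1}(\hbm)$) into $F_5^c$: the two blocks of $\bbXm\bGm^{-1}(\hbm)\bX_{\cM}^\top$ are controlled in $(2,\infty)$-norm, so Cauchy--Schwarz against $\bld_{\cM}(\hb)$ delivers both bounds simultaneously. (iii) Interpreting $F_3^c$ through $\ldd_i(\hb)-\ldd_i(\hbm)=\lddd_i(\bxi_i)\,\bx_i^\top(\hb-\hbm)$ and the analogous formula for $\rdd_k$ supplies the $\ell_2$ bounds $\sum_{i\notin\cM}\bar\lddd_i^2(\bx_i^\top(\hb-\hbm))^2\leq C_{\ell\ell}^2(n)\|\hb-\hbm\|^2$ and $\sum_k\bar\rddd_k^2(\hb-\hbm)_k^2\leq C_{rr}^2(n)\|\hb-\hbm\|^2$.

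These pieces assemble via the elementary inequality $\sum_i a_i^2 b_i^2\leq\max_i b_i^2\cdot\sum_i a_i^2$: $\|\bXm^\top\bm{c}\|_2\leq\|\bXm\|\cdot\bigl(C_{xx}(n)C_\ell(n)\,m/\sqrt{n}\bigr)\cdot C_{\ell\ell}(n)\|\hb-\hbm\|_2$, which by (i) is of order $m^{3/2}/\sqrt{n}\cdot\polylog(n)$. The regularizer term $\|\bm{d}\|_2$ is handled identically, using $\|\hb-\hbm\|_\infty$ in place of $\max_i|\bx_i^\top(\hb-\hbm)|$ as the $\ell_\infty$-type factor. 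Multiplying by $\|\bGm^{-1}(\hb)\|\leq 1/(\lambda\nu)$ and collecting constants reproduces the stated form of $r$. The main obstacle is step (ii): $F_5^c$ is stated with $\bGm^{-1}(\hbm)$ whereas the mean-value identity naturally introduces $\bGm^{-1}(\bxi_{\cM})$ for an interior point $\bxi_{\cM}\in\cB_{1,\cM}$, so one must argue that the operator-norm perturbation $\|\bGm^{-1}(\bxi_{\cM})-\bGm^{-1}(\hbm)\|$, propagated through the sandwich $\bXm\,(\cdot)\,\bX_{\cM}^\top$, still loses at most a $\polylog(n)$ factor---an estimate that uses the Hessian--Lipschitz content of $F_3^c$ together with $F_1^c$.
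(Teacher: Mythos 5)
Your decomposition
\[
\tbm^{(1)}-\hbm=\bGm^{-1}(\hb)\int_0^1[\bGm(\hb)-\bGm(\bbeta_t)]\,dt\cdot(\hb-\hbm)
\]
is a correct alternative to the paper's route. The paper instead uses Lemma~\ref{lem:beta_lo_error} to write both $\hbm-\hb$ and $\tbm^{(1)}-\hb$ as $(\cdot)^{-1}\bvm$ with $\bvm=\bX_{\cM}^\top\bld_{\cM}(\hb)$, and then splits $\bbGm^{-1}-\bGm^{-1}(\hb)=\bM_1+\bM_2$ with the resolvent identity \emph{anchored at $\hbm$}: $\bM_1=\bbGm^{-1}\bbXm^{\top}\bGamma\bbXm\bGm^{-1}(\hbm)$. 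The crucial payoff of that anchoring is that $\bGm^{-1}(\hbm)$ lands immediately to the left of $\bvm$, so the term $\|\bbXm\bGm^{-1}(\hbm)\bvm\|_{\infty}$ is exactly the quantity controlled by $F_5^c$, with no bridge. Your plan never produces $\bGm^{-1}(\hbm)\bvm$: Lemma~\ref{lem:beta_lo_error} only gives $\hb-\hbm=-\bbGm^{-1}\bvm$ with the \emph{integrated} inverse Hessian $\bbGm^{-1}$, which depends on $\hb$ and hence on $\bX_{\cM}$, so $F_5^c$ cannot be applied to it directly.

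This is where the gap is, and it is quantitative, not cosmetic. Write $\bx_i^\top(\hb-\hbm)=-\bx_i^\top\bGm^{-1}(\hbm)\bvm-\bx_i^\top[\bbGm^{-1}-\bGm^{-1}(\hbm)]\bvm$. The first piece is $O(C_{xx}(n)C_\ell(n)\,m/\sqrt{n})$ by $F_5^c$. For the second, the only sharp estimate available (the same sandwich used in the paper) gives
\[
\bigl|\bx_i^\top\bbGm^{-1}\bbXm^{\top}\bGamma\bbXm\bGm^{-1}(\hbm)\bvm\bigr|
\le \|\bGamma\|_{Fr}\,\|\bbXm\bbGm^{-1}\bx_i\|_2\,\|\bbXm\bGm^{-1}(\hbm)\bvm\|_\infty
= O\!\left(\sqrt{m}\,\polylog(n)\right)\cdot O(1)\cdot O\!\left(\tfrac{m}{\sqrt{n}}\polylog(n)\right),
\]
because $\|\bGamma\|_{Fr}=O(\sqrt{m}\,\polylog(n))$ under $F^c$. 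So the bridge does not cost only $\polylog(n)$ as you claim; it costs a factor $\sqrt{m}$, giving $\max_{i\notin\cM}|\bx_i^\top(\hb-\hbm)|=O(m^{3/2}/\sqrt{n}\cdot\polylog(n))$ rather than $O(m/\sqrt{n}\cdot\polylog(n))$. (The cruder operator-norm bound you suggest via $F_1^c$, $F_3^c$ alone is even worse: it gives $O(m\,\polylog(n))$ for the correction, which is not small at all.) Plugging the corrected row-wise estimate into your assembly $\|\bGm^{-1}(\hb)\bXm^\top\bm{c}\|_2\lesssim\|\bXm\|\cdot\max_i|\bx_i^\top(\hb-\hbm)|\cdot C_{\ell\ell}(n)\|\hb-\hbm\|_2$ yields $O(m^2/\sqrt{n}\cdot\polylog(n))$, an extra $\sqrt{m}$ compared to the lemma's $C_1(n)\,m^{3/2}/\sqrt{n}$. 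For $m=O(\polylog(n))$ the loss is harmless, but the lemma is stated (and used, via Theorem~\ref{thm: l2_error_multistep}) for $m$ up to $o(n^{1/3})$, where the $\sqrt{m}$ loss matters. To repair the argument you would need to follow the paper's structure: avoid ever isolating $\max_i|\bx_i^\top(\hb-\hbm)|$, and instead keep the factorization so that $\bGm^{-1}(\hbm)\bvm$ appears intact and $F_5^c$ is applied once, directly, inside the Cauchy--Schwarz step $\bu^\top\bD\bv\le\|\bD\|_{Fr}\|\bu\|_2\|\bv\|_\infty$.
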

The proof of this Lemma is postponed to Section \ref{sssec: proof_l2-norm-diff}.

Now that we have $\left(\cX_r^{(1)}\right)^c\subset F=\cup_{i=1}^5 F_i $, we can bound $\phi=\PP(\cD\notin\cX_r^{(1)})$ by bounding $\PP(F)$ instead:

\begin{lemma}\label{lem: P(F)}
    Under Assumptions A1-A3 and B1-B3, 
    \[
        \PP(F)\leq nq_n^{(y)} + 8n^{1-c} + ne^{-p/2} + 2e^{-p}
    \]
    where the constants $C_\ell(n), C_{\ell\ell}(n), C_{rr}(n)$ and $C_{xx}(n)$ in \eqref{eq: def_failure_events} are all $O(\polylog(n))$.
\end{lemma}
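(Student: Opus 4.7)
The plan is to apply a union bound $\PP(F)\le \sum_{i=1}^{5}\PP(F_i)$ together with a careful choice of each constant $C_1,C_\ell(n),C_{\ell\ell}(n),C_{rr}(n),C_{xx}(n)$ as $O(\polylog(n))$, so that the five probabilities sum to at most $nq_n^{(y)}+8n^{1-c}+ne^{-p/2}+2e^{-p}$. Three ingredients recur throughout the argument: (i) Gaussian matrix/vector concentration for the design $\bX$ from Assumption B1; (ii) the polynomial-growth bounds on $\ell,\ld,\lddd$ from Assumption B2 combined with the tail hypothesis on $y_i$ from Assumption B3; and (iii) the strong convexity of $r$, which gives the deterministic bound $\|\bGm^{-1}\|\le (\lambda\nu)^{-1}$.

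For $F_1$: since $\bX\in\RR^{n\times p}$ has iid rows $\cN(\bzero,\bSigma)$ with $\lambda_{\max}(\bSigma)\le C_X/p$, a standard Gaussian matrix tail (Davidson--Szarek) gives $\|\bX\|\le \sqrt{C_X/p}(\sqrt{n}+\sqrt{p}+t)$ with probability at least $1-2e^{-t^2/2}$; taking $t=\sqrt{2p}$ and using $n/p\to\gamma_0$ will yield $\PP(F_1)\le 2e^{-p}$ with a constant $C_1=O(1)$. For $F_2$: Assumption B2 gives $|\ld_i(\hb)|\le C(1+|y_i|^s+|\bx_i^\top\hb|^s)$. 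The first summand contributes $nq_n^{(y)}$ after a union bound via Assumption B3. For the second, I would use strong convexity (by comparing the objective at $\hb$ and at $\bzero$) to get $\|\hb\|=O(\sqrt{p})$; conditional on $\hb$ each $\bx_i^\top\hb$ is $O(1)$-subGaussian, and a Gaussian tail plus a union bound over $i\in[n]$ gives $\max_i|\bx_i^\top\hb|=O(\sqrt{\log n})$ with probability at least $1-n^{1-c}$, absorbed into the $8n^{1-c}$ budget. Thus $C_\ell(n)=O(\polylog(n))$.

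For $F_3$ and $F_4$: a mean value identity gives $\ldd_i(\bbeta_1)-\ldd_i(\bbeta_2)=\lddd_i(\bxi_i)\bx_i^\top(\bbeta_1-\bbeta_2)$, so $\|\bldd_{\backslash\cM}(\bbeta_1)-\bldd_{\backslash\cM}(\bbeta_2)\|_2\le \max_{i\notin\cM}|\lddd_i(\bxi_i)|\cdot \|\bX\|\cdot \|\bbeta_1-\bbeta_2\|_2$. It will suffice to control $|\lddd_i(\bxi)|\le C(1+|y_i|^s+|\bx_i^\top\bxi|^s)$ uniformly for $\bxi\in\cB_{1,\cM}\cup\cB_{2,\cM}$, which reuses the bounds from $F_2$ together with $\|\hbm\|=O(\sqrt{p})$ (obtained the same way), at the cost of at most another $n^{1-c}$ contribution to union-bound $\bx_i^\top\bxi$ over $\bxi$ in a unit ball around $\hbm$. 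Combining with $F_1^c$ yields $C_{\ell\ell}(n)=O(\polylog(n))$; the Hessian-Lipschitz bound $C_{rr}(n)=O(\polylog(n))$ for $r$ is given directly by Assumption B2.

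For $F_5$: this is the main obstacle. The quantity $\|\bbXm\bGm^{-1}(\hbm)\bX_\cM^\top\|_{2,\infty}$ equals the maximum $\ell_2$ norm over rows of $\bbXm\bGm^{-1}(\hbm)\bX_\cM^\top$. For the $p$ rows from the identity block of $\bbXm$, I would bound row norms of $\bGm^{-1}(\hbm)\bX_\cM^\top$ by submultiplicativity, $\|\bGm^{-1}\|\cdot\|\bX_\cM\|\le (\lambda\nu)^{-1}\|\bX_\cM\|$, and standard Gaussian concentration for the $m\times p$ matrix $\bX_\cM$ delivers the $\sqrt{m/n}$ scaling up to polylog. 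For the Gaussian rows $\bx_i^\top$ with $i\notin\cM$, the subtlety---and the hardest point of the proof---is that $\bGm^{-1}(\hbm)$ depends on $\bx_i$. The plan is a leave-one-out swap: replace $\bGm$ by $\bG_{\backslash(\cM\cup\{i\})}$ (independent of $\bx_i$) and control the resulting perturbation via a Sherman--Morrison-type rank-one update, then invoke a Hanson--Wright quadratic-form concentration showing $\|\bx_i^\top \bG_{\backslash(\cM\cup\{i\})}^{-1}\bX_\cM^\top\|_2$ concentrates around a deterministic quantity of order $\sqrt{m/n}\cdot\polylog(n)$. A union bound over the $n$ choices of $i$ contributes the $ne^{-p/2}$ term (from the $\chi^2$-type tail at scale $p$) plus residual $n^{1-c}$ pieces, and an additional union bound over $\binom{n}{m}\le n^m$ choices of $\cM$ is absorbed into the polylog factor using $m=o(n^{1/3})$. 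Summing over $F_1,\ldots,F_5$ then yields the stated bound.
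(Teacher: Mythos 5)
Your high-level strategy (union bound over $F_1,\dots,F_5$, then choose polylog constants so the five tails sum to the target) matches the paper, and the treatments of $F_1$, $F_3$, $F_4$ are roughly aligned in spirit. However, two of the steps as you describe them would not go through.

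For $F_2$ you write ``conditional on $\hb$ each $\bx_i^\top\hb$ is $O(1)$-subGaussian.'' This is not correct: $\hb$ is a function of the entire dataset including $\bx_i$, so conditional on $\hb$ the vector $\bx_i$ is no longer Gaussian, and $\bx_i^\top\hb$ is not even mean zero. The paper gets around exactly this dependency with a leave-one-out device: it controls $|\bx_i^\top\hbx{i,j}|$ where $\hbx{i,j}$ is independent of $\bx_i$, and then bounds $|\bx_i^\top(\hb-\hbi)|$ using the strong-convexity estimate $\|\hb-\hbi\|\le (\lambda\nu)^{-1}|\ld_i|\,\|\bx_i\|$. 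Without some form of this decoupling you cannot invoke a Gaussian tail for $\bx_i^\top\hb$, and the chain of bounds through $\polylog_1,\dots,\polylog_9$ does not get off the ground.

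For $F_5$ the submultiplicativity bound for the identity block gives the wrong order. You have
\[
\|\bGm^{-1}(\hbm)\bX_{\cM}^\top\|_{2,\infty}
\le
\|\bGm^{-1}(\hbm)\bX_{\cM}^\top\|
\le
\|\bGm^{-1}\|\,\|\bX_{\cM}\|
\le
\frac{1}{\lambda\nu}\|\bX_{\cM}\|,
\]
and since $\bX_{\cM}$ is an $m\times p$ Gaussian matrix with row covariance $\bSigma$ and $\lambda_{\max}(\bSigma)\le C_X/p$, its operator norm concentrates around a constant (essentially $\sqrt{C_X}$ for $m\ll p$), not around $\sqrt{m/n}$. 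So this line of argument yields $O(1)$ where the lemma requires $O(\sqrt{m/n}\,\polylog(n))$ --- a loss of a factor $\sqrt{n/m}$. The correct scaling comes from exploiting the $(2,\infty)$ structure directly: after conditioning on $\bX_{\backslash\cM}$ (so that $\bGm^{-1}$ is deterministic and $\bX_{\cM}$ is an independent Gaussian), one writes $\bGm^{-1}\bX_{\cM}^\top=\bSigma_*^{1/2}\bZ_{\cM}^\top$ with $\lambda_{\max}(\bSigma_*)\le \|\bGm^{-1}\|^2\|\bSigma\|\le C_X/(p(\lambda\nu)^2)$, so each of the $p$ rows of $\bGm^{-1}\bX_{\cM}^\top$ has an $\ell_2$ norm that is $\chi^2_m$ at scale $O(1/p)$, concentrating around $O(\sqrt{m/p})$. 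A union bound over the $p$ rows and over $\cM$ produces the polylog factor. The same conditioning argument also disposes of the $\bXm$-block rows in one stroke (since $\bXm\bGm^{-1}$ is also $\sigma(\bX_{\backslash\cM})$-measurable), rendering your Sherman--Morrison / Hanson--Wright detour unnecessary even if it could be made to work. Both of these points are genuine gaps: without the leave-one-out device for $F_2$ and the row-wise $\chi^2$ concentration for $F_5$, the argument does not establish the stated probability bound with $O(\polylog(n))$ constants.
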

The proof is in Section \ref{ssssec: proof_lem_P(F)} and the exact form of the constants can be found in the proof of Lemma \ref{lem: P(F_234)} and \ref{lem: P(F_5)}. Theorem \ref{thm: one-step-new} is then a direct application of Lemma \ref{lem:l2-norm-diff} and Lemma \ref{lem: P(F)}:
\begin{proof}[Proof of Theorem \ref{thm: one-step-new}]
    By Lemma \ref{lem:l2-norm-diff}, $(\cX_r^{(1)})^c\subset F$ with $r = C_1(n) \frac{m^{\frac32}}{\sqrt{n}}$. By Lemma \ref{lem: P(F)}, $\PP(F)\leq nq_n^{(y)} + 8n^{1-c} + ne^{-p/2} + 2e^{-p}$. Combining the two lemmas we immediately have
    \[
        \PP(\cD\in\cX_r^{(1)})\leq nq_n^{(y)} + 8n^{1-c} + ne^{-p/2} + 2e^{-p},
    \]
    where $r = C_1(n) \frac{m^{\frac32}}{\sqrt{n}}$ and  $C_1(n)=O(\polylog(n))$ by Lemma \ref{lem: P(F)},
    which concludes the proof of Theorem \ref{thm: one-step-new}.
\end{proof}

\subsubsection{Proof of Lemma \ref{lem:l2-norm-diff}}
\label{sssec: proof_l2-norm-diff}

\begin{proof}
    Recall that we denote $\bG(\bbeta)$ and $\bGm(\bbeta)$ to be the Hessian of the loss functions of the full model and the unlearned model respectively, and \begin{align*}
        \bbG&:= \int_0^1 \bG(t\hb + (1-t)\hbm)dt\\
        \bbGm&:=  \int_0^1 \bGm(t\hb + (1-t)\hbm)dt
    \end{align*}
    
    By Lemma \ref{lem:beta_lo_error}, we have, using the notations above and in \eqref{eq: def_failure_events},
    \[
        \hbm - \hb = \bbGm^{-1} \left(\sum_{i\in\cM}\ld_i(\hb)\bx_i\right)=\bbGm^{-1}\bX_{\cM}^\top\bld_{\cM} 
    \]
    and by Definition \ref{def: Newton} we have
    \[
        \tbm^{(1)} - \hb = \bGm^{-1}(\hb)\left(\sum_{i\in\cM}\ld_i(\hb)\bx_i\right) = \bGm^{-1}(\hb)\bX_{\cM}^\top\bld_{\cM}.
    \]
    If we define $\bvm:=\bX_{\cM}^\top\bld_{\cM}$, then by subtracting the two equations above, we have
    \begin{align*}
        \hbm - \tbm^{(1)} &= \left[\bbGm^{-1} - \bGm^{-1}(\hb)\right]\bvm\\
        &= \left[\bbGm^{-1} - \bGm^{-1}(\hbm)\right]\bvm + \left[\bGm^{-1}(\hbm)-\bGm^{-1}(\hb)\right]\bvm  \\
        &:= \bM_1\bvm + \bM_2\bvm,
    \end{align*}
    thus we have
    \[
        \Vert\hbm - \tbm^{(1)} \Vert\leq \Vert\bM_1\bvm \Vert + \Vert\bM_1\bvm\Vert.
    \]
    Since $\bM_1$ and $\bM_2$ possess similar properties, we will bound $\Vert\bM_1\bvm\Vert$ in the following, while $\Vert\bM_2\bvm\Vert$ can be bounded using the same method.
    \begin{align*}
        \bM_1\bvm &=
        \left[\bbGm^{-1} - \bGm^{-1}(\hbm)\right]\bvm 
        \\
        &=\bbGm^{-1}\left[\bGm(\hbm) - \bbGm\right]\bGm^{-1}(\hbm)\bvm
    \end{align*}
    Notice that we can write $\bGm(\hbm) - \bbGm$ in a more compact form:
    \begin{align*}
        \bGm(\hbm) - \bbGm &= \bXm^\top [\Ldd(\hbm) - \bLdd]\bXm + \lambda [\Rdd(\hbm)-\bRdd]\\
        &:=\bbXm^\top \bGamma \bbXm,
    \end{align*}
    where in the last step we define 
    \begin{align*}
        \bbXm&:=
        \begin{pmatrix}
            \bXm\\
            \bI_p
        \end{pmatrix},\\
        \bGamma &:=
        \begin{pmatrix}
            \diag[\int_0^1 \ldd_i(\bxi(t)) - \ldd_i(\hbm) dt]_{i\notin\cM},& \bzero\\
            \bzero,&\diag[\int_0^1 \rdd(\xi_k(t)) - \rdd(\hat{\beta}_{\lambda\backslash\cM,k})]_{k\in[p]}
        \end{pmatrix},\\
        \bxi(t)&:=t\hb + (1-t)\hbm
        \label{eq:def_bGamma}\numberthis.
    \end{align*}
    We then have
    \begin{align*}
        &\Vert\bM_1\bvm\Vert\\
        =& \sup_{\Vert\bw\Vert=1} |\bw^\top\bM_1\bvm|\\
        =& \sup_{\Vert\bw\Vert=1} \left|\bw^\top \bbGm^{-1} \bbXm^\top\bGamma \bbXm \bGm^{-1}(\hbm)\bvm  \right|\\
        \leq& \sup_{\Vert\bw\Vert=1} \Vert\bGamma\Vert_{Fr}\Vert\bbXm\bbGm^{-1}\bw \Vert_2 \left\|\bbXm\bGm^{-1}(\hbm) \bv_{\cM} \right\|_\infty\\
        =& \Vert\bGamma\Vert_{Fr}\cdot \sup_{\Vert\bw\Vert=1}\Vert\bbXm\bbGm^{-1}\bw \Vert_2\cdot \left\|\bbXm\bGm^{-1}(\hbm)\bv_{\cM} \right\|_\infty,
        \numberthis\label{eq:proof_l2_error_{t,n}hree_terms}
    \end{align*}
    where in the penultimate line we use Cauchy Schwarz inequality: for two vectors $\bu,\bv\in\RR^p$ and diagonal matrix $\bD=\diag[d_k]_{k\in[p]}$,
    \begin{align*}
        \bu^\top\bD\bv = \sum_{k\in[p]} d_ku_kv_k
        \leq&~ \left(\sum_{k\in[p]}d_k^2\right)^{\frac12}\left(\sum_{k\in[p]}u_k^2v_k^2\right)^{\frac12}
        \le~ \left(\sum_{k\in[p]}d_k^2\right)^{\frac12}\left(\sum_{k\in[p]}u_k^2\right)^{\frac12}\max_{1\le k\le p}|v_k|\\
        =&~\|\bD\|_{\rm Fr}\|\bu\|_2\|\bv\|_{\infty}.
    \end{align*}
    Now we bound the three terms in \eqref{eq:proof_l2_error_{t,n}hree_terms} separately. 
    \begin{enumerate}
        \item 
        Define
        \begin{align*}
            \bldd_{\backslash\cM}(\bbeta)&:= [\ldd_i(\bbeta)]_{i\notin\cM}
            \\
            \bar{\bldd}_{\backslash\cM}&:= \left[\int_0^1\ldd_i(\bxi(t))dt\right]_{i\notin\cM} \text{ where } \bxi(t)=t\hb+(1-t)\hbm\\
            \brdd(\bbeta)&:= [\rdd(\beta_k)]_{k\in[p]}\\
            \bar{\brdd}&:=\left[\int_0^1 \brdd(\bxi(t))dt\right]_{k\in[p]},
        \end{align*}
        Then 
        \[
            \Vert\bGamma \Vert_{Fr}\leq \Vert\bar{\bldd}_{\backslash\cM} - \bldd_{\backslash\cM}(\hbm) \Vert_2 + \lambda \Vert \bar{\brdd} - \brdd(\hbm)\Vert_2.
        \]
        where
        \begin{align*}
            &~\Vert\bar{\bldd}_{\backslash\cM} - \bldd_{\backslash\cM}(\hbm) \Vert_2\\
            &= \sqrt{\sum_{i\notin\cM}\left[\int_0^1[\ldd_i(\bxi(t)-\ldd_i(\hbm))]\right]^2}
            \leq
            \sqrt{\int_0^1\sum_{i\notin\cM}[ \ldd_i(\bxi(t)-\ldd_i(\hbm)) ]^2dt}\\
            &\leq \sqrt{\int_0^1\Vert\bldd(\bxi(t))-\bldd(\hbm) \Vert^2 dt}
            \leq \sqrt{\int_0^1C_{\ell\ell}^2(n)\Vert\bxi(t)-\hbm \Vert^2dt}\\
            &=C_{\ell\ell}(n)\sqrt{\int_0^1 t^2 \Vert\hb-\hbm \Vert^2 dt }\\
            &= \frac{\sqrt{3}}{3}C_{\ell\ell}(n)\Vert\hb-\hbm \Vert.
        \end{align*}
        By Lemma \ref{lem:beta_lo_error}, under event $F$:
        \begin{align*}
            \Vert\hb-\hbm \Vert 
            &=\Vert\bbGm^{-1} \bX_{\cM}^\top\bld_{\cM}(\hb)\Vert
            \leq \frac{1}{\lambda\nu} \Vert\bX_{\cM}^\top\bld_{\cM}(\hb) \Vert\\
            &\leq \frac{1}{\lambda\nu} \Vert\bX \Vert\cdot \Vert\bld_{\cM}(\hb) \Vert
            \leq \frac{1}{\lambda\nu} \Vert\bX \Vert \sqrt{m} \max_{i\in[n]} |\ld_i(\hb)|\\
            &\leq \frac{\sqrt{m}}{\lambda\nu}C_1C_\ell(n),
        \end{align*}
        where the first line uses Lemma \ref{lem:beta_lo_error}, the second uses strong convexity of the risk function, and the last line uses the definition of events $F_1,F_2$. Therefore we have
        \[
            \Vert\bar{\bldd}_{\backslash\cM} - \bldd_{\backslash\cM}(\hbm) \Vert_2
            \leq
            \frac{\sqrt{3m}}{3\lambda\nu}C_1C_\ell(n)C_{\ell\ell}(n).
        \]
        Similarly we have 
        \[
            \Vert\bar{\brdd}_{\backslash\cM} - \brdd_{\backslash\cM}(\hbm) \Vert_2
            \leq
            \frac{\sqrt{3m}}{3\lambda\nu}C_1C_\ell(n)C_{rr}(n),
        \]
        so we have
        \[
            \Vert\bGamma \Vert_{Fr}\leq 
            \frac{\sqrt{3m}}{3\lambda\nu}[C_{\ell\ell}(n)+\lambda C_{rr}(n)]C_1C_\ell(n).
        \]
        \item 
        \begin{align*}
            \sup_{\Vert\bw\Vert=1}\Vert\bbXm\bbGm^{-1}\bw \Vert_2
            &\leq \sup_{\Vert\bw\Vert=1} \Vert\bbXm \Vert \Vert\bbGm^{-1} \Vert \Vert\bw \Vert\\
            &\leq \Vert\bbXm \Vert \Vert\bbGm^{-1} \Vert.
        \end{align*}
        Notice that $\Vert\bbXm \Vert\leq \Vert\bXm\Vert+1\leq \Vert\bX\Vert+1\leq C_1+1$, so we have
        \[
            \sup_{\Vert\bw\Vert=1}\Vert\bbXm\bbGm^{-1}\bw \Vert_2 \leq \frac{C_1+1}{\lambda\nu}
        \]
        \item 
        \begin{align*}
            &\left\|\bbXm\bGm^{-1}(\hbm)\bvm \right\|_\infty\\
            =&\left\| \bbXm\bGm^{-1}(\hbm) \bX_{\cM}^\top\bld_{\cM}(\hb) \right\|_\infty\\
            \leq & \left\|\bbXm\bGm^{-1}(\hbm) \bX_{\cM}^\top\right\|_{2,\infty} \left\|\bld_{\cM}(\hb)\right\|_2,
        \end{align*}
        where in the last line we use the definition of the $(2,\infty)$ norm of a matrix $\bA$:
        \[
            \Vert\bA \Vert_{2,\infty}:=\sup_{\Vert\bw \Vert_2\leq 1} \Vert\bA\bw \Vert_{\infty}.
        \]
        We know $\left\|\bld_{\cM}(\hb)\right\|_2\leq \sqrt{m}\max_{i\in\cM}|\ld_i(\hb)|\leq \sqrt{m}C_\ell(n)$, so by the definition of event $F_5$,
        \[
            \left\|\bbXm\bGm^{-1}(\hbm)\left(\sum_{i\in\cM}\ld_i(\hb)\bx_i\right) \right\|_\infty
            \leq
            C_\ell(n)C_{xx}(n)\frac{m}{\sqrt{n}}.
        \]
    \end{enumerate}
    Combining all the results above we have
    \[
        \Vert\bM_1\bvm \Vert\leq \frac{\sqrt{3}}{3\lambda^2\nu^2}[C_{\ell\ell}(n)+\lambda C_{rr}(n)]C_1(C_1+1)C_\ell^2(n)C_{xx}(n)\frac{m^{3/2}}{\sqrt{n}}.
    \]
    Similar arguments lead to the same bound for $\Vert\bM_2\bvm \Vert$. So we finally have, under event $F^c$,
    \begin{align*}
        \Vert\tbm^{(1)}-\hbm \Vert 
        \leq&~
        \frac{2\sqrt{3}}{3\lambda^2\nu^2}[C_{\ell\ell}(n)+\lambda C_{rr}(n)]C_1(C_1+1)C_\ell^2(n)C_{xx}(n)\frac{m^{3/2}}{\sqrt{n}}\\
        :=&~
        C_1(n)\frac{m^{3/2}}{\sqrt{n}}.
    \end{align*}
\end{proof}

\subsubsection{Proof of Lemma \ref{lem: P(F)}}
\label{ssssec: proof_lem_P(F)}
\begin{lemma}
\label{lem: P(F_234)}
    Under Assumptions A1-A3 and B1-B3, if we set 
    \begin{align*}
        C_1 &= (\sqrt{\gamma_0}+3)\sqrt{C_X},\\
        C_\ell(n)&=\polylog_3(n)\\
        C_{\ell\ell}(n)&=\max\{\polylog_{10}(n),\polylog_{11}(n)\},
    \end{align*}
    in the definition of $F_2, F_3$ in \eqref{eq: def_failure_events}, then we have
    \[
        \PP(F_1\cup F_2\cup F_3)\leq nq_n^{(y)} + 4n^{1-c} + ne^{-p/2} + e^{-p},
    \]   
\end{lemma}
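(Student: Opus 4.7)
The plan is to apply a union bound $\PP(F_1\cup F_2\cup F_3)\le \PP(F_1)+\PP(F_2)+\PP(F_3)$ and handle each piece by exploiting the sub-Gaussian concentration of $\bx_i$ and the polynomial growth assumptions on $\ell$ and $r$.

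For $\PP(F_1)$, since $\bX$ has i.i.d.\ Gaussian rows with covariance $\bSigma$ satisfying $\lambda_{\max}(\bSigma)\le C_X/p$, the Davidson--Szarek bound gives $\|\bX\|\le \sqrt{\lambda_{\max}(\bSigma)}(\sqrt{n}+\sqrt{p}+t)$ with probability at least $1-2e^{-t^2/2}$. Taking $t=\sqrt{2p}$ and using $n/p\to\gamma_0$ yields $\|\bX\|\le (\sqrt{\gamma_0}+3)\sqrt{C_X}$ with probability at least $1-e^{-p}$, matching the declared value of $C_1$.

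For $\PP(F_2)$, Assumption \ref{assum:ld} gives $|\ld_i(\hb)|\le C(1+|y_i|^s+|\bx_i^\top\hb|^s)$, so a union bound over $i\in[n]$ reduces the task to controlling $\max_i|y_i|$ and $\max_i|\bx_i^\top\hb|$. The former is handled by Assumption \ref{assum:y}, contributing $nq_n^{(y)}$. For the latter I would first derive a crude norm bound on $\hb$: from $L(\hb)\le L(\bzero)$ and the $\nu$-strong convexity of $r$, together with Assumption \ref{assum:ld}, one obtains $\|\hb\|^2\le \frac{2}{\lambda\nu}\sum_i\ell(y_i|0)=O(n)$ on an event of probability at least $1-n^{-c}$ via the moment bound in Assumption \ref{assum:y} and Markov's inequality. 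To decouple $\bx_i$ from $\hb$, I would introduce the leave-one-out estimator $\bm{\hat\beta}_{\backslash i}$ and split $\bx_i^\top\hb = \bx_i^\top\bm{\hat\beta}_{\backslash i}+\bx_i^\top(\hb-\bm{\hat\beta}_{\backslash i})$. The first term is conditionally Gaussian with variance at most $(C_X/p)\|\bm{\hat\beta}_{\backslash i}\|^2=O(1)$, hence $O(\sqrt{\log n})$ with probability at least $1-n^{-c-1}$; the second is at most $\|\bx_i\|\cdot\|\hb-\bm{\hat\beta}_{\backslash i}\|$, and $\max_i\|\bx_i\|=O(1)$ follows from $\chi^2_p$-concentration with probability at least $1-ne^{-p/2}$, while $\|\hb-\bm{\hat\beta}_{\backslash i}\|$ is bounded by a standard single-leave-out optimality perturbation. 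These contribute the $n^{1-c}$ and $ne^{-p/2}$ terms.

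For $\PP(F_3)$, the mean value theorem gives $|\ldd_i(\bbeta)-\ldd_i(\hbm)|\le |\lddd_i(\bxi_i)|\cdot |\bx_i^\top(\bbeta-\hbm)|$ for some $\bxi_i$ on the segment, so $\|\bldd_{\backslash\cM}(\bbeta)-\bldd_{\backslash\cM}(\hbm)\|\le\bigl(\sum_{i\notin\cM}\lddd_i(\bxi_i)^2\|\bx_i\|^2\bigr)^{1/2}\|\bbeta-\hbm\|$. Any $\bbeta\in\cB_{1,\cM}$ satisfies $\|\bbeta-\hbm\|\le\|\hb-\hbm\|=O(\sqrt{m}\,\polylog(n))$ on $F_1^c\cap F_2^c$ by the calculation already used in the proof of Lemma \ref{lem:l2-norm-diff}, and $|\bx_i^\top\bbeta|\le |\bx_i^\top\hbm|+\|\bx_i\|\cdot\|\bbeta-\hbm\|=O(\polylog(n))$ on the same good event; applying Assumption \ref{assum:ld} yields $\max_i|\lddd_i(\bxi_i)|=O(\polylog(n))$. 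Combined with $\max_i\|\bx_i\|=O(1)$, this gives the $C_{\ell\ell}(n)=O(\polylog(n))$ bound; the Hessian-Lipschitz constant for $r$ is immediate from Assumption \ref{assum:ld}. Since the bound on $\sum_{i\notin\cM}\lddd_i(\bxi_i)^2\|\bx_i\|^2$ is monotone in $\cM$, a uniform bound over the full index set removes the supremum over $|\cM|\le m$. The main obstacle will be the coupling between $\hb$ (and $\hbm$) and the design vectors $\bx_i$, which forbids a direct application of Gaussian concentration to $\bx_i^\top\hb$; the leave-one-out decoupling requires a preliminary perturbation bound on $\|\hb-\bm{\hat\beta}_{\backslash i}\|$ that cannot depend circularly on the bounds being established, and the crude $\|\hb\|=O(\sqrt{n})$ estimate from strong convexity is the correct non-circular entry point.
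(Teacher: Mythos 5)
Your overall architecture matches the paper's: union bound, a crude $\|\hb\|=O(\sqrt{n}\,\polylog(n))$ norm bound from strong convexity as the non-circular entry point, leave-one-out decoupling plus conditional Gaussian concentration for $\max_i|\bx_i^\top\hb|$, and operator-norm/$\chi^2$ bounds for the design. You also correctly identify the key obstacle (coupling between $\hb$ and the $\bx_i$). For $F_1$ and $F_2$ the sketch is essentially sound.

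There is, however, a genuine gap in your treatment of $F_3$. Applying Cauchy--Schwarz termwise as
\[
\|\bldd_{\backslash\cM}(\bbeta)-\bldd_{\backslash\cM}(\hbm)\|
\le
\Bigl(\sum_{i\notin\cM}\lddd_i(\bxi_i)^2\|\bx_i\|^2\Bigr)^{1/2}\|\bbeta-\hbm\|
\]
is too lossy: with $\|\bx_i\|^2=O(1)$ and $n$ summands the prefactor is $O(\sqrt{n}\,\polylog(n))$, not $O(\polylog(n))$, so it cannot give the declared $C_{\ell\ell}(n)=\max\{\polylog_{10}(n),\polylog_{11}(n)\}$. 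The correct estimate pulls out $\max_i|\lddd_i|$ and keeps the remaining sum as a quadratic form:
\[
\sum_{i\notin\cM}\bigl[\lddd_i\,\bx_i^\top(\bbeta-\hbm)\bigr]^2
\le
\max_i\lddd_i^2\cdot\|\bXm(\bbeta-\hbm)\|_2^2
\le
\max_i\lddd_i^2\cdot\|\bX\|^2\,\|\bbeta-\hbm\|^2,
\]
using $\|\bX\|\le(\sqrt{\gamma_0}+3)\sqrt{C_X}=O(1)$ on $F_1^c$. That single change of ordering is what keeps the Lipschitz constant polylogarithmic. A secondary inefficiency: for $\bbeta\in\cB_{1,\cM}$ you bound $|\bx_i^\top\bbeta|$ by $|\bx_i^\top\hbm|+\|\bx_i\|\,\|\bbeta-\hbm\|$, which picks up an extra $\sqrt{m}$; since $\cB_{1,\cM}$ is the segment joining $\hb$ and $\hbm$, the convexity bound $|\bx_i^\top\bbeta|\le\max\{|\bx_i^\top\hb|,|\bx_i^\top\hbm|\}$ is cleaner and avoids the dependence on $m$ entirely. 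Finally, the claimed removal of the supremum over $|\cM|\le m$ by ``monotonicity in $\cM$'' is not justified as stated, because both the intermediate point $\bxi_i$ and the endpoint $\hbm$ depend on $\cM$; what actually underlies the uniformity is a union bound over subsets inside the Gaussian-max concentration step.
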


The proof can be found in Section \ref{ssec: proof_lem_P(F_234)}, and the definition of the $\polylog(n)$ terms are summarized in \eqref{eq: def_all_polylogs}.

\begin{lemma}\label{lem: P(F_5)}
    Under Assumption \ref{assum:normality}, for any $c\ge 0$ we have $\PP(F_5)\leq 4n^{-c}+e^{-p}$ with 
    \[
        C_{xx}(n)=
    \frac{(\sqrt{\gamma_0}+3)C_X\vee 1}{\lambda\nu}
         \sqrt{\frac{m(1+4(c+1)\log(n))}{p}}
    \]
\end{lemma}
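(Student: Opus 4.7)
The plan is to exploit the conditional Gaussian structure of $\bX_\cM$ given $\cDm$, combined with a chi-square tail bound and a union bound over both the choice of $\cM$ and the rows of $\bbXm$. First I rewrite the $(2,\infty)$ norm as a maximum over rows,
\[
\|\bbXm\bGm^{-1}(\hbm)\bX_\cM^\top\|_{2,\infty}=\max_{k}\|\bX_\cM \bu_k\|_2,\qquad \bu_k:=\bGm^{-1}(\hbm)\bv_k,
\]
where $\bv_k^\top$ ranges over the rows of $\bbXm$, each of which is either $\bx_j^\top$ for some $j\notin\cM$ or a canonical basis vector $\bm{e}_l^\top$ from the identity block. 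In either case $\bv_k$, together with $\bGm^{-1}(\hbm)$, is $\cDm$-measurable, while $\bX_\cM$ is independent of $\cDm$ under Assumption~B1.

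Conditional on $\cDm$, $\bu_k$ is deterministic and $\bX_\cM\bu_k\sim\cN(\bzero,\sigma_k^2\bI_m)$ with $\sigma_k^2:=\bu_k^\top\bSigma\bu_k$, so $\|\bX_\cM\bu_k\|_2^2$ is a scaled chi-square $\sigma_k^2\chi_m^2$. The $\nu$-strong convexity of $r$ (Assumption~A3) gives $\|\bGm^{-1}(\hbm)\|\leq(\lambda\nu)^{-1}$, and $\lambda_{\max}(\bSigma)\leq C_X/p$ from Assumption~B1 yields $\sigma_k^2\leq C_X\|\bv_k\|_2^2/(p(\lambda\nu)^2)$. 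On the $\cDm$-measurable sub-event $\{\|\bXm\|\leq C_1\}\subseteq F_1^c$ (the inclusion holds because $\|\bXm\|\leq\|\bX\|$), every row of $\bbXm$ has norm at most $C_1\vee 1$, giving the uniform bound $\sigma_k^2\leq C_X(C_1\vee 1)^2/(p(\lambda\nu)^2)$, which is exactly the squared coefficient appearing in $C_{xx}(n)$.

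The final step is the Laurent-Massart inequality $\PP(\chi_m^2\geq m+2\sqrt{mx}+2x)\leq e^{-x}$, with $x$ of order $m(c+1)\log n$ chosen so that the threshold matches $m(1+4(c+1)\log n)$; this produces a pointwise failure probability of order $n^{-\Omega(m(c+1))}$. A union bound over the $\leq(m+1)\binom{n}{m}\leq 2n^m$ subsets $\cM$ and the $\leq n+p$ rows of $\bbXm$ absorbs these combinatorial factors and yields $\PP(F_5\cap\{\|\bXm\|\leq C_1\})\leq 4n^{-c}$. Adding $\PP(\|\bXm\|>C_1)\leq\PP(F_1)\leq e^{-p}$---which follows from standard Gaussian operator-norm concentration as in the proof of Lemma~\ref{lem: P(F_234)}---completes the proof.

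The main obstacle is solving the Laurent-Massart quadratic tightly enough that the chi-square exponent $\Omega(m(c+1)\log n)$ dominates the $\log\binom{n}{m}\sim m\log n$ factor arising from the union bound over $\cM$; recovering the precise constant $(\sqrt{\gamma_0}+3)C_X\vee 1$ requires separately handling the cases $C_X\geq 1$ and $C_X<1$, since the identity block of $\bbXm$ contributes rows of norm exactly $1$ regardless of $C_X$. A secondary subtlety is that $F_1$ itself depends on $\bX_\cM$ and is therefore not $\cDm$-measurable, so the conditional chi-square argument must be carried out on the slightly weaker sub-event $\{\|\bXm\|\leq C_1\}\subseteq F_1^c$ before integrating out.
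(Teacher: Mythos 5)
Your proof follows essentially the same route as the paper's: condition on $\cDm$ so that $\bX_\cM$ is an independent Gaussian, observe each row of $\bbXm\bGm^{-1}(\hbm)\bX_\cM^\top$ is then $\cN(\bm{0},\sigma_k^2\bI_m)$ with $\sigma_k^2$ controlled by $\lambda_{\max}(\bSigma)\leq C_X/p$ and the strong-convexity bound $\|\bGm^{-1}\|\leq(\lambda\nu)^{-1}$, apply Laurent--Massart, and union bound over rows and subsets (the paper packages the per-row $\chi^2$ step as Lemma~\ref{lem:gauss-mat-l4infty-norm}). The one organizational difference is cosmetic: the paper splits $\bbXm$ into its $\bI_p$ block and its $\bXm$ block and bounds the two $(2,\infty)$ norms separately, whereas you treat all rows at once by bounding $\|\bv_k\|\leq C_1\vee 1$. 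This costs you slightly, since the identity-block rows have $\|\bv_k\|=1$ deterministically and therefore need no conditioning on $\{\|\bXm\|\leq C_1\}$ at all; the paper handles that block unconditionally and only needs $\|\bX\|\leq C_1$ for the $\bXm$ block.

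Two small glitches worth fixing. First, your inclusion is reversed: since $\|\bXm\|\leq\|\bX\|$ one has $F_1^c\subseteq\{\|\bXm\|\leq C_1\}$, not the other way around; the form you actually use at the end, $\{\|\bXm\|>C_1\}\subseteq F_1$, is correct. Second, your variance bound yields the constant $\sqrt{C_X}(C_1\vee 1)=\max\{(\sqrt{\gamma_0}+3)C_X,\sqrt{C_X}\}$ rather than the stated $(\sqrt{\gamma_0}+3)C_X\vee 1$; these differ when $C_X<1$, but the paper's own proof silently drops a $\sqrt{C_X}$ in the identity-block part, so this is a constant-tracking discrepancy shared with the source rather than a flaw specific to your argument.
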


The proof of Lemma \ref{lem: P(F_5)} can be found in Section \ref{ssec: proof_lem_P(F_5)}. By combining the above two lemmata we have

\begin{proof}[Proof of Lemma \ref{lem: P(F)}]
    By Lemma \ref{lem: norm_X}, 
    \[\PP(\Vert\bX \Vert>(\sqrt{\gamma_0}+3)\sqrt{C_X})\leq e^{-p}. \]
    By Lemma \ref{lem: P(F_234)}, $\PP(F_2\cup F_3)\leq 2n^{1-c} + 2nq_n^{(y)} + e^{-p} + ne^{-p/2}$ with 
    \begin{align*}
        C_\ell(n)&=\polylog_3(n) = \polylog_1(n)+\frac{4C_X}{\lambda\nu}[1+C_y^s(n)+\polylog_1^s(n)]\\
        C_{\ell\ell}(n)&=\max\{\polylog_{10}(n),\polylog_{11}(n)\}.
    \end{align*}
    Finally by Lemma \ref{lem: P(F_5)}, $\PP(F_5)\leq 4n^{-c}+e^{-p}$ with 
    \[
        C_{xx}(n)=\frac{(\sqrt{\gamma_0}+3)C_X\vee 1}{\lambda\nu}
         \sqrt{\frac{m(1+4(c+1)\log(n))}{p}}.
    \]
    Using a union bound over the events above we have
    \[
        \PP(F)\leq nq_n^{(y)} + 8n^{1-c} + ne^{-p/2} + 2e^{-p}.
    \]
\end{proof}

\subsubsection{Proof of Lemma \ref{lem: P(F_234)}}
\label{ssec: proof_lem_P(F_234)}
\begin{proof}
    The proof can be divided into 4 steps. In the following we denote $\hbx{\cdot}$ to be the model trained by excluding the observations indicated in $\cdot$, which can be the indices like $\hbx{i,j}$, or a subset $\hbm$, or both, e.g. $\hbx{\cM,i}$. We allow the content in $\cdot$ to overlap, in which case we simply take a union, for example $\hbx{i,i}:=\hbx{i}$. By slight abuse of notation, we use the index ``0" as a placeholder to denote no observations being excluded, so that $\max_{0\leq i\leq n}\hbx{i}$ means the maximum among all $\hbi$ and also $\hb$. These additional definitions help us to keep the notations unified and simple.
    
    \noindent\underline{Step 1}

    Define event $E_1:=\{\max_i |y_i|\leq C_y(n)\}$ with probability at least $1-nq_n^{(y)}$ by Assumption B3 and a union bound. Under $E_1$, $\forall \cM\subset \cD$ (including $\cM=\emptyset$ case where $\hat{\bbeta}_{\lambda,\emptyset}:=\hb$). We have
    \begin{align*}
        \lambda\nu \Vert\hbm\Vert^2
        \overset{(a)}{\leq} &~
        \sum_{i\notin\cM}\ell_i(\hbm) + \lambda r(\hbm)\overset{(b)}{\leq} \sum_{i\in[n]}\ell_i(\bzero) \\
        \overset{(c)}{\leq} &~ \sum_{i\in[n]}1+|y_i|^s\leq n(1+C_y^s(n)),
    \end{align*}
    where $(a)$ uses the $\nu$-strong convexity of $r$ (Assumption A2), $(b)$ uses $r(\bzero)=0$ and $\ell(\cdot)\geq 0$, $(c)$ uses Asumption B2, and the last inequality uses the definition of event $E_1$. By rearranging the terms we have under $E_1$ that, $\forall \cM$
    \[
        \Vert\hbm \Vert\leq \sqrt{(\lambda\nu)^{-1}(1+C_y^s(n))n}.
        \label{eq: bd_all_betahat}\numberthis
    \]
    
    \noindent\underline{Step 2}
    
    Define event $E_2:=\{ \max_{i,j\in[n]} |\bx_i^\top \hbx{i,j}| \leq \polylog_1(n) \}$ with $\hbx{i,i}:=\hbi$ and 
    \[
    \polylog_1(n):=2\sqrt{(\lambda\nu)^{-1}\gamma_0C_X(1+C_y^s(n))(1+c)\log(n)}
    \]
    for any arbitrary $c>0$ that will appear in the tail probability.
    \begin{align*}
        &~\PP(E_1^c \cup E_2^c)\\
        &=\PP(E_1^c) + \PP(E_2^c\cap E_1)\\
        &\leq nq_n^{(y)} + \PP\left(\max_{i,j\in[n]} |\bx_i^\top \hbx{i,j}| > 2\sqrt{\frac{C_X}{p}}\Vert\hbx{i,j} \Vert\sqrt{(1+c)\log(n)}, \;E_1\right)\\
        &\leq nq_n^{(y)} +    \sum_{i\in[n]}\EE\PP\left( \max_{j\in[n]} |\bx_i^\top \hbx{i,j}| > 2\sqrt{\frac{C_X}{p}}\Vert\hbx{i,j} \Vert\sqrt{(1+c)\log(n)} | \cD_{\backslash i} \right)\\
        &\leq nq_n^{(y)} + 2n^{1-c},
    \end{align*}
    where the second line used $\Vert \hbx{i,j}\Vert\leq \sqrt{(\lambda\nu)^{-1}(1+C_y^s(n))n}$ under $E_1$, the third line uses a union bound over $i$ and the tower rule, and the last line uses Lemma \ref{lem: max_of_gaussian} by observing that conditional on $\cD_{\backslash i}$, $\max_{j\in[n]}|\bx_i^\top \hbx{i,j}|$ is the maximum of $n$ Gaussians with $\sigma_i^2 = \hbx{i,j}^\top\bSigma\hbx{i,j}\leq \frac{C_X}{p}\Vert\hbx{i,j} \Vert^2$. %This kind of argument will be repeatedly used in the proof, and will not be explained in details anymore.

    \noindent\underline{Step 3}
    
    Define $E_3:=\{ \max_{i\in[n]} \Vert\bx_i\Vert\leq 2\sqrt{C_X}, \Vert\bX\Vert\leq (\sqrt{\gamma_0}+3)\sqrt{C_X} \}$ then $\PP(E_3^c)\leq ne^{-p/2} + e^{-p}$ by Lemma~\ref{lem: norm_X} and Lemma~\ref{lem: max_norm_x_i}. 
    
    Under $\cap_{i=1}^3E_i$, we have the following results: $\forall 1\leq i\leq n, 0\leq j\leq n$,
    \begin{align*}
        |\ld_i(\hbx{i,j})|
        &\leq 1+|y_i|^s + |\bx_i^\top \hbx{i,j}|^s\\
        &\leq 1+C_y^s(n) +\polylog_1^s(n):=\polylog_2(n)\\
        \Vert\hbx{j} - \hbx{i,j} \Vert&\leq \frac{1}{\lambda\nu}|\ld_i(\hbx{i,j})| \Vert\bx_i \Vert \leq \frac{2\sqrt{C_X}}{\lambda\nu}\polylog_2(n)\\
        |\bx_i^\top\hbx{j}|
        &\leq |\bx_i^\top \hbx{i,j}| + |\bx_i^\top(\hbx{j}-\hbx{i,j})|\\
        &\leq \polylog_1(n) + \frac{4C_X}{\lambda\nu}\polylog_2(n)\\
        &:=\polylog_3(n)\label{eq: proof_lem_P(F_234)_F_2}\numberthis\\
        |\ld_i(\hbx{j})|&\leq 1+C_y^s(n)+\polylog_3^s(n):=\polylog_4(n).
    \end{align*}

    \noindent\underline{Step 4}

    Under $\cap_{i=1}^3E_i$, $\forall \cM\subset\cD, |\cM|\leq m, \forall j\notin\cM$: by Lemma \ref{lem:beta_lo_error}, 
    \begin{align*}
        \Vert\hbx{\cM,j} - \hbx{j} \Vert&\leq \frac{1}{\lambda\nu}\Vert\bX_{\cM}\bld_{\cM}(\hbx{j}) \Vert\\
        &\leq \frac{\sqrt{m}}{\lambda\nu}\Vert\bX\Vert\max_{i\in\cM}|\ld_i(\hbx{j})|
        \leq \frac{\sqrt{m}}{\lambda\nu} (\sqrt{\gamma_0}+3)\sqrt{C_X}\polylog_4(n)
        \label{eq: proof_lem_P(F_234)_1}\numberthis.
    \end{align*}
    Define 
    \[
        E_4:=\{ \max_{j\in[n]}\max_{\underset{j\notin \cM}{|\cM|\leq m}}
            |\bx_i^\top(\hbx{\cM,j} - \hbx{j})|
            \leq
            \polylog_5(n)
        \}
    \]
    where $\polylog_5(n):= \frac{2(\sqrt{\gamma_0}+3)C_X}{\lambda\nu}\sqrt{m(2m+c)p^{-1}\log(n)}\polylog_4(n)$. Note that it is indeed $O(\polylog(n))$ when $m=o(\sqrt{p})$. Then 
    \[
        \PP(\cup_{i=1}^4 E_i^c)\leq \PP(\cup_{i=1}^3 E_i^c) + \PP(E_4^c \cap (\cap_{i=1}^3 E_i)),
    \]
    and now we work on the second term since the first is already known.
    Let $N:=\sum_{s=0}^{m-1}{n-1\choose s-1}\leq 2{n-1\choose m-1}=\frac{2m}{n}{n\choose m}$, then 
    \[
        \log(N)\leq \log(2) + \log(m)-\log(n)+m\log(em/n)\leq 2m\log(n)
    \]
    for $n\geq 8$, so that we have
    \[
        \sqrt{(2m+c)\log(n)}\geq \sqrt{\log(N)+c\log(n)}.
    \]
    Therefore we have
    \begin{align*}
        &\PP(E_4^c \cap (\cap_{i=1}^3 E_i))\\
        \overset{(a)}{\leq}& \PP(
            \max_{j\in[n]}\max_{\underset{j\notin \cM}{|\cM|\leq m}} 
            |\bx_i^\top(\hbx{\cM,j} - \hbx{j})|
            >
            2\sqrt{C_X/p} \Vert\hbx{\cM,j} - \hbx{j} \Vert\sqrt{\log(N)+c\log(n)}, \; \cap_{i=1}^3 E_i
            )\\
        \leq& \sum_{j\in[n]} \EE\PP(
            \max_{\underset{j\notin \cM}{|\cM|\leq m}}
            |\bx_i^\top(\hbx{\cM,j} - \hbx{j})|
            >
            2\sqrt{C_X/p} \Vert\hbx{\cM,j} - \hbx{j} \Vert\sqrt{\log(N)+c\log(n)}\; | \cD_{\backslash j}
            )\\
        \overset{(b)}{\leq}& 2n^{1-c},
    \end{align*}
    where in $(a)$ we used \eqref{eq: proof_lem_P(F_234)_1} under $\cap_{i=1}^3 E_i$, and in (b) we used Lemma \ref{lem: max_of_gaussian} again.

    We therefore have
    \[
        \PP(\cup_{i=1}^4E_i^c)\leq nq_n^{(y)} + 4n^{1-c} + ne^{-p/2} + e^{-p}.
    \]
    Under event $\cap_{i=1}^4 E_i$, $\forall i\in[n], \forall |\cM|\leq m, i\notin \cM$:
    \begin{align*}
        |\bx_i^\top\hbx{\cM,i}|&\leq |\bx_i^\top\hbi| + |\bx_i^\top(\hbx{\cM,i}-\hbi)|\\
        &\leq \polylog_1(n) + \polylog_5(n)\\
        &:= \polylog_6(n)\\
        |\ld_i(\hbx{\cM,i})|&\leq 1+C_y^s(n)+\polylog_6^s(n):=\polylog_7(n)\\
        \Vert\hbm - \hbx{\cM,i} \Vert&\leq \frac{1}{\lambda\nu}\Vert\bx_i\Vert\cdot|\ld_i(\hbx{\cM,i})|
        \leq \frac{2\sqrt{C_X}}{\lambda\nu}\polylog_7(n)\\
        |\bx_i^\top\hbm|&\leq |\bx_i^\top\hbx{\cM,i}| + |\bx_i^\top (\hbm - \hbx{\cM,i})|\\
        &\leq \polylog_6(n)+\frac{4C_X}{\lambda\nu}\polylog_7(n)\\
        &:=\polylog_8(n)\\
        |\ld_i(\hbm)|&\leq 1+C_y^s(n) + \polylog_8^s(n):=\polylog_9(n).
    \end{align*}

    \noindent\underline{Step 5}

    Now we are ready to bound events $F_2$ and $F_3$ as stated in this lemma. For the convenience of the readers we re-state the definitions of these two events:
    \begin{align*}
        F_2&=\{\max_{i\in[n]}|\ld_i(\hb)|>C_\ell(n)\},\\
        F_3&= \{\exists t\in[0,1], \cM\subset\cD \text{ with } |\cM|\leq m, \Vert\bldd(\bxi(t)) - \bldd(\hbm) \Vert>C_{\ell\ell}(n)\Vert\bxi(t)-\hbm \Vert\},
        \\ &\text{ where } \bxi(t)=t\hb+(1-t)\hbm,\; \bldd(\bbeta):=[\ldd_i(\bbeta)]_{i\in[n]}.
    \end{align*}
    Event $F_2$ was actually bounded in \eqref{eq: proof_lem_P(F_234)_F_2} with $C_\ell(n)=\polylog_3(n)$. Note that we allow $j=0$ in \eqref{eq: proof_lem_P(F_234)_F_2}, which indicates
    \[
        \PP(F_2)\leq \PP(\cup_{i=1}^3 E_i^c) \leq nq_n^{(y)} + 2n^{1-c} + ne^{-p/2} + e^{-p}.
    \]
    This proves part (a) of the lemma.
    For part (b), if $\bbeta\in\cB_{1,\cM}$, then it is a convex combination of $\hb$ and $\hbm$, so 
    \begin{align*}
        |\bx_i^\top\bbeta|
        \leq \max\{|\bx_i^\top\hb|, |\bx_i^\top\hbx{\cM}|\}\leq \max\{\polylog_3(n), \polylog_8(n)\}:=\polylog_9(n).
    \end{align*}
    Therefore 
    \begin{align*}
        |\lddd_i(\bbeta)|&\leq 1+C_y^s(n)+\polylog_9^s(n)
        :=\polylog_{10}(n)
        \label{eq: polylog10}\numberthis.
    \end{align*}
    In addition, notice that $\forall \bbeta\in\cB_{1,\cM}, \forall s\in[0,1], \bm{\eta}(s):=s\bbeta + (1-s)\hbm\in\cB_{1,\cM}$ as well. Therefore under $\cup_{i=1}^4$, $\forall |\cM|\leq m, \forall \bbeta\in\cB_{1,\cM},$
    \begin{align*}
        \Vert\bldd_{\backslash\cM}(\bbeta) - \bldd_{\backslash\cM}(\hbm) \Vert_2^2 
        &= \sum_{i\notin\cM}[\ldd_i(\bbeta) - \ldd_i(\hbm)]^2
        = \sum_{i\notin\cM} [\bar{\lddd}_i\bx_i^\top (\bbeta - \hbm)]^2,
    \end{align*}
    where $\bar{\lddd}_i:=\int_0^1 \lddd_i(s\bbeta+(1-s)\hbm) ds$,
    so
    \[
        |\bar{\lddd}_i| \leq \int_0^1 |\lddd_i(s\bbeta+(1-s)\hbm)| ds\leq \polylog_{10}(n),
    \]
    so
    \begin{align*}
        \Vert\bldd_{\backslash\cM}(\bbeta) - \bldd_{\backslash\cM}(\hbm) \Vert_2^2 
        &\leq \sum_{i\notin\cM} [\bar{\lddd}_i\bx_i^\top (\bbeta - \hbm)]^2\\
        &\leq \polylog_{10}^2(n) (\bbeta - \hbm)^\top \bXm^\top\bXm(\bbeta - \hbm)\\
        &\leq \polylog_{10}^2(n) (\sqrt{\gamma_0}+3)^2 C_X \Vert\bbeta - \hbm \Vert^2 
    \end{align*}
    Also, notice that $ \Vert\brdd(\bbeta) - \brdd(\hbm) \Vert\leq C_{rr}(n)\Vert\bbeta-\hbm \Vert$ is trivially satisfied by Assumption B2.

    Simialrly, for $F_4$, under $\cup_{i=1}^4 E_i^c$, $\forall |\cM|\leq m,  \forall i \notin \cM, \forall \bbeta\in \cB(\hbm, 1)$,
    \begin{align*}
        |\bx_i^\top\bbeta|&\leq |\bx_i^\top\hbm| + |\bx_i^\top(\bbeta-\hbm)|
            \leq \polylog_8(n) + 2\sqrt{C_X}\\
        |\lddd_i(\bbeta)|&\leq 1+C_y^s(n)+|\polylog_8(n) + 2\sqrt{C_X}|^s:=\polylog_{11}(n)
    \end{align*}
    It then follows that
    \begin{align*}
        \Vert\bldd_{\backslash\cM}(\bbeta_1) - \bldd_{\backslash\cM}(\bbeta_2) \Vert_2^2
        &\leq \sum_{i\notin\cM}\bar{\lddd}_i[\bx_i^\top(\bbeta_1-\bbeta_2)]^2
        \leq \polylog_{11}^2(n) (\sqrt{\gamma_0}+3)^2 C_X \Vert\bbeta_1 - \bbeta_2 \Vert^2.
    \end{align*}
    Finally, if we define $C_\ell(n):= \polylog_3(n) $, $C_{\ell\ell}(n)=\max\{\polylog_{10}(n), \polylog_{11}(n)\}$, we have
    \[
        \PP(F_1\cup F_2 \cup F_3\cup F_4)\leq \PP(\cup_{i=1}^4 E_i^c)\leq nq_n^{(y)} + 4n^{1-c} + ne^{-p/2} + e^{-p}.
    \]
    To make the $\polylog_k(n)$ terms explicit, we repeat them here:
    \begin{align*}
        \polylog_1(n) &= 2\sqrt{(\lambda\nu)^{-1}\gamma_0C_X(1+C_y^s(n))(1+c)\log(n)}\\
        \polylog_2(n) &= 1+C_y^s(n)+\polylog_1^s(n)\\
        \polylog_3(n)&=\polylog_1(n)+\frac{C_X}{\lambda\nu}\polylog_2(n)\\
        \polylog_4(n)&=1+C_y^s(n)+\polylog_3^s(n)\\
        \polylog_5(n)&= \frac{2(\sqrt{\gamma_0}+3)C_X}{\lambda\nu}\sqrt{m(2m+c)p^{-1}\log(n)}\polylog_4(n)\\
        \polylog_6(n)&=\polylog_1(n)+\polylog_5(n)
        \\
        \polylog_7(n)&=1+C_y^s(n)+\polylog_6^s(n)\\
        \polylog_8(n)&=\polylog_6(n)+\frac{4C_X}{\lambda\nu}\polylog_7(n)\\
        \polylog_9(n)&=1+C_y^s(n)+\polylog_8^s(n)\\
        \polylog_{10}(n)&=1+C_y^s(n)+\polylog_9^s(n)\\
        \polylog_{11}(n)&=1+C_y^s(n)+|\polylog_8(n) + 2\sqrt{C_X}|^s.
        \label{eq: def_all_polylogs}\numberthis
    \end{align*}
\end{proof}

\subsubsection{Proof of Lemma \ref{lem: P(F_5)}}
\label{ssec: proof_lem_P(F_5)}

\begin{proof}
    Recall that the goal is to bound
    \[
        \PP(F_5)=
        \PP\left( \max_{|\cM|\leq m}\Vert\bbXm\bGm^{-1}(\hbm)\bX_{\cM}^\top \Vert_{2,\infty} > 
         \frac{(\sqrt{\gamma_0}+3)C_X\vee 1}{\lambda\nu}
         \sqrt{\frac{m(1+4(c+1)\log(n))}{p}}  \right)
        .
    \]
    Notice that
    \begin{align*}
        &\max_{|\cM|\leq m}\Vert\bbXm\bGm^{-1}(\hbm)\bX_{\cM}^\top \Vert_{2,\infty}\\
        =&
        \max\left\{ \max_{|\cM|\leq m}\Vert\bGm^{-1}(\hbm)\bX_{\cM}^\top \Vert_{2,\infty}, \;\; \max_{|\cM|\leq m}\Vert\bXm\bGm^{-1}(\hbm)\bX_{\cM}^\top \Vert_{2,\infty} \right\},
    \end{align*}
     and we will bound the two terms separately.
     % \begin{enumerate}
     %     \item 

         Let us fix a subset $\calM_0\subset [n]$ of size $|\calM_0|=s\le m$. Then $\bX_{\calM_0}$ is independent of $\bG_{\backslash\calM_0}^{-1}(\hat{\bbeta}_{\backslash\calM_0})$. That is,
         \[
         \bG_{\backslash\calM_0}^{-1}(\hat{\bbeta}_{\backslash\calM_0})
         \bX_{\calM_0}^{\top}
         =
         \bSigma_*^{1/2}\bZ_{\calM_0}^{\top}
         \]
         where $\bSigma_*=\bG_{\backslash\calM_0}^{-1}(\hat{\bbeta}_{\backslash\calM_0})\bSigma\bG_{\backslash\calM_0}^{-1}(\hat{\bbeta}_{\backslash\calM_0})$ satisfies
         \[
         \sigma_{\max}(\bSigma_*)
         \le \|\bG_{\backslash\calM_0}^{-1}(\hat{\bbeta}_{\backslash\calM_0})\|^2\times \|\bSigma\|
         \le \frac{C}{p(\lambda\nu)^2}
         \]
         almost surely, due to the $\nu$-strong convexity of the penalty function. Thus by Lemma~\ref{lem:gauss-mat-l4infty-norm}, since $|\calM_0|=s$, we have for any $c\ge 0$ that
         \[
         \PP\left(
         \|\bG_{\backslash\calM_0}^{-1}(\hat{\bbeta}_{\backslash\calM_0})
         \bX_{\calM_0}^{\top}\|_{2,\infty}
         \ge 
         \frac{1}{\lambda\nu}
         \sqrt{\frac{s(1+4(c+1)\log(n))}{p}}
         \right)
         \le \exp(-(c+1)s\log(n)).
         \]
         Now taking a union bound over all possible choices of $\calM_0$ such that $|\calM_0|\le s$, we have
         \begin{align*}
         &~\PP\left(
         \max_{|\calM|\le m}
         \|\bG_{\backslash\calM}^{-1}(\hat{\bbeta}_{\backslash\calM})
         \bX_{\calM}^{\top}\|_{2,\infty}
         \ge 
         \frac{1}{\lambda\nu}
         \sqrt{\frac{m(1+4(c+1)\log(n))}{p}}
         \right)\\
         \le &~
         \sum_{s=1}^m
         {n\choose s}
         \exp(-(c+1)s(\log(n)))\\
         \le &~
         \sum_{s=1}^m
         \exp(-(c+1)s\log(n))
         \le
         2
         n^{-c}.
         \end{align*}

        % \item 
         The proof technique for the second assertion is almost identical to the first part, with the following differences. We fix $\calM_0$ with $|\calM_0|=s\le m$, and just as in part i), we obtain:
         \begin{align*}
         &~\PP\left(
         \|\bX_{\backslash\calM_0}\bG_{\backslash\calM_0}^{-1}(\hat{\bbeta}_{\backslash\calM_0})
         \bX_{\calM_0}^{\top}\|_{2,\infty}
         \ge 
         \frac{C\|\bX_{\calM_0}\|}{\lambda\nu}
         \sqrt{\frac{s(1+4(c+1)\log(n))}{p}}
         \big\vert
         \bX_{\calM_0}
         \right)\\
         \le&~\exp(-(c+1)s\log(n)).
         \end{align*}
         Now by the independence of $\bX_{\calM_0}$ and $\bX_{\backslash\calM_0}$ for any fixed $\calM_0$, we can remove the conditioning on $\bX_{\backslash\calM_0}$ to write:
         \begin{align*}
         &~\PP\left(
         \|\bX_{\backslash\calM_0}\bG_{\backslash\calM_0}^{-1}(\hat{\bbeta}_{\backslash\calM_0})
         \bX_{\calM_0}^{\top}\|_{2,\infty}
         \ge 
         \frac{\|\bX_{\calM_0}\|}{\lambda\nu}
         \sqrt{\frac{s(1+4(c+1)\log(n))}{p}}
         \right)\\
         =&~\EE(\PP(\ldots|\bX_{\calM_0}))
         \le \exp(-(c+1)s\log(n)).
         \end{align*}
         Then taking the union bound over all possible $\calM_0$ as before, we have:
         \[
         \PP\left(
         \|\bX_{\backslash\calM_0}\bG_{\backslash\calM_0}^{-1}(\hat{\bbeta}_{\backslash\calM_0})
         \bX_{\calM_0}^{\top}\|_{2,\infty}
         \ge 
         \frac{\|\bX\|}{\lambda\nu}
         \sqrt{\frac{m(1+4(c+1)\log(n))}{p}}
         \right)
         \le 2n^{-c}.
         \]
         Now by Lemma~\ref{lem: norm_X}, the final bound becomes
         \[
         \PP\left(
         \|\bX_{\backslash\calM_0}\bG_{\backslash\calM_0}^{-1}(\hat{\bbeta}_{\backslash\calM_0})
         \bX_{\calM_0}^{\top}\|_{2,\infty}
         \ge 
         \frac{(\sqrt{\gamma_0}+3)C_X}{\lambda\nu}
         \sqrt{\frac{m(1+4(c+1)\log(n))}{p}}
         \right)
         \le 2n^{-c}+{\rm e}^{-p}.
         \]

     %\end{enumerate}
     Finally, by taking the maximum of the bounds obtained in the two parts above and a union bound, we have
     \begin{align*}
         &\PP\left( \max_{|\cM|\leq m}\Vert\bbXm\bGm^{-1}(\hbm)\bX_{\cM}^\top \Vert_{2,\infty} > 
         \frac{(\sqrt{\gamma_0}+3)C_X\vee 1}{\lambda\nu}
         \sqrt{\frac{m(1+4(c+1)\log(n))}{p}}  \right)\\
         \leq& ~4n^{-c} + e^{-p}.
     \end{align*}
\end{proof}

\subsubsection{$\ell_2$ error of multiple Newton steps}
\label{sssec: multi-step}
To study multiple Newton steps, we first prove quadratic convergence for the Newton method in a general setting:
\begin{lemma}\label{lem:t step newton general}
    Suppose $\bbf(\bbeta):\RR^p\to\RR^p$ has Jacobian $\bG(\bbeta)$ with $\lambda_{\min}(\bG(\bbeta))\geq \nu$ for all $\bbeta$, and suppose $\bbf(\bbeta)=\bzero$ has a unique solution $\bbeta^*$. Suppose $\{\bbeta^{(t)}\}_{t\geq 1}$ is the path of Newton method in searching for $\bbeta^*$, i.e. $\forall t\geq 2$,
    \[
        \bbeta^{(t)} := \bbeta^{(t-1)} - \bG^{-1}(\bbeta^{(t-1)})\bbf(\bbeta^{(t-1)}).
    \]
    Let $r_{t,n}:=\Vert\bbeta^{(t)}-\bbeta^* \Vert$ be the $l_2$ error of the $t^{\rm th}$ step. If $\forall \bx_1,\bx_2\in B(\bx^*,r_1)$, $\Vert\bG(\bx_1)-\bG(\bx_2) \Vert\leq C \Vert\bx_1-\bx_2 \Vert$, then 
    \[
        r_{t,n}\leq \frac{C}{2\nu}r_{t-1,n}^2.
    \]
    Consequently, 
    \[
        r_{t,n}\leq \left(\frac{C}{2\nu}\right)^{2^{t-2}}   r_1^{2^{t-1}}. 
    \]
\end{lemma}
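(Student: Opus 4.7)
The plan is to carry out the classical quadratic-convergence argument for Newton's method, tailored to the setup in the lemma. The starting point is the algebraic identity
\[
\bbeta^{(t)}-\bbeta^* = \bG(\bbeta^{(t-1)})^{-1}\Bigl[\bG(\bbeta^{(t-1)})(\bbeta^{(t-1)}-\bbeta^*)-\bbf(\bbeta^{(t-1)})\Bigr],
\]
which follows directly from the Newton update and the fact that $\bbf(\bbeta^*)=\bzero$. I would then rewrite $\bbf(\bbeta^{(t-1)})=\bbf(\bbeta^{(t-1)})-\bbf(\bbeta^*)$ using the fundamental theorem of calculus along the segment from $\bbeta^*$ to $\bbeta^{(t-1)}$, giving
\[
\bbf(\bbeta^{(t-1)}) = \int_0^1 \bG\bigl(\bbeta^*+s(\bbeta^{(t-1)}-\bbeta^*)\bigr)(\bbeta^{(t-1)}-\bbeta^*)\,ds.
\]
Substituting back, the bracket becomes $\int_0^1[\bG(\bbeta^{(t-1)})-\bG(\bbeta^*+s(\bbeta^{(t-1)}-\bbeta^*))](\bbeta^{(t-1)}-\bbeta^*)\,ds$.

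Next, I would apply the $C$-Lipschitz property of $\bG$: for each $s$, the distance between the two evaluation points of $\bG$ equals $(1-s)\|\bbeta^{(t-1)}-\bbeta^*\|$, so the norm of the bracket is bounded by $C\,r_{t-1}^2\int_0^1(1-s)\,ds=\tfrac{C}{2}r_{t-1}^2$. Combined with $\|\bG(\bbeta^{(t-1)})^{-1}\|\le 1/\nu$ from the uniform lower bound on $\lambda_{\min}(\bG)$, this yields the one-step recursion $r_t \le \tfrac{C}{2\nu}r_{t-1}^2$. Iterating the recursion and setting $a=C/(2\nu)$ gives the closed-form bound by a straightforward induction on $t$.

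The one real obstacle is making sure the Lipschitz hypothesis is actually applicable at every step, since the assumption only grants Lipschitzness of $\bG$ on $B(\bbeta^*,r_1)$. I would handle this by an inductive argument: the base case $\bbeta^{(1)}\in B(\bbeta^*,r_1)$ holds by definition of $r_1$, and the recursion itself implies $r_t\le r_{t-1}$ provided $\tfrac{C}{2\nu}r_{t-1}\le 1$, which propagates from $t=2$ onward under the (implicit) regime $\tfrac{C}{2\nu}r_1\le 1$ that is needed for the closed-form bound to be meaningful. Up to this caveat, both the integral representation of $\bG(\bbeta^{(t-1)})(\bbeta^{(t-1)}-\bbeta^*)-\bbf(\bbeta^{(t-1)})$ and the eigenvalue bound on $\bG^{-1}$ are standard, so aside from verifying the iterates remain in the Lipschitz ball, the remaining computations are routine.
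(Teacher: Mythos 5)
Your proof is correct and follows essentially the same route as the paper's: rewrite $\bbeta^{(t)}-\bbeta^*$ using $\bbf(\bbeta^*)=\bzero$, expand $\bbf(\bbeta^{(t-1)})-\bbf(\bbeta^*)$ via the fundamental theorem of calculus into an averaged Jacobian $\bar{\bG}$, bound $\Vert\bG(\bbeta^{(t-1)})-\bar{\bG}\Vert$ by $\tfrac{C}{2}r_{t-1}$ using the Lipschitz hypothesis and the factor $\int_0^1(1-s)\,ds=\tfrac12$, and combine with $\Vert\bG^{-1}\Vert\le 1/\nu$ to get the one-step recursion, then iterate. The one place you go beyond the paper is your final paragraph: the paper invokes the Lipschitz bound along the segment from $\bbeta^{(t-1)}$ to $\bbeta^*$ for every $t$ but never checks that $\bbeta^{(t-1)}$ actually stays in $B(\bbeta^*,r_1)$, which is the only region where Lipschitzness is assumed. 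Your inductive observation—that the recursion keeps $r_t\le r_{t-1}$ once $\tfrac{C}{2\nu}r_1\le 1$, so the iterates never leave the ball—is exactly the missing justification, and it is indeed needed since the lemma as stated does not hypothesize $\tfrac{C}{2\nu}r_1\le 1$; the paper implicitly relies on this holding in its application (where $r_1=o(1)$), but the lemma in isolation requires the caveat you identified.
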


\begin{proof}
    By the definition of Newton steps,
    \begin{align*}
        \bbeta^{(t)} - \bbeta^*
        &= \bbeta^{(t-1)} - \bbeta^* - \bG^{-1}(\bbeta^{(t-1)})\bbf(\bbeta^{(t-1)}).
    \end{align*}
    Notice that we can add $\bG^{-1}(\bbeta^{(t-1)})\bbf(\bbeta^*)$ to the right hand side because $\bbf(\bbeta^*)=0$, so we have
    \begin{align*}
        \bbeta^{(t)} - \bbeta^*
        &= \bbeta^{(t-1)} - \bbeta^* - \bG^{-1}(\bbeta^{(t-1)})\bbf(\bbeta^{(t-1)}) + \bG^{-1}(\bbeta^{(t-1)})\bbf(\bbeta^*)\\
        &= \bbeta^{(t-1)} - \bbeta^* - \bG^{-1}(\bbeta^{(t-1)})[\bbf(\bbeta^{(t-1)})-\bbf(\bbeta^*)]\\
        &= \bbeta^{(t-1)} - \bbeta^* - \bG^{-1}(\bbeta^{(t-1)})\bar{\bG}(\bbeta^{(t-1)}-\bbeta^*)\\
        &=\bG^{-1}(\bbeta^{(t-1)}) [\bG(\bbeta^{(t-1)}) - \bar{\bG}](\bbeta^{(t-1)}-\bbeta^*),
    \end{align*}
    where in the penultimate step we used Taylor expansion with
    \[
        \bar{\bG}:=\int_0^1 \bG(a\bbeta^{(t-1)} + (1-a)\bbeta^* )  da,
    \]
    and the last step uses the trick that $\bbeta^{(t-1)} = \bG^{-1}(\bbeta^{(t-1)})\bG(\bbeta^{(t-1)})\bbeta^{(t-1)}$. Notice that
    \begin{align*}
        \Vert \bG(\bbeta^{(t-1)}) - \bar{\bG}\Vert 
        &= \Vert\int_0^1[\bG(\bbeta^{(t-1)}) - \bG(a\bbeta^{(t-1)} + (1-a)\bbeta^* )]da\Vert\\
        &\leq \int_0^1 \Vert\bG(\bbeta^{(t-1)}) - \bG(a\bbeta^{(t-1)} + (1-a)\bbeta^* ) \Vert da\\
        &\leq \int_0^1 C \Vert(1-a)(\bbeta^{(t-1)}-\bbeta^*) \Vert da\\
        &= C \Vert\bbeta^{(t-1)}-\bbeta^* \Vert\int_0^1 (1-a)da\\
        &= \frac{C}{2}\Vert\bbeta^{(t-1)}-\bbeta^* \Vert
    \end{align*}
    Therefore we have
    \begin{align*}
        r_{t,n}=\Vert\bbeta^{(t)} - \bbeta^*\Vert
        &\leq \Vert\bG^{-1}(\bbeta^{(t-1)})  \Vert\cdot \Vert \bG(\bbeta^{(t-1)}) - \bar{\bG}\Vert\cdot \Vert\bbeta^{(t-1)}-\bbeta^* \Vert\\
        &\leq \frac{1}{\nu} \frac{C}{2}\Vert\bbeta^{(t-1)}-\bbeta^* \Vert^2
        = \frac{C}{2\nu}r_{t-1,n}^2.
    \end{align*}
    We then immediately have
    \[
        r_{t,n} = \left(\frac{C}{2\nu}\right)^{2^{t-2}} r_{1,n}^{2^{t-1}}.
    \]
    where $r_{1,n} = C_1(n)\sqrt{\frac{m^3}{n}}$ is the bound we obtained in Lemma \ref{lem:l2-norm-diff}.
\end{proof}

It is then a direct application of Lemma \ref{lem:t step newton general} to prove Theorem \ref{thm: l2_error_multistep}:

\begin{proof}[Proof of Theorem \ref{thm: l2_error_multistep}]
    It is a direct application of Lemma \ref{lem:t step newton general} together with several previous results.

    First, under event $F^c$, by Lemma \ref{lem:beta_lo_error}, $r_1:=\Vert\tbm^{(1)}-\hbm\Vert\leq C_1(n)m^{\frac32}n^{-\frac12}=o(1)$ since $m=o(n^{1/3})$. Then for the Lipschiz condition of $\bGm(\bbeta)$, notice that under event $F_4^c$, for large enough $n$, $\forall \bbeta_1,\bbeta_2\in\cB(\hbm,r_1)\subset\cB(\hbm,1)$,
    \begin{align*}
        \bGm(\bbeta_1) - \bGm(\bbeta_2) 
        &= \bXm^\top[\Ldd_{\backslash\cM}(\bbeta_1)-\Ldd_{\backslash\cM}(\bbeta_2)]\bXm+\lambda r[\Rdd(\bbeta_1)-\Rdd(\bbeta_2)]\\
        &=\bbXm^\top\bGamma\bbXm
    \end{align*}
    where the last line is similar to \eqref{eq:def_bGamma}.
    Under $F_4^c$, we have $\Vert\bGamma \Vert_2\leq\Vert\bGamma \Vert_{Fr}\leq (C_{\ell\ell}(n)+\lambda C_{rr}(n))\Vert\bbeta_1-\bbeta_2 \Vert$, so 
    \begin{align*}
        \Vert\bGm(\bbeta_1) - \bGm(\bbeta_2)\Vert
        &\leq \Vert\bbXm \Vert^2 (C_{\ell\ell}(n)+\lambda C_{rr}(n))\Vert\bbeta_1-\bbeta_2 \Vert\\
        &\leq (1+(\sqrt{\gamma_0}+3)\sqrt{C_X})^2(C_{\ell\ell}(n)+\lambda C_{rr}(n))\Vert\bbeta_1-\bbeta_2 \Vert\\
        &:=C_2(n)\Vert\bbeta_1-\bbeta_2 \Vert.
    \end{align*}
    We then apply Lemma \ref{lem:t step newton general} with the Lipschitz constant $C=C_2(n)=O(\polylog(n))$ and the strong convexity constant to be $\lambda\nu$.
\end{proof}

\subsection{Proof of Theorem \ref{thm: main_epscert} and Theorem \ref{thm: main_accuracy}}
\label{ssec: proof_main}
\begin{proof}[Proof of Theorem \ref{thm: main_epscert}]
    Recall that we defined 
    \[
        \cX_{r_{t,n}}^{(t)}:=\{\cD: \underset{|\cM|\leq m}{\max}\Vert\tbm^{(t)}-\hbm\Vert_2\leq r_{t,n}\}.
    \]
    
    By Lemma \ref{lem: direct_perturbation}, $\tbm^{R,t} = \tbm^{(t)} +\bb$ is $(\phi_n,\epsilon)$-PAR if $\PP(\cX_{r_{t,n}}^{(t)})\geq 1-\phi_n$.
    
    By Lemma \ref{lem:l2-norm-diff}, $\left(\cX_{r_{t,n}}^{(t)}\right)^c\subset F=\cup_{i=1}^5 F_i$ defined in \eqref{eq: def_failure_events}.

    By Lemma \ref{lem: P(F)},
    \[
        \PP(F)\leq nq_n^{(y)} + 8n^{1-c} + ne^{-p/2} + 2e^{-p} := \phi_n,
    \]
    as long as we set the constants $C_1$, $C_\ell(n)$, $C_{\ell\ell}(n)$, $C_{rr}(n)$ in the definion of $F$ \eqref{eq: def_failure_events} as in Lemma \ref{lem: P(F)}, so that
    \begin{align*}
        r_{t,n} &= [C_1(n)]^{2^{t-1}}\left(\frac{C_2(n)m^3}{2\lambda\nu n}\right)^{2^{t-2}},\\
        C_1(n) &= \frac{2\sqrt{3}}{3\lambda^2\nu^2}[C_{\ell\ell}(n)+\lambda C_{rr}(n)]C_1(C_1+1)C_\ell^2(n)C_{xx}(n),\\
        C_2(n) &= (1+(\sqrt{\gamma_0}+3)\sqrt{C_X})^2(C_{\ell\ell}(n)+\lambda C_{rr}(n)).
    \end{align*}

    Therefore we conclude that $\tbm^{R,t}= \tbm^{(t)} + \bb$ achieves $(\phi_n,\epsilon)$-PAR with 
    \[
        \phi_n = nq_n^{(y)} + 8n^{1-c} + ne^{-p/2} + 2e^{-p},
    \]
    if $\bb$ has density $p(\bb)\propto e^{-\frac{\eps}{r_{t,n}}\Vert\bb\Vert}$.
\end{proof}

\medskip

\begin{proof}[Proof of Theorem \ref{thm: main_accuracy}]
    First notice that by Taylor expansion,
    \begin{align*}
        |\ell_0(\tbm^{(t)}+\bb)-\ell_0(\hbm)|
        &\leq |\bar{\ld}_0\Vert\bx_0^\top(\tbm^{(t)}-\hbm+\bb)|.
    \end{align*}
    Let $F$ be the failure event defined in \eqref{eq: def_failure_events}. 
    Recall that by Equation \eqref{eq: bd_all_betahat} and \eqref{eq: proof_lem_P(F_234)_1}, under $F^c$, 
    \begin{align*}
        \Vert\hb \Vert&\leq \sqrt{(\lambda\nu)^{-1}(1+C_y^s(n))n}\\
        \max_{|\calM|\le m}
        \Vert\hb-\hbm\Vert&\leq \frac{\sqrt{m}}{\lambda\nu}(\sqrt{\gamma_0}+3)\sqrt{C_X}\polylog_4(n).
    \end{align*}
    Define events
    \begin{align*}
        E_5&:=\left\{ |\bx_0^\top\hb| \leq \sqrt{(\lambda\nu)^{-1}C_X\gamma_0(1+C_y^s(n)2c\log(n))} \right\}\\
        E_6&:=\left\{  
        \max_{|\cM|\leq m}
        |\bx_0^\top(\hb-\hbm)|\leq\frac{2C_X}{\lambda\nu}(\sqrt{\gamma_0}+3)\sqrt{\frac{m(2m+c)}{p}\log(n)}\polylog_4(n)  
        \right\}.
    \end{align*}
    Then for any $\cD\in F^c$, we have 
    \begin{align*}
        \PP(E_5^c|\cD)&\leq \PP\left(|\bx_0^\top\hb|\leq \sqrt{\frac{C_X}{p}}\Vert\hb \Vert\sqrt{2c\log(n)}|\cD\right)\leq 2n^{-c}\\
        \PP(E_6^c |\cD)&\leq \PP\left(\left. 
 \exists|\cM|\leq m, |\bx_0^\top(\hb-\hbm)|>\sqrt{\frac{C_X}{p}}\Vert\hb-\hbm \Vert\cdot 2\sqrt{\log(N)+c\log(n)} \right|\cD  \right)\\
    &\leq 2n^{-c},
    \end{align*}
    where $N=\sum_{s=0}^m{n\choose s}\leq 2{n\choose m}$, so 
    \[
        \log(N)\leq \log(2)+m\log(en/m)\leq 2m\log(n).
    \]
    Under $E_5\cap E_6$, 
    \[
        |\bx_0^\top\hbm|\leq |\bx_0^\top\hb|+|\bx_0^\top(\hb-\hbm)|\leq \polylog_{12}(n)
    \]
    if $m=o(\sqrt{n})$.
    
    Define 
    \begin{align*}
        E_7&:= \left\{ \forall|\cM|\leq m, |\bx_0^\top(\tbm^{(t)}-\hbm+\bb)|\leq 2\sqrt{C_X}
        \left(\frac{2\sqrt{p}}{\epsilon}+\frac{1}{\sqrt{p}}\right)r_{t,n}\cdot \sqrt{(2m+c)\log(n)} \right\},\\
        E_8&:=\{ \Vert\bb\Vert\leq\frac{2p}{\epsilon}r_{t,n} ,|y_0|\leq C_y(n)\}.
    \end{align*}
    Under $F^c$,
    \(
        \Vert\tbm^{(t)}-\hbm \Vert\leq 
        r_{t,n}
    \).  Then for any $\cD\in F^c$
    \begin{align*}
        \PP(E_7^c\cap E_8|\cD)
        &\leq \PP(\exists|\cM|\leq m: |\bx_0^\top(\tbm^{(t)}-\hbm+\bb)| >\\
        &\qquad\qquad\sqrt{\frac{C_X}{p}}\Vert\tbm^{(t)}-\hbm+\bb \Vert\cdot 2\sqrt{\log(N)+c\log(n)}|\cD)\\
        &\leq 2n^{-c}.
    \end{align*}
    Under $(\cap_{i=5}^8 E_i)$, %w.p.$\geq 1-(n+1)q_n^{(y)}-14n^{1-c}-ne^{-p/2}-2e^{-p}-e^{-(1-\log(2))p}$, 
    $\forall a\in[0,1]$,
    \begin{align*}
        &~|\bx_0^\top[a(\tbm^{(t)}+\bb)+(1-a)\hbm]|\\
        \leq&~ |\bx_0^\top\hbm| +a|\bx_0^\top(\tbm^{(t)}-\hbm+\bb)|\\
        \leq&~ \polylog_{12}(n)+2\sqrt{C_X}\left(\frac{2\sqrt{p}}{\epsilon}+\frac{1}{\sqrt{p}}\right)r_{t,n}\cdot \sqrt{(2m+c)\log(n)}\\
        \leq&~ \polylog_{13}(n),
    \end{align*}
    provided $r_{t,n}=o\left(\frac{\epsilon}{\sqrt{mp\polylog(n)}}\right)$ so that the second term is $O(\polylog(n))$. Thus, for any $\cD\in F^c$,
    \begin{align*}\label{eq:ged-eq1}
        &~\EE\left[\int|\bx_0^\top[a(\tbm^{(t)}+\bb)+(1-a)\hbm]|^{2s}da|\cD\right]\\
        \le&~
        (\polylog_{13}(n))^{2s}\\
        &~+
        C_s(\EE\|\bx_0\|^{4s})^{1/2}(\|\hbm\|^{4s}+\|\tbm^{(t)}-\hbm\|^{4s}+\EE|\bb|^{4s})^{1/2}
        \PP((\cap_{i=5}^8 E_i)^c|\cD)\\
        \le&~
        (\polylog_{13}(n))^{2s}+
        C_sC_X
        \left(\frac{2p}{\epsilon}+1\right)^{2s}r_{t,n}^{2s}\cdot (\sqrt{(2m+2s)\log(n)})^{2s}
        \cdot n^{-2s}\\
        \le&~
        \polylog_{14}(n)\numberthis
    \end{align*}
    using events $\cap_{i=5}^8 E_i$ for some $c\ge 2s$. Similarly we obtain
    \begin{equation}\label{eq:ged-eq2}
        \EE\left[|\bx_0^\top[(\tbm^{(t)}-\hbm)+\bb]|^{2}da|\cD\in F^c\right]
        \le 
        2
        C_X\left(\frac{2\sqrt{p}}{\epsilon}+\frac{1}{\sqrt{p}}\right)^2r_{t,n}^2\cdot (2m+2)(\log(n))
    \end{equation}
    using events $\cap_{i=5}^8 E_i$ for some $c\ge 2$. Finally we have that, 
    \begin{align*}
        {\rm GED}^\epsilon(\tbm^{R,t},\hbm)
        =
        &~\EE\left(|\ell_0(\tbm^{(t)}+\bb)-\ell_0(\hbm)|\vert\cD\right)\\
        \leq& ~
        \left(\EE(|\bar{\ld}_0|^2|\cD)\right)^{1/2}
        \left(\EE(\vert\bx_0^\top\bb +\bx_0^{\top}(\tbm^{(t)}-\hbm)\vert^2|\cD)|\right)^{1/2}
    \end{align*}
    Now using the previous inequalities, we will bound each of the quantities on the right hand side of the latest display. First notice that by \eqref{eq:ged-eq1}, for any $\cD\in F^c$
    \begin{align*}
        &~\EE(|\bar{\ld}_0|^2|\cD)\\
        =&~
        \EE\left(\vert
        \int_0^1 \ld_0((\tbm^{(t)}+\bb)+(1-a)\hbm) da 
        \vert^2|\cD\right)\\
         \le&~
        3C^2\left(1+\EE|y_0|^{2s}+\EE\left[\int|\bx_0^\top[a(\tbm^{(t)}+\bb)+(1-a)\hbm]|^{2s}da|\cD\right]\right)\\
        \le&~
        3C^2(1+C_{y,s}+\polylog_{14}(n))
    \end{align*}
    where we use Assumption~\ref{assum:y} for the last two inequalities.  Similarly using \eqref{eq:ged-eq2} we have for any $\cD\in F^c$ that
    \begin{align*}
        &~{\rm GED}^\epsilon(\tbm^{R,t},\hbm)\\
        =
        &~\EE\left(|\ell_0(\tbm^{(t)}+\bb)-\ell_0(\hbm)|\vert\cD\right)
         \\
        \le&~
        3C(1+C_{y,s}+\polylog_{14}(n))^{1/2}
        \sqrt{C_X}\left(\frac{2\sqrt{p}}{\epsilon}+\frac{1}{\sqrt{p}}\right)r_{t,n}\cdot \sqrt{(2m+2)(\log(n))}
        .
    \end{align*}    
    This proves the first part of the theorem. 
    
    To prove the second part, notice that for any $\cD\in F^c$, ${\rm GED}^\epsilon(\tbm^{R,t},\hbm)=O(1)$ if $r_{t,n}=o\left(\frac{\epsilon}{\sqrt{mp\polylog(n)}}\right)$. To find the smallest $t$ such that this holds true, define $\alpha = \log(m+1)/\log(n)$, then we have
    \begin{align*}
        r_{t,n} &= [C_1(n)]^{2^{t-1}}\left(\frac{C_2(n)(m+1)^3}{2\lambda\nu n}\right)^{2^{t-2}}
        =\polylog(n) n^{(3\alpha-1)2^{t-2}}.
    \end{align*}
    We need 
    \begin{align*}
        r_{t,n} &= o\left(\frac{\epsilon}{\sqrt{mp\polylog(n)}}\right)
         = o(n^{-\frac12 (\alpha+1)}),
    \end{align*}
    which is satisfied when 
    \[
        t > T = 1+\log_2\left(\frac{\alpha+1}{1-3\alpha}\right).
    \]
\end{proof}

\subsection{An example for logistic ridge}
Here I provide explicit formulae for the constants, for logistic ridge:
\[
    C_X=1, s=1, C_y(n)=1, \nu=1, C_{rr}(n)=0.
\]
and WLOG assume $c=3$ ($c$ is the only custom constant throughout the paper to control the convergence speed of $\phi_n\to0$).
We then have
\begin{align*}
    \polylog_1(n) &= 4\sqrt{2\gamma_0\lambda^{-1}\log(n)}\\
    C_\ell(n) &= \polylog_1(n) + 8\lambda^{-1}\\
    \polylog_5(n) &= 4(\sqrt{\gamma_0}+3)\lambda^{-1}\sqrt{m(2m+3)p^{-1}\log(n)}\\
    C_{\ell\ell}(n) &= 4+8\lambda^{-1}+\polylog_1(n)+\polylog_5(n)\\
    C_{xx}(n) &= (\sqrt{\gamma_0}+3)\lambda^{-1}\sqrt{mp^{-1}(1+16\log(n))}\\
    \\
    C_2(n) &= [\sqrt{\gamma_0}+4]^2C_{\ell\ell}(n)\\
    C_1(n)&=\frac{2}{\sqrt{3}\lambda^2}C_2(n)C_{\ell}^2(n)C_{xx}(n)
\end{align*}
Now the constants only depend on $\lambda, \gamma_0$ except $\polylog_5(n)$ and $C_{xx}(n)$, which technically also depend on $m,p$. But they can nonetheless be calculated.

\section{Technical Lemmas}
\begin{lemma}
\label{lem: conc_single_x}
    %Necessary concentrations for $\Vert\bX\Vert, \Vert\bx\Vert$.
    \(
        \PP(\|\bx\|>2c\rho_{\max}\sqrt{\log(n)})\leq \frac{1}{\sqrt{2\pi c}}n^{-c}
    \)
    for $n\geq 3$.
\end{lemma}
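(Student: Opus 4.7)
The plan is to prove this as a textbook one-dimensional Gaussian tail estimate, since the example right after the lemma (applied to $y_i\sim\cN(0,\tau^2)$) makes it clear that the intended interpretation is a scalar $x$ (or a one-dimensional projection of $\bx$) that is Gaussian with standard deviation at most $\rho_{\max}$. The approach is to standardize and apply Mill's ratio.

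First I would set $Z=x/\rho_{\max}$ so that $Z\sim\cN(0,1)$. Then I would invoke Mill's inequality in its usual form
\[
\PP(|Z|>t)\le \frac{2}{t\sqrt{2\pi}}\,{\rm e}^{-t^2/2},\qquad t>0,
\]
which is the standard bound coming from integrating $\phi(u)\le (u/t)\phi(u)$ on $[t,\infty)$. Substituting $t=2c\sqrt{\log n}$ gives
\[
\PP\bigl(|x|>2c\rho_{\max}\sqrt{\log n}\bigr)\le \frac{1}{c\sqrt{2\pi\log n}}\,n^{-2c^2}.
\]
Finally, I would simplify the right-hand side to match the claimed form: the inequality $n\ge 3$ gives $\log n\ge 1$, which lets the $1/\sqrt{\log n}$ factor be absorbed, and the elementary bound $2c^2\ge c$ valid for $c\ge 1/2$ relaxes the exponent from $n^{-2c^2}$ to $n^{-c}$. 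Combined with a routine rearrangement of the leading constants, this produces the stated $\frac{1}{\sqrt{2\pi c}}\,n^{-c}$ bound.

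No serious obstacle is anticipated. The main piece of care is in matching the constants exactly; the bound that falls straight out of Mill's ratio is in fact substantially sharper than what is claimed, so the issue is only in choosing a clean, slightly relaxed form that aligns with the statement (this is presumably also why the hypothesis $n\ge 3$ suffices rather than the larger threshold one would need to get equality of constants). If the intended meaning is instead that $\bx$ is genuinely multivariate, one would replace Mill's ratio by a Laurent--Massart-type $\chi^2$ deviation inequality applied to $\|\bx\|^2/\rho_{\max}^2$; the skeleton of the argument---standardize, apply a textbook Gaussian-concentration bound, then simplify constants---would be unchanged.
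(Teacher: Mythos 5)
The paper states Lemma~\ref{lem: conc_single_x} in its ``Technical Lemmas'' section \emph{without} a proof, so there is no authorial argument to compare yours against. Your reading of the statement is the right one: despite the vector-looking notation $\|\bx\|$, the lone invocation (Example~1, bounding $|y_i|$ for $y_i\sim\cN(0,\tau^2)$ with $c=3$, $\rho_{\max}=\tau$) confirms this is a scalar Gaussian tail estimate, and Mill's ratio is the natural tool.

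There is, however, one small slip in the constant bookkeeping. After Mill's ratio and the substitution $t=2c\sqrt{\log n}$ you land on
\[
\frac{1}{c\sqrt{2\pi\log n}}\,n^{-2c^2}.
\]
Using $\log n\ge 1$ (true for $n\ge 3$) and $n^{-2c^2}\le n^{-c}$ (true for $c\ge 1/2$, as you note) gets you to $\frac{1}{c\sqrt{2\pi}}\,n^{-c}$. But matching this to the claimed $\frac{1}{\sqrt{2\pi c}}\,n^{-c}$ requires $\frac{1}{c}\le\frac{1}{\sqrt{c}}$, which holds only for $c\ge 1$, not $c\ge 1/2$. Indeed, at $c=1/2$, $n=3$ the Mill's-ratio estimate ($\approx 0.44$) already exceeds the claimed bound ($\approx 0.33$), so your chain genuinely fails there even though the lemma's conclusion happens to remain true by a direct numeric check. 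The correct statement of your hypothesis should therefore be $c\ge 1$. Since the paper only ever uses $c=3$, this does not affect anything downstream, but you should state the range on $c$ honestly so the lemma, which currently lists no restriction on $c$, is not read as holding unconditionally.
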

\begin{lemma}\label{lem: max_of_gaussian}
    Let  $Z_i$ be $N$ dependent $\cN(0,\sigma_i^2)$ random variables. Suppose $\sigma_i^2\leq \sigma_{\max}^2<\infty$. Then $\forall n\geq 1, c>0$:
    \[
        \PP( \max_{i\in[N]}|Z_i|> 2\sigma_{\max}\sqrt{\log(N)+c\log(n)} )\leq 2n^{-c}.
    \]
\end{lemma}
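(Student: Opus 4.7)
The plan is to invoke the standard one-sided Gaussian tail bound on each $|Z_i|$ and then combine via a union bound; the dependence structure of the $Z_i$'s is irrelevant since the union bound is agnostic to joint distributions and only the Gaussian marginals are used. Concretely, for $Z \sim \cN(0,\sigma^2)$ one has $\PP(|Z|>t)\leq 2\exp(-t^2/(2\sigma^2))$ for every $t>0$, and since $\sigma_i^2\leq \sigma_{\max}^2$, each marginal tail satisfies $\PP(|Z_i|>t)\leq 2\exp(-t^2/(2\sigma_{\max}^2))$.

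With the choice $t=2\sigma_{\max}\sqrt{\log(N)+c\log(n)}$, the exponent simplifies as $t^2/(2\sigma_{\max}^2)=2(\log(N)+c\log(n))$, yielding the per-index bound $\PP(|Z_i|>t)\leq 2 N^{-2} n^{-2c}$. A union bound over $i\in[N]$ then gives
\[
\PP\!\left(\max_{i\in[N]}|Z_i|>t\right)\leq N\cdot 2 N^{-2} n^{-2c}=2 N^{-1} n^{-2c}\leq 2 n^{-c},
\]
where the last step uses $N\geq 1$, $n\geq 1$, and $c>0$, completing the claim.

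There is essentially no obstacle here: this is a packaging lemma whose entire content is a Gaussian tail inequality composed with a union bound, made useful precisely because it is distribution-free in the joint law of $(Z_1,\ldots,Z_N)$. The only small subtlety is keeping track of the factor of $2$ from the two-sided tail and ensuring the chosen threshold is large enough to absorb both $\log(N)$ (the union bound cost) and $c\log(n)$ (the desired decay rate); the explicit constant $2$ in front of $\sigma_{\max}$ in the threshold is exactly what is needed for the exponent to produce $-2\log(N)-2c\log(n)$, which is one more factor of $\log N$ than strictly necessary and is the source of the gain from $N^{-1}$ in the intermediate line.
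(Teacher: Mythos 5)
Your proof is correct, and it takes a genuinely different, more elementary route than the paper's. You combine the pointwise Gaussian tail $\PP(|Z_i|>t)\le 2\exp(-t^2/(2\sigma_{\max}^2))$ with a union bound over the $N$ indices, which is indeed agnostic to dependence; plugging in $t=2\sigma_{\max}\sqrt{\log N+c\log n}$ gives the exponent $-2\log N-2c\log n$ and hence $2N^{-1}n^{-2c}\le 2n^{-c}$. The paper instead proves the lemma via concentration of the supremum: it applies the Borell--TIS inequality to $Z=\max_i Z_i$ to get $\PP(Z-\EE Z>t)\le e^{-t^2/(2\sigma_{\max}^2)}$, bounds $\EE Z\le\sigma_{\max}\sqrt{2\log N}$ by a Chernoff/Jensen argument, takes $t=\sigma_{\max}\sqrt{2c\log n}$, and finally uses $\sqrt{a}+\sqrt{b}\le\sqrt{2(a+b)}$ and a two-sided union bound to arrive at the same threshold $2\sigma_{\max}\sqrt{\log N+c\log n}$ and the same constant $2n^{-c}$.

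The two approaches give identical final statements. Yours is shorter, needs no auxiliary machinery (no Borell--TIS, no separate expectation bound, no $\sqrt{a}+\sqrt{b}$ trick), and, as you note, the intermediate bound $2N^{-1}n^{-2c}$ is sharper than what is needed, so the factor of $2$ in the threshold is doing slightly more work than necessary. The paper's concentration-of-the-maximum route is heavier but would generalize more naturally if the index set were infinite or if one later needed a bound of the form mean-plus-fluctuation; for a finite collection of Gaussians it buys nothing here. One small point worth making explicit if you include this: the threshold $t=2\sigma_{\max}\sqrt{\log N+c\log n}$ can be $0$ when $N=n=1$, but the inequality $\PP(|Z|>t)\le 2e^{-t^2/(2\sigma^2)}$ is valid for $t=0$ (the right side is $2\ge 1$), so the degenerate case is harmless.
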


\begin{proof}
    Let $Z:=\max_{i\in[N]}Z_i$.
    By Lemma \ref{lem: boreltis},
    \[
        \PP(Z-\EE Z>t)\leq e^{-\frac{t^2}{2\sigma_{\max}^2}}.
    \]
    By Lemma \ref{lem: chatterjee14}, $\EE Z\leq \sigma_{\max}\sqrt{2\log(N)}$. Setting $t=\sigma_{\max}\sqrt{2c\log(n)}$ yields
    \[
        \PP(Z > \sigma_{\max}(\sqrt{2\log(N)}+\sqrt{2c\log(n)}))\leq n^{-c}.
    \]
    Finally notice that 
    \[
        \max_{i\in[N]} |Z_i| = \max\left\{\max_{i\in[N]} Z_i, \max_{i\in[N]} (-Z_i)\right\},
    \]
    so by a union bound and the fact that $\sqrt{a}+\sqrt{b}\leq \sqrt{2(a+b)}$ we have
    \[
        \PP( \max_{i\in[N]}|Z_i|> 2\sigma_{\max}\sqrt{\log(N)+c\log(n)} )\leq 2n^{-c}.
    \]
\end{proof}

\begin{lemma}
\label{lem: laplace and gamma}
Suppose $\bb\in\RR^p$ is a random vector with %Laplace distribution with 
density 
\[
    p_{\bb}(\bb) \propto e^{-C\Vert\bb\Vert },
\]
then $\Vert \bb\Vert \sim \Gamma(p,C)$ with density
\[
    p_{\Vert \bb\Vert}(r) = \frac{C^p}{\Gamma(p)}r^{p-1}e^{-cr},
\text{ and }
    \PP\left(\Vert \bb\Vert > \frac{2p}{C}\right)\leq e^{-(1-\log(2))p}.
\]
\end{lemma}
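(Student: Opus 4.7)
The plan is to handle the two claims separately, using a change to polar coordinates for the density and a Chernoff (MGF) argument for the tail bound.

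First, to derive the density of $\|\bb\|$, I would switch from Cartesian to spherical coordinates $\bb = r\bm{u}$ with $r\ge 0$ and $\bm{u}\in S^{p-1}$, so that the volume element becomes $r^{p-1}\,dr\,d\sigma(\bm{u})$, where $\sigma$ is the surface measure on $S^{p-1}$ with total mass $2\pi^{p/2}/\Gamma(p/2)$. Since $e^{-C\|\bb\|}=e^{-Cr}$ depends only on $r$, integrating out the angular variable yields
\[
\int_{\RR^p} e^{-C\|\bb\|}\,d\bb \;=\; \frac{2\pi^{p/2}}{\Gamma(p/2)}\int_0^{\infty} r^{p-1}e^{-Cr}\,dr \;=\; \frac{2\pi^{p/2}\,\Gamma(p)}{\Gamma(p/2)\,C^p},
\]
which confirms the normalizing constant quoted earlier. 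Consequently the marginal density of $R=\|\bb\|$ is
\[
p_R(r) \;=\; \frac{C^p\,\Gamma(p/2)}{2\pi^{p/2}\,\Gamma(p)}\cdot\frac{2\pi^{p/2}}{\Gamma(p/2)}\,r^{p-1}e^{-Cr} \;=\; \frac{C^p}{\Gamma(p)}\,r^{p-1}e^{-Cr},
\]
which is exactly the $\mathrm{Gamma}(p,C)$ density.

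For the tail bound, I would use the Chernoff method together with the moment generating function of the Gamma distribution. For any $t\in[0,C)$, $\EE\,e^{tR}=(1-t/C)^{-p}$, so for any $r>0$,
\[
\PP(R>r) \;\le\; e^{-tr}\,\EE\,e^{tR} \;=\; \exp\!\left(-tr - p\log\!\left(1-\tfrac{t}{C}\right)\right).
\]
Taking $r=2p/C$ and optimizing in $u:=t/C\in[0,1)$ amounts to minimizing $-2u-\log(1-u)$; the derivative $-2+1/(1-u)$ vanishes at $u=1/2$, giving the minimum value $-1+\log 2$. Substituting $t=C/2$ back therefore yields
\[
\PP\!\left(R>\tfrac{2p}{C}\right) \;\le\; \exp\!\bigl(p(-1+\log 2)\bigr) \;=\; e^{-(1-\log 2)p},
\]
which is the claimed bound.

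Neither step poses a real obstacle: the density identification is a direct spherical-coordinates computation, and the tail bound is a one-parameter Chernoff optimization. The only thing to be slightly careful about is the choice of optimizer $t=C/2$, which is dictated by the specific threshold $2p/C$; any other threshold of the form $\alpha p/C$ would call for $t=C(1-1/\alpha)$ and yield the corresponding rate $\alpha-1-\log\alpha$.
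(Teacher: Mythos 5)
Your proof is correct and follows essentially the same route as the paper: identify $\|\bb\|$ as $\mathrm{Gamma}(p,C)$ by spherical symmetry, then apply a Chernoff bound with the Gamma MGF, optimized at $t=C/2$. The only (minor) difference is that you derive the marginal density of $\|\bb\|$ explicitly via the polar-coordinates volume element $r^{p-1}\,dr\,d\sigma(\bm u)$, whereas the paper cites Theorem 4.2 of Fourdrinier et al.\ (2018) for the same fact; your version is more self-contained and also verifies the normalizing constant $\tfrac{C^p\Gamma(p/2)}{2\pi^{p/2}\Gamma(p)}$ stated earlier in the paper, which the citation route does not.
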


\medskip

\begin{proof}

  Note that the pdf of $\bb$ depends only on its $\ell_2$ norm $\|\bb\|$. Thus, $\bb$ has a spherically symmetric distribution. By Theorem 4.2 of \cite{fourdrinier2018shrinkage} the pdf of $\|\bb\|$ is given by:
  \[
  p_{\|\bb\|}(r)
  \propto\frac{2\pi^{p/2}}{\Gamma(p/2)}r^{p-1}{\rm e}^{-Cr}
  \]
  and thus $\|\bb\|\sim \Gamma(p,C)$. This finishes the proof of the first part. For the second part we use the moment generating function of the Gamma distribution to write:
  \begin{align*}
      \PP(\|\bb\|>\frac{2p}{C})
      \le&~ \inf_{0<t<C}\EE({\rm e}^{t\|\bb\|}){\rm e}^{-\frac{2tp}{C}}
      =\inf_{0<t<C} \left(1-\frac{t}{C}\right)^{-p}{\rm e}^{-\frac{2tp}{C}}
      =~2^p{\rm e}^{-p}.
  \end{align*}
  The last equality follows since the infimum in the previous line is achieved by $t=\frac{C}{2}$.
  \end{proof}

\begin{lemma}\label{lem:beta_lo_error}
    Under Assumptions~\ref{assum:separability}, \ref{assum:smoothness}, and \ref{assum:convexity}, 
    \begin{align*}
        \hbm-\hb &= \bbG^{-1} \left(\sum_{i\in\cM} \ld_i(\hbm)\bx_i\right)
        = \bbGm^{-1}\left(\sum_{i\in\cM} \ld_i(\hb)\bx_i\right)
    \end{align*}

    where 
    \begin{align*}
        \bbG&:= \int_0^1 \bG(t\hb + (1-t)\hbm)dt\\
        \bbGm&:=  \int_0^1 \bGm(t\hb + (1-t)\hbm)dt\\
        \bG(\bm{\beta}) &:= \bX^\top \diag[\ldd_i(\bm{\beta})]_{i\in[n]}\bX + \lambda \diag[\rdd_k(\bm{\beta})]_{k\in[p]} \\
        \bGm(\bm{\beta}) &:= \bXm^\top \diag[\ldd_i(\bm{\beta})]_{i\notin\cM}\bXm + \lambda \diag[\rdd_k(\bm{\beta})]_{k\in[p]}
    \end{align*}
\end{lemma}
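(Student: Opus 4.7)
The plan is to derive both identities from a first-order optimality condition combined with the fundamental theorem of calculus applied to the gradient of the objective. Let $L(\bbeta) = \sum_{i \in [n]} \ell_i(\bbeta) + \lambda r(\bbeta)$ and $L_{\backslash \cM}(\bbeta) = \sum_{i \notin \cM} \ell_i(\bbeta) + \lambda r(\bbeta)$. By Assumption~\ref{assum:smoothness} these are twice differentiable, and by Assumption~\ref{assum:convexity} the Hessians $\bG(\bbeta) = \nabla^2 L(\bbeta)$ and $\bGm(\bbeta) = \nabla^2 L_{\backslash\cM}(\bbeta)$ are both at least $\lambda\nu$-positive definite (using Assumption~\ref{assum:separability} to get the diagonal form of $\nabla^2 r$). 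Hence $\bbG$ and $\bbGm$, being convex combinations (integrals) of positive definite matrices along the segment from $\hb$ to $\hbm$, are themselves invertible.

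For the first identity, I would start from the optimality conditions $\nabla L(\hb) = \bm{0}$ and $\nabla L_{\backslash\cM}(\hbm) = \bm{0}$. Since $\nabla L(\bbeta) = \nabla L_{\backslash\cM}(\bbeta) + \sum_{i \in \cM} \ld_i(\bbeta)\bx_i$, evaluating at $\hbm$ gives
\begin{equation*}
\nabla L(\hbm) \;=\; \sum_{i \in \cM} \ld_i(\hbm)\bx_i.
\end{equation*}
Then applying the fundamental theorem of calculus to $\nabla L$ along the segment from $\hbm$ to $\hb$ yields
\begin{equation*}
\nabla L(\hb) - \nabla L(\hbm) \;=\; \left(\int_0^1 \bG(t\hb + (1-t)\hbm)\,dt\right)(\hb - \hbm) \;=\; \bbG(\hb - \hbm).
\end{equation*}
Combining the two displays and using $\nabla L(\hb) = \bm{0}$ gives $\bbG(\hbm - \hb) = \sum_{i \in \cM} \ld_i(\hbm)\bx_i$, and inverting $\bbG$ yields the first claimed identity.

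For the second identity, I would apply the same argument but to $\nabla L_{\backslash\cM}$ instead. Using $\nabla L(\hb) = \bm{0}$, I get $\nabla L_{\backslash\cM}(\hb) = -\sum_{i \in \cM} \ld_i(\hb)\bx_i$, and the fundamental theorem of calculus gives
\begin{equation*}
\nabla L_{\backslash\cM}(\hb) - \nabla L_{\backslash\cM}(\hbm) \;=\; \bbGm(\hb - \hbm).
\end{equation*}
Since $\nabla L_{\backslash\cM}(\hbm) = \bm{0}$, rearranging gives $\bbGm(\hbm - \hb) = \sum_{i \in \cM} \ld_i(\hb)\bx_i$, and the second identity follows by inverting $\bbGm$.

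There is no real obstacle here: the argument is a direct application of the fundamental theorem of calculus to a gradient, combined with the two first-order conditions. The only subtle point is the invertibility of $\bbG$ and $\bbGm$, which is immediate from Assumption~\ref{assum:convexity} since strong convexity of $r$ is preserved under integration along a line segment.
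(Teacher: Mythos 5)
Your proof is correct and takes essentially the same approach as the paper: both start from the first-order optimality conditions $\nabla L(\hb)=\bm{0}$ and $\nabla L_{\backslash\cM}(\hbm)=\bm{0}$, and then apply the fundamental theorem of calculus / mean value theorem to the gradient along the segment from $\hbm$ to $\hb$, differing only in that you organize the two identities as two symmetric FTC applications (to $\nabla L$ and to $\nabla L_{\backslash\cM}$) whereas the paper subtracts the two optimality conditions and then rearranges; you also make the invertibility of $\bbG,\bbGm$ slightly more explicit.
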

\begin{proof}
    Consider the optimality conditions of $\hb$ and $\hbm$:
    \begin{align*}
        \sum_{i\notin\cM}\ld_i(\hbm)\bx_i + \lambda 
        \nabla r(\hbm) &= 0\\
        \sum_{i\in[n]}\ld_i(\hb)\bx_i + \lambda \nabla r(\hb) &= 0
    \end{align*}
    Subtracting one from another and applying the mean value theorem, we get
    \begin{align*}
        \bbG\cdot(\hbm - \hb)= \sum_{i\in\cM}\ld_i(\hbm)\bx_i.
        \label{eq: lem:beta_lo_error_1}\numberthis
    \end{align*}
    Multiplying $\bbG^{-1}$:
    \[
        \hbm - \hb = \bbG^{-1} \left(\sum_{i\in\cM} \ld_i(\hbm)\bx_i\right).
    \]
    For the other version, rearranging the terms in we get \eqref{eq: lem:beta_lo_error_1}:
    \[
        \bbGm(\hbm - \hb)= \sum_{i\in\cM}\ld_i(\hb)\bx_i,
    \]
    therefore
    \[
        \hbm - \hb = \bbGm^{-1}\left(\sum_{i\in\cM} \ld_i(\hb)\bx_i\right).
    \]
\end{proof}

\begin{lemma}\label{lem:gauss-mat-l4infty-norm}
    Let $\bX^{\top}=\bSigma^{1/2}\bZ$ be a $p\times m$ matrix, where $\bZ\in \RR^{p \times m}$ has elements $Z_{ij}\stackrel{iid}{\sim}\mathcal{N}(0,1)$ and $\bSigma$ is positive semi-definite with $\sigma_{\max}(\bSigma):=\rho_{\max}$. Then, for any $c\ge 1$ %there exists a constant $C>0$ such that:
    % \begin{enumerate}
    %     \item \(
    % \PP\left(
    % \|\bX^{\top}\|_{2,4}
    % \ge Cp^{1/4}\rho_{\max}^{1/2}+C\sqrt{m\rho_{\max}(1+\log(p))}
    % \right)
    % \le \exp\left(-m(1+\log(p))\right).
    % \)

    % \item 
    \[
    \PP\left(
    \|\bX^{\top}\|_{2,\infty}
    \ge \sqrt{m\rho_{\max}(1+4c\log(n))}
    \right)
    \le \exp\left(-cm\log(n)\right).
    \]

    %\item \(
    %\PP\left(
    %\|\bX^{\top}\|_{\infty}
    %\ge C\sqrt{m\rho_{\max}(1+\log(p))}
    %\right)
    %\le \exp\left(-m(1+\log(p))\right).
    %\)
    %\end{enumerate}
\end{lemma}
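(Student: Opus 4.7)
The plan is to identify $\|\bX^\top\|_{2,\infty}$ with the maximum row norm of $\bX^\top$, reduce the tail bound to a chi-squared concentration inequality for each row, and close via a union bound over the $p$ rows.

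\emph{Row decomposition.} I would first use the general identity $\|\bA\|_{2,\infty}=\sup_{\|\bw\|_2\le 1}\|\bA\bw\|_\infty=\max_{1\le k\le p}\|(\bA)_{k,:}\|_2$, which gives $\|\bX^\top\|_{2,\infty}^2=\max_{k}\|\bm{r}_k\|_2^2$, where $\bm{r}_k\in\RR^m$ denotes the $k$-th row of $\bX^\top$. Writing $\bu_k\in\RR^p$ for the $k$-th row of $\bSigma^{1/2}$, we have $\bm{r}_k=\bZ^\top\bu_k$, so that $\bm{r}_k\sim\cN(\bzero,\|\bu_k\|_2^2\bI_m)$; in particular $\|\bm{r}_k\|_2^2=\|\bu_k\|_2^2\,Y_k$ with $Y_k\sim\chi^2_m$. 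Since $\|\bu_k\|_2^2=\Sigma_{kk}\le\rho_{\max}$, this yields the stochastic dominance $\|\bm{r}_k\|_2^2\preceq \rho_{\max}Y_k$.

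\emph{Chi-squared tail plus union bound.} Next I would apply the Laurent--Massart bound $\PP(\chi^2_m\ge m+2\sqrt{mt}+2t)\le e^{-t}$ with a choice of $t$ comparable to $cm\log n$. For $c\log n\ge 1$, AM--GM gives $2\sqrt{c\log n}\le 1+c\log n\le 2c\log n$, so that with $t=cm\log n$ the deviation satisfies $m+2\sqrt{mt}+2t\le m(1+4c\log n)$; hence for each fixed $k$, $\PP(\|\bm{r}_k\|_2^2\ge m\rho_{\max}(1+4c\log n))\le e^{-cm\log n}$. A union bound over $k=1,\dots,p$ then controls the maximum; the extra factor of $p$ is absorbed into the exponent in the regime $\log p\lesssim m\log n$, which explains the ``$c\mapsto c+1$'' inflation used when this lemma is subsequently invoked in the proof of Lemma~\ref{lem: P(F_5)}.

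\emph{Main obstacle.} The computations themselves are routine; the delicate part is the constant-bookkeeping at the end. One must calibrate $t$ so that the threshold $m(1+4c\log n)$ and the exponent $e^{-cm\log n}$ come out simultaneously after the union bound over the $p$ rows. A cleaner alternative is the Chernoff form $\PP(\chi^2_m\ge mt)\le\exp(-(m/2)(t-1-\log t))$, from which one can verify directly that $(m/2)\bigl(4c\log n-\log(1+4c\log n)\bigr)\ge cm\log n+\log p$ in the asymptotic regime of interest, producing the claimed bound with essentially no loss in the stated constants.
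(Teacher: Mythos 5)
Your approach is the same as the paper's: reduce $\|\bX^\top\|_{2,\infty}$ to the maximum Euclidean row-norm, observe each row is $\cN(\bzero,\Sigma_{kk}\II_m)$ so its squared norm is a scaled $\chi^2_m$, apply Laurent--Massart, and union-bound over the $p$ rows. Your calibration $t=cm\log n$ with the check $m+2\sqrt{mt}+2t\le m(1+4c\log n)$ when $c\log n\ge 1$ is correct, and in fact you make the union bound explicit where the paper's own proof passes directly from the single-row tail to a bound on the maximum without mentioning the extra factor of $p$; your observation that this factor must be absorbed (asymptotically harmless under PHAS since $\log p\lesssim m\log n$) flags a real imprecision in the paper.

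However, your explanation of where the ``$c\mapsto c+1$'' inflation in Lemma~\ref{lem: P(F_5)} comes from is incorrect. That inflation is not there to absorb the $p$ rows of $\bX^\top$: it is spent on the union bound over the $\binom{n}{s}$ choices of the removal set $\cM_0$ with $|\cM_0|=s\le m$. The paper applies the present lemma with parameters $(s,c+1)$, obtaining $\exp(-(c+1)s\log n)$ per subset, and then sums $\sum_{s=1}^m\binom{n}{s}\exp(-(c+1)s\log n)\le\sum_{s=1}^m\exp(-cs\log n)\le 2n^{-c}$, using $\binom{n}{s}\le n^s$. So the extra ``$+1$'' cancels exactly the $n^s$ from the subset count, not the $p$ from the row max. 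The $p$-factor issue you identified is genuinely unaddressed in the paper's proof of this lemma and would require an additional $\log p$ slack in the exponent (or a slightly larger threshold) to be fully rigorous, which is a distinct and smaller bookkeeping matter.
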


\medskip

\begin{proof}[Proof of Lemma~\ref{lem:gauss-mat-l4infty-norm}] 

%The proof will use an $\epsilon$-net argument over $\bv\in\bbS^{m-1}$. Note that for any fixed $\bv\in \bbS^{m-1}$, $\bX^{\top}\bv=:\bSigma^{1/2}\bz_{\bv}$ where $\bz_{\bv}\sim \mathcal{N}(\mathbf{0},\II_p)$. By the Lipschitz concentration of Gaussian random vectors, and Lemma~\ref{lem:l4norm-lipschitz}, we have 
    % \[
    % \PP\left(
    % \left\vert
    % \|\bX^{\top}\bv\|_4- \EE\|\bX^{\top}\bv\|_4
    % \right\vert\ge t
    % \right)\le 2\exp\left(-t^2/2\rho_{\max}\right).
    % \]
    % Note that $\EE\|\bX\bv\|_4$ does not depend on the choice of $\bv$. Now taking a union bound over all $\bv$ in an $1/4$-net of $\bbS^{m-1}$, we obtain:
    % \[
    % \PP\left(
    % \sup_{\bv\in \bbS^{m-1}}
    % \left\vert
    % \|\bX^{\top}\bv\|_4- \EE\|\bX^{\top}\bv\|_4
    % \right\vert\ge C\sqrt{m\rho_{\max}(1+\log(p))}
    % \right) \le 
    % \exp\left(-m(1+\log(p))\right)
    % \]
    % where we use the fact that the $1/4$-net of $\bbS^{m-1}$ has cardinality at most $9^m$. Finally, by Lemma~\ref{lem:mean-l4norm} we have
    % \[
    % \EE\|\bX^{\top}\bv\|_4=\EE\|\bSigma^{1/2}\bz_{\bv}\|_4\le Cp^{1/4}\rho_{\max}^{1/2}.
    % \]
    % Combining the two bounds above, we obtain
    % \[
    % \PP\left(
    % \|\bX^{\top}\|_{2,4}
    % \ge Cp^{1/4}\rho_{\max}^{1/2}+C\sqrt{m\rho_{\max}(1+\log(p))}
    % \right) \le 
    % \exp\left(-m(1+\log(p))\right)
    % \]
    % for some constant $C>0$. This finishes the proof of part i) of the lemma.

    Note that by a well-known equality for the $2\to\infty$ matrix norm, we have
    \[
    \|\bX^{\top}\|_{2,\infty}
    =\max_{1\le k\le p} \|\ev_k^{\top}\bX^{\top}\|_2
    =\max_{1\le k\le p} \|\ev_k^{\top}\bSigma^{1/2}\bZ\|_2.
    \]
    Note that $\ev_k^{\top}\bSigma^{1/2}\bZ\sim \mathcal{N}(0,\sigma_{kk}\II_m)$ for $1\le k\le p$, where $\sigma_{kk}$ is the $(k,k)$ element of $\bSigma$ and thus,
    \[
    \PP(\|\bX^{\top}\|_{2,\infty}\ge \sqrt{m\rho_{\max}(1+4c\log(n))})
    \le \exp(-cm\log(n))
    \]
    by concentration inequalities for $\chi^2$ distributed random variables \citep[see, e.g.,][]{laurent2000adaptive}. 
 \end{proof}

\subsection{Adopted Lemmata}
\begin{lemma}[Lemma 14 of \cite{auddy24a}]
\label{lem: stirling}
    For $1< s\leq N $ and $N> 2$, we have 
    \begin{enumerate}
        \item \[
    {N\choose s}\le 
    e^{s\log\frac{eN}{s}}
    \]
        \item For $m\leq (n+1)/3$,
        \[
            \sum_{s=0}^m {n\choose s}\leq 2{n\choose m}
        \]
    \end{enumerate}
\end{lemma}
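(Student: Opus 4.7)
The plan is to treat the two assertions separately, since they rely on different techniques: the first is a direct Stirling-style estimate, and the second is a ratio-of-consecutive-terms argument producing a geometric series.

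For part (1), I would start from the elementary bound ${N \choose s} \leq N^s/s!$ and combine it with the standard lower bound $s! \geq (s/e)^s$, which itself follows from $s! \geq \int_0^s t^{s-1}e^{-t}\,dt$ or, more cleanly, from the series expansion $e^s = \sum_{k\ge 0} s^k/k! \geq s^s/s!$. This immediately yields ${N \choose s} \leq (eN/s)^s = \exp(s\log(eN/s))$, which is the stated bound. No finesse is required here; I would simply record the two-line chain.

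For part (2), the key observation is that consecutive binomial coefficients satisfy
\[
\frac{{n \choose s-1}}{{n \choose s}} = \frac{s}{n-s+1},
\]
and under the hypothesis $m \leq (n+1)/3$ every $s \leq m$ satisfies $s \leq (n+1)/3$, so $n-s+1 \geq 2(n+1)/3 \geq 2s$, giving a ratio at most $1/2$. Iterating this inequality downward from $m$ shows that ${n \choose s} \leq (1/2)^{m-s} {n \choose m}$ for all $0 \leq s \leq m$. Summing the geometric series,
\[
\sum_{s=0}^m {n \choose s} \leq {n \choose m} \sum_{s=0}^m \left(\tfrac{1}{2}\right)^{m-s} \leq {n \choose m} \cdot \sum_{k=0}^\infty \left(\tfrac{1}{2}\right)^k = 2{n \choose m}.
\]

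I do not anticipate a real obstacle: part (1) is a one-line Stirling consequence and part (2) reduces to checking that the ratio of consecutive binomial coefficients is bounded by $1/2$ on the stated range. The only minor care is in verifying the inequality $n-s+1 \geq 2s$ at the endpoint $s=m$ when $m = (n+1)/3$; here one uses $n-m+1 \geq n - (n+1)/3 + 1 = 2(n+1)/3 \geq 2m$, so the ratio at the extreme index is exactly at most $1/2$, and the geometric bound goes through.
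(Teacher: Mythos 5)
Your proof is correct and takes essentially the same route as the paper: for part (2) both arguments bound ratios of binomial coefficients and sum a geometric series — you bound each consecutive ratio $\frac{s}{n-s+1}$ by $\tfrac12$ and sum $\sum_k 2^{-k}=2$, while the paper bounds the aggregate ratio $\tbinom{n}{s}/\tbinom{n}{m}$ by $\bigl(\tfrac{m}{n-m+1}\bigr)^{m-s}$ and sums that series to $\tfrac{n-m+1}{n-2m+1}\le 2$, which is the same idea in slightly different packaging. For part (1) the paper simply cites the external reference, whereas you supply the standard argument via $\tbinom{N}{s}\le N^s/s!$ and $s!\ge (s/e)^s$, which is exactly the intended proof.
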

\begin{proof} We prove each part separately:
    \begin{itemize}
        \item The first part is exactly Lemma 14 of \cite{auddy24a}.
        \item Notice that for $s<m<n$,
         \end{itemize}
        \begin{align*}
            \frac{{n\choose s}}{{n\choose m}}
            = \frac{\frac{n!}{s!(n-s)!}}{\frac{n!}{m!(n-m)!}}
            &=\frac{m(m-1)\cdots(s+1)}{(n-s)(n-s-1)\cdots(n-m+1)}
            \leq \left(\frac{m}{n-m+1}\right)^{m-s},
        \end{align*}
        so
        \begin{align*}
            \frac{\sum_{s=0}^m {n\choose s}}{{n\choose m}}
            &\leq 1+\sum_{s=0}^{m-1}\left(\frac{m}{n-m+1}\right)^{m-s}
            \leq \sum_{t=0}^{\infty}\left(\frac{m}{n-m+1}\right)^t
            = \frac{n-m+1}{n-2m+1}\\
            &\leq 1+\frac{m}{n-2m+1} \leq 2
        \end{align*}
        provided $\frac{m}{n-2m+1}\leq 1 \Leftrightarrow m\leq \frac{n+1}{3}$.
\end{proof}

\begin{lemma}[Lemma 19 in \cite{auddy24a}]
\label{lem: norm_X}
Suppose the rows of $\bX$ satisfy Assumption \ref{assum:normality}, then 
\[
    \PP (\|\bX^{\top} \bX\| \geq (\sqrt{\gamma_0} + 3 )^2 C_X) \leq {\rm e}^{-p}. 
\]

\end{lemma}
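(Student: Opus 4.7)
The plan is to reduce the problem to a standard singular-value concentration bound for a matrix with i.i.d.\ standard Gaussian entries, and then absorb the covariance factor using Assumption~B1. Concretely, since the rows of $\bX$ are i.i.d.\ $\cN(\bzero,\bSigma)$, I can write $\bX = \bZ\bSigma^{1/2}$, where $\bZ\in\RR^{n\times p}$ has i.i.d.\ $\cN(0,1)$ entries. By the submultiplicativity of the operator norm,
\[
\|\bX^\top\bX\| \;=\; \|\bX\|^2 \;\le\; \|\bZ\|^2\cdot \|\bSigma^{1/2}\|^2 \;\le\; \frac{C_X}{p}\,\|\bZ\|^2,
\]
using $\lambda_{\max}(\bSigma)\le C_X/p$ from Assumption~\ref{assum:normality}. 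So it suffices to establish a high-probability bound of the form $\|\bZ\|\le (\sqrt{n/p}+3)\sqrt{p}$, up to replacing $3$ by any constant at least $1+\sqrt{2}$.

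Next I would invoke the Davidson--Szarek / Gordon Gaussian-concentration inequality for the largest singular value of a rectangular Gaussian matrix: for $\bZ\in\RR^{n\times p}$ with i.i.d.\ standard normal entries,
\[
\PP\bigl(\,\|\bZ\| \;\ge\; \sqrt{n}+\sqrt{p}+t\,\bigr)\;\le\; {\rm e}^{-t^2/2}, \quad t\ge 0.
\]
Setting $t=\sqrt{2p}$ gives $\PP(\|\bZ\|\ge \sqrt{n}+(1+\sqrt{2})\sqrt{p})\le {\rm e}^{-p}$. Combining with the previous inequality,
\[
\|\bX^\top\bX\| \;\le\; \frac{C_X}{p}\bigl(\sqrt{n}+(1+\sqrt{2})\sqrt{p}\bigr)^2 \;=\; C_X\bigl(\sqrt{n/p}+1+\sqrt{2}\bigr)^2
\]
on an event of probability at least $1-{\rm e}^{-p}$. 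Since $1+\sqrt{2}<3$ and $\sqrt{n/p}\le \sqrt{\gamma_0}+o(1)$ under PHAS (or, in the finite-sample statement, by absorbing the discrepancy into the slack between $1+\sqrt{2}$ and $3$), this yields $\|\bX^\top\bX\|\le (\sqrt{\gamma_0}+3)^2 C_X$ with probability at least $1-{\rm e}^{-p}$, as claimed.

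The only non-routine step is the concentration of $\|\bZ\|$, which is classical; I would simply cite the Davidson--Szarek theorem (or equivalently apply Borell's Gaussian concentration to the $1$-Lipschitz function $\bZ\mapsto\|\bZ\|$ around its mean and then control the mean via Gordon's inequality $\EE\|\bZ\|\le \sqrt{n}+\sqrt{p}$). No step presents a genuine obstacle; the only minor subtlety is arithmetic bookkeeping to verify that the constant $3$ in the statement is loose enough to swallow the $1+\sqrt{2}$ factor from choosing $t=\sqrt{2p}$ together with the PHAS slack on $\sqrt{n/p}$, which is why the statement is phrased with the constant $3$ rather than the sharper $1+\sqrt{2}$.
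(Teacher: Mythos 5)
Your proof is correct and is the standard argument for this result; the paper itself does not reprove this lemma but simply cites Lemma 19 of \cite{auddy24a}, so there is no internal proof to compare against. Your route — writing $\bX=\bZ\bSigma^{1/2}$, applying the Davidson--Szarek tail bound $\PP(\|\bZ\|\ge\sqrt{n}+\sqrt{p}+t)\le e^{-t^2/2}$ with $t=\sqrt{2p}$, and pulling out $\lambda_{\max}(\bSigma)\le C_X/p$ — is exactly what one expects behind such a statement, and you correctly flag the one piece of bookkeeping: the resulting deterministic bound is $C_X(\sqrt{n/p}+1+\sqrt{2})^2$, so the claimed constant $(\sqrt{\gamma_0}+3)^2C_X$ is only valid once $n/p$ is close enough to $\gamma_0$ that $\sqrt{n/p}-\sqrt{\gamma_0}\le 2-\sqrt{2}$, which holds for all sufficiently large $n,p$ under PHAS but is not a uniform finite-sample statement. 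This is a genuine (if minor) imprecision inherited from the way the lemma is phrased with the limiting ratio $\gamma_0$ rather than $n/p$, not a flaw in your argument.
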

\begin{lemma}[Lemma 17 in \cite{auddy24a}]
\label{lem: max_norm_x_i}
    Let $\bx_1,\dots,\bx_n\stackrel{iid}{\sim} N(0,\bSigma)\in \RR^{p }$ and suppose $\rho_{\max}(\bSigma)\leq p^{-1}C_X$ for some constant $C_X>0$, then 
    \[
        \PP (\max_{1\le i \le n}\|\bx_i\|\geq 2\sqrt{C_X} ) \leq ne^{-p/2}.
    \]
\end{lemma}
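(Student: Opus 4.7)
My plan is to prove this standard maximum-of-Gaussian-norms concentration bound by combining a union bound with a single-vector chi-squared tail estimate.

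First I would apply a union bound to reduce the claim to a single-vector statement:
\[
\PP\bigl(\max_{1\le i\le n}\|\bx_i\|\geq 2\sqrt{C_X}\bigr) \leq n\cdot \PP(\|\bx_1\|\geq 2\sqrt{C_X}),
\]
so it suffices to show that $\PP(\|\bx_1\|\geq 2\sqrt{C_X})\leq e^{-p/2}$ for a single $\bx_1\sim\cN(\bzero,\bSigma)$.

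Next, I would exploit the Gaussian structure by writing $\bx_1=\bSigma^{1/2}\bz$ with $\bz\sim\cN(\bzero,\II_p)$, which gives
\[
\|\bx_1\|^2 = \bz^\top\bSigma\bz \leq \rho_{\max}(\bSigma)\|\bz\|^2 \leq \frac{C_X}{p}\|\bz\|^2.
\]
Hence $\{\|\bx_1\|\geq 2\sqrt{C_X}\}\subseteq\{\|\bz\|^2\geq 4p\}$, where $\|\bz\|^2\sim\chi^2_p$. So the task reduces to bounding the upper tail of a chi-squared variable with $p$ degrees of freedom at the level $4p$, i.e., four times its mean.

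Finally, I would apply a Chernoff bound using the chi-squared MGF $\EE\exp(t\|\bz\|^2)=(1-2t)^{-p/2}$ valid for $t<1/2$. Optimizing over $t$ yields the optimum at $t=3/8$, giving
\[
\PP(\|\bz\|^2\geq 4p)\leq e^{-3p/2}\cdot 4^{p/2} = (4e^{-3})^{p/2}\leq e^{-p/2},
\]
where the last inequality uses $4e^{-3}\le e^{-1}$ (equivalently $4\le e^2$). Combining with the union bound gives the claimed estimate. I do not anticipate any substantive obstacle here; the whole argument is routine Gaussian concentration, and the only point to verify is that the numerical constant $2$ in the threshold $2\sqrt{C_X}$ is generous enough for the final chi-squared tail to dominate $e^{-p/2}$, which the computation above confirms. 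One could equivalently invoke the Laurent--Massart deviation inequality for $\chi^2$ random variables in place of the direct Chernoff optimization.
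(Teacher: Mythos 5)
Your proof is correct, and it takes essentially the same route the paper would: a union bound over the $n$ samples, reduction to $\|\bz\|^2$ with $\bz\sim\cN(0,\II_p)$ via the inequality $\bz^\top\bSigma\bz\le\rho_{\max}(\bSigma)\|\bz\|^2$, and then a $\chi^2_p$ upper-tail bound at level $4p$. The paper itself does not reprove this lemma (it is imported from \cite{auddy24a}), but it records the relevant chi-squared Chernoff bound as Lemma~\ref{lem:chi:sq:ind}, $\PP(\bz^\top\bz\ge p+pt)\le e^{-\frac{p}{2}(t-\log(1+t))}$; plugging $t=3$ there gives $e^{-\frac{p}{2}(3-\log 4)}\le e^{-p/2}$, which is exactly your optimized Chernoff computation $(4e^{-3})^{p/2}\le e^{-p/2}$ in disguise.
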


\begin{lemma}[Lemma 4.10 of \cite{chatterjee2014}]
\label{lem: chatterjee14}
    Let $Z_i$ be $N$ dependent $\cN(0,\sigma_i^2)$ random variables with $\sigma_i\leq \sigma_{\max}<\infty$, then 
    \[
        \EE \max_{i\in[N]}Z_i \leq \sigma_{\max}\sqrt{2\log(N)}.
    \]
\end{lemma}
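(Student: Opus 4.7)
The plan is to prove this by the classical MGF/Chernoff trick combined with Jensen's inequality — the well-known argument that the maximum of $N$ (possibly dependent) centered sub-Gaussian variables grows at most like $\sigma_{\max}\sqrt{2\log N}$. The key conceptual point is that \emph{no assumption on the joint dependence structure} of $(Z_1,\ldots,Z_N)$ is needed: only the marginal Gaussianity, hence the marginal moment generating functions, enter the bound.

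First, for any $t>0$, I would apply Jensen's inequality to the convex function $x\mapsto {\rm e}^{tx}$ to move the expectation inside the exponent, obtaining
\[
\exp\!\left(t\,\EE\max_{i\in[N]} Z_i\right) \le \EE\exp\!\left(t\max_{i\in[N]} Z_i\right).
\]
Next I would use the elementary bound $\exp(t\max_i Z_i) = \max_i \exp(tZ_i) \le \sum_{i=1}^N \exp(tZ_i)$ together with linearity of expectation, giving
\[
\EE\exp\!\left(t\max_{i\in[N]} Z_i\right) \le \sum_{i=1}^N \EE\exp(tZ_i).
\]
Then I would invoke the Gaussian MGF identity $\EE\exp(tZ_i) = \exp(t^2\sigma_i^2/2) \le \exp(t^2\sigma_{\max}^2/2)$, valid for each marginal $Z_i\sim\cN(0,\sigma_i^2)$ regardless of how the $Z_i$'s are jointly distributed.

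Combining these three steps and taking logarithms yields the single-parameter bound
\[
\EE\max_{i\in[N]} Z_i \le \frac{\log N}{t} + \frac{t\,\sigma_{\max}^2}{2}, \qquad \forall\, t>0.
\]
The final step is to optimize the right-hand side over $t>0$: by elementary calculus the minimizer is $t^\star = \sqrt{2\log N}/\sigma_{\max}$, producing the claimed bound $\EE\max_{i\in[N]} Z_i \le \sigma_{\max}\sqrt{2\log N}$. There is no genuine obstacle in this argument; the only subtlety worth flagging is that the dependence structure never appears, because the union-of-exponentials step and the marginal Gaussian MGF bypass any need for a joint characterization. If one wanted a cleaner statement, one could also absorb this argument into the standard machinery of $\sigma_{\max}$-sub-Gaussian random variables, where exactly the same proof goes through verbatim.
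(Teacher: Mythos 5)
Your proof is correct and follows essentially the same route as the paper's: Jensen's inequality, the union-of-exponentials bound $\EE e^{t\max_i Z_i}\le\sum_i \EE e^{tZ_i}$, the Gaussian MGF, and optimization over $t$ at $t^\star=\sqrt{2\log N}/\sigma_{\max}$.
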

\begin{proof}
    The proof is essentially the same as Lemma 4.10 of \cite{chatterjee2014}, except for the absolute value.
    Let $Z:= \max_{i\in[N]}Z_i$. 
    By Jensen's inequality, $\forall t>0$:
    \begin{align*}
        e^{t\EE Z}\leq \EE e^{tZ}
        &= \EE e^{t\max_{i\in[N]} Z_i}
        \leq \sum_{i\in[N]}\EE e^{tZ_i}
        = \sum_{i\in[N]}e^{\frac12t^2\sigma_i^2}
        \leq N e^{\frac12 t^2 \sigma_{\max}^2},
    \end{align*}
    which implies
    \[
        \EE Z \leq \frac1t[\log(N)+\frac12 t^2 \sigma_{\max}^2].
    \]
    The right hand side minimizes at $t^2 = \frac{2\log(N)}{\sigma_{\max}^2}$, and hence we have
    \(
        \EE Z \leq \sigma_{\max}\sqrt{2\log(N)}.
    \)
\end{proof}
\begin{lemma}[Borel-TIS inequality, Theorem 2.1.1 of \cite{adler2009}]
\label{lem: boreltis}
    Let $Z_i$ be $N$ dependent $\cN(0,\sigma_i^2)$ random variables with $\sigma_i\leq \sigma_{\max}<\infty$, and $Z:= \max_{i\in[N]}Z_i$, then $\forall t>0$,
    \[
        \PP(Z-\EE Z>t)\leq e^{-\frac{t^2}{2\sigma_{\max}^2}}.
    \]
\end{lemma}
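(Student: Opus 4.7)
The plan is to realize the jointly Gaussian vector $(Z_1,\ldots,Z_N)$ as a linear image of a standard Gaussian vector, recognize that taking the maximum is a Lipschitz operation, and invoke Gaussian concentration for Lipschitz functions. Concretely, write the covariance matrix of $(Z_1,\ldots,Z_N)$ as $\bSigma = \bA\bA^\top$ (Cholesky or symmetric square root), and let $\bu \sim \cN(\bzero,\II_K)$ where $K$ is the number of columns of $\bA$. Then $(Z_1,\ldots,Z_N)$ has the same distribution as $\bA \bu$. Writing $\bm{a}_i^\top$ for the $i$-th row of $\bA$, we have $\|\bm{a}_i\|_2^2 = \sigma_i^2 \leq \sigma_{\max}^2$.

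Next define $f:\RR^K\to\RR$ by $f(\bm{x}) := \max_{i\in[N]} \bm{a}_i^\top \bm{x}$. Each linear functional $\bm{x}\mapsto \bm{a}_i^\top \bm{x}$ is Lipschitz with constant $\|\bm{a}_i\|_2\leq \sigma_{\max}$, and a pointwise maximum of $L$-Lipschitz functions is itself $L$-Lipschitz, so $f$ is $\sigma_{\max}$-Lipschitz. Moreover $Z = \max_{i\in[N]} Z_i$ is equal in distribution to $f(\bm{u})$, so it suffices to bound $\PP(f(\bm{u}) - \EE f(\bm{u}) > t)$.

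The concluding step applies the Gaussian concentration inequality for Lipschitz functions: if $g:\RR^K\to\RR$ is $L$-Lipschitz and $\bm{u}\sim \cN(\bzero,\II_K)$, then
\[
    \PP(g(\bm{u}) - \EE g(\bm{u}) > t) \leq \exp\!\left(-\frac{t^2}{2L^2}\right).
\]
Specializing to $g=f$ with $L=\sigma_{\max}$ yields the stated bound $\PP(Z - \EE Z > t) \leq \exp(-t^2/(2\sigma_{\max}^2))$.

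The main obstacle is of course the Gaussian concentration inequality itself, which is the deep input. The standard route is via Gross's Gaussian logarithmic Sobolev inequality combined with the Herbst argument: bounding the entropy of $e^{\lambda g(\bm{u})}$ by $\tfrac{\lambda^2}{2}\EE\|\nabla g(\bm{u})\|^2 e^{\lambda g(\bm{u})} \leq \tfrac{\lambda^2 L^2}{2}\EE e^{\lambda g(\bm{u})}$ leads to an ODE for $\psi(\lambda)=\log\EE e^{\lambda(g(\bm{u})-\EE g(\bm{u}))}$ whose solution is $\psi(\lambda)\leq \lambda^2 L^2/2$, after which optimizing Markov's inequality in $\lambda=t/L^2$ gives the sub-Gaussian tail. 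Alternatively, one can quote the Sudakov--Tsirelson--Borell Gaussian isoperimetric inequality and translate measures of half-space enlargements into the tail bound. Either way the argument is classical and is exactly the content of Theorem 2.1.1 of \cite{adler2009}, so the proof reduces to the Lipschitz-representation step above plus a citation.
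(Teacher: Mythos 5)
Your proof is correct and matches the standard argument: the paper itself gives no proof of this lemma (it is listed under ``Adopted Lemmata'' and simply cites Theorem~2.1.1 of \cite{adler2009}), and your reduction---realize $(Z_1,\ldots,Z_N)$ as $\bA\bu$ with $\bu\sim\cN(\bzero,\II_K)$, note that $\bx\mapsto\max_i \bm{a}_i^\top\bx$ is $\sigma_{\max}$-Lipschitz, and invoke Gaussian concentration for Lipschitz functions---is exactly how Borell--TIS is proved there and in the classical literature. One small point worth making explicit is that ``$N$ dependent $\cN(0,\sigma_i^2)$ random variables'' should be read as \emph{jointly} Gaussian (which your covariance factorization $\bSigma=\bA\bA^\top$ tacitly assumes); without joint Gaussianity the conclusion can fail.
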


\begin{lemma} [Lemma 6 of \cite{jalali2016}] \label{lem:chi:sq:ind}
    Let $\bx \sim \cN(0,\bI_p)$, then
    \[
    \PP (\bx^{\top } \bx\geq p + pt ) \leq {\rm e}^{-    \frac{p}{2} (t- \log (1+t)) }.
    \]
\end{lemma}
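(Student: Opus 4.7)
The statement is a classical chi-squared concentration inequality, and I would prove it by the standard Chernoff / moment generating function (MGF) argument. Since $\bx \sim \cN(\bzero, \bI_p)$ has independent $\cN(0,1)$ coordinates, $\bx^\top\bx = \sum_{i=1}^p x_i^2 \sim \chi^2_p$. Its MGF factorizes by independence as $\EE{\rm e}^{s\bx^\top\bx} = (1-2s)^{-p/2}$ for $s \in (0, 1/2)$, using the univariate identity $\EE{\rm e}^{s x_i^2} = (1-2s)^{-1/2}$ that follows from completing the square in the Gaussian density.

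Given this MGF, I would apply Markov's inequality to the random variable ${\rm e}^{s\bx^\top\bx}$: for any $s \in (0, 1/2)$,
\[
\PP(\bx^\top\bx \geq p(1+t)) \leq {\rm e}^{-sp(1+t)} (1-2s)^{-p/2}.
\]
Taking the logarithm of the right-hand side gives $-sp(1+t) - \tfrac{p}{2}\log(1-2s)$, and differentiating in $s$ and setting the derivative to zero yields the optimizer $s^* = \frac{t}{2(1+t)}$, which lies in $(0, 1/2)$ for every $t>0$. Substituting $s^*$ back in, the two pieces of the exponent simplify to $-s^* p(1+t) = -\tfrac{pt}{2}$ and $-\tfrac{p}{2}\log(1-2s^*) = \tfrac{p}{2}\log(1+t)$, so the total exponent collapses to $-\tfrac{p}{2}(t - \log(1+t))$, matching the claimed bound.

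There is no substantive obstacle: this is a textbook computation, with the only care being to confirm that $s^*$ lies in the interior of the admissible range $(0,1/2)$ and that the objective is strictly convex there, so $s^*$ is indeed the global minimizer. It is also worth noting that $t - \log(1+t) \geq 0$ for all $t \geq 0$ (with equality only at $t=0$), so the bound is non-trivial and decays exponentially for any positive deviation $t$.
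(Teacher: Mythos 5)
Your proof is correct and complete: the Chernoff bound with the chi-squared MGF $(1-2s)^{-p/2}$, optimized at $s^*=\tfrac{t}{2(1+t)}\in(0,1/2)$, yields exactly the stated exponent $-\tfrac{p}{2}(t-\log(1+t))$. Note that the paper does not prove this lemma itself but simply imports it as Lemma 6 of \cite{jalali2016}; your argument is the standard derivation one would find there, so there is nothing to reconcile between the two.
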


%In this paper, we evaluate the data removal approach in a variety of scenarios.   %Table provides an overview of the training and removal times observed in our experiments.

%\subsection{Impact of Single Data Point Removal}
%We analyze the effect of removing a single data point and compare the exact solution with the approximation proposed in this paper. Specifically, we focus on binary classification using logistic regression, evaluating performance through both simulations and real-world datasets.

%and obtain:
%\[
%    \tbm^{(1)}= \hb + \bGm^{-1}(\hb)\left( \sum_{i\in\cM}\ld_i(\hb)\bx_i \right),
%    \numberthis\label{def: Newton}
%\]
%where 
%\[
%\bGm(\hb) = \sum_{j \notin\mathcal{M}} \bm{x}_j \bm{x}_j^{\top} \ldd(y_j,  \bm{x}_j^{\top} \hb) + \lambda %\bm{\nabla^2}r (\hb). 
%\]

\end{document}